\newtheorem{defn}{Definition}
\newtheorem{theorem}{Theorem}
\newtheorem{theorem2}{Theorem}
\newtheorem{lemm}{Lemma}
\newtheorem{assumption}{Assumption}
\def\delequal{\mathrel{\ensurestackMath{\stackon[1pt]{=}{\scriptstyle\Delta}}}}
\DeclareMathOperator{\E}{\mathbb{E}}
\DeclareMathOperator*{\argmax}{arg\,max}
\title[Learning from Multiple Advisors in MARL]{Learning from Multiple Independent Advisors in Multi-agent Reinforcement Learning}
\author{Sriram Ganapathi Subramanian}
\affiliation{
  \institution{Vector Institute, Toronto, Canada}
  \city{University of Waterloo, Waterloo}
  \country{Canada}}
\email{sriram.subramanian@vectorinstitute.ai}
\author{Matthew E. Taylor}
\affiliation{
  \institution{University of Alberta, Edmonton, Canada}
  \city{Alberta Machine Intelligence Institute, Edmonton}
  \country{Canada}}
\email{matthew.e.taylor@ualberta.ca}
\author{Kate Larson}
\affiliation{
  \institution{University of Waterloo}
  \city{Waterloo}
  \country{Canada}}
\email{kate.larson@uwaterloo.ca}
\author{Mark Crowley}
\affiliation{
  \institution{University of Waterloo}
  \city{Waterloo}
  \country{Canada}}
\email{mcrowley@uwaterloo.ca}
\begin{abstract}
Multi-agent reinforcement learning typically suffers from the problem of sample inefficiency, where learning suitable policies involves the use of many data samples. Learning from external demonstrators is a possible solution that mitigates this problem. However, most prior approaches in this area assume the presence of a single demonstrator. Leveraging multiple knowledge sources (i.e., \emph{advisors}) with expertise in distinct aspects of the environment could substantially speed up learning in complex environments.
This paper considers the problem of simultaneously learning from multiple independent advisors in multi-agent reinforcement learning. The approach leverages a two-level $Q$-learning architecture, and extends this framework from single-agent to multi-agent settings. We provide principled algorithms that incorporate a set of advisors by both evaluating the advisors at each state and subsequently using the advisors to guide action selection. We also provide theoretical convergence and sample complexity guarantees. Experimentally, we validate our approach in three different test-beds and show that our algorithms give better performances than baselines, can effectively integrate the combined expertise of different advisors, and learn to ignore bad advice.
\end{abstract}
\keywords{Multi-agent systems; Multi-agent reinforcement learning; Learning from action advising;  Reinforcement learning; Sample efficiency}
\newcommand{\BibTeX}{\rm B\kern-.05em{\sc i\kern-.025em b}\kern-.08em\TeX}
\begin{document}


\pagestyle{fancy}
\fancyhead{}


\maketitle 


\section{Introduction}

Reinforcement learning (RL) has been successful in obtaining super-human performances in a wide range of challenges such as Atari games \citep{mnih2015human}, Go \citep{silver2016mastering}, and simple robotic tasks like opening doors and learning visuomotor policies \citep{levine2016end}. However, it has not been straightforward to replicate these successes in complex real-world problems. One reason is that these problems often have a multi-agent structure, where more than one learning agent participates at the same time, resulting in complicated dynamics. Despite research advances in multi-agent reinforcement learning (MARL) \citep{hernandez2019survey}, poor sample efficiency in existing algorithms is one issue that still causes significant hurdles in applying MARL to complex problems \citep{da2019survey}.

Using external sources of knowledge that help in accelerating MARL training is one solution \citep{barrett2017making}, which has extensive support in literature \cite{da2019survey}. However, most prior work include two limiting assumptions. First, all demonstrations need to come from a single demonstrator \citep{Brys2015reinforcement}. In complex MARL environments, since agents learn policies that meet the twin goals of responding to changing opponent(s) and environments \cite{littman1994markov}, a learner can likely benefit from multiple knowledge sources that have expertise in different parts of the environment or different aspects of the task. Second, all demonstrations are near-optimal (i.e., from an ``expert'')~\citep{piot2014boosted}. In practice, these knowledge sources are typically sub-optimal, and we broadly refer to them as \emph{advisors} (to differentiate from experts).

In this paper, we provide an approach that simultaneously leverages multiple different (sub-optimal) advisors for MARL training. Since the advisors may provide conflicting advice in different states, an algorithm needs to resolve such conflicts to take advantage of all the advisors effectively. We propose a two-level learning architecture and formulate a $Q$-learning algorithm for simultaneously incorporating multiple advisors in MARL, improving upon the previous work of Li et al.~\citep{li2019two} in single-agent RL. 
This architecture uses one level to evaluate advisors and the other learns values for actions.
Further, we extend our approach to an actor-critic variant that applies to the centralized training and decentralized execution (CTDE) setting \citep{lowe2017multi}.
Since RL is a fixed point iterative method \citep{szepesvari1999unified}, we provide convergence results, proving that our $Q$-learning algorithm converges to a Nash equilibrium \citep{nash1951non} (under common assumptions). Additionally, we provide a detailed finite-time analysis of our $Q$-learning algorithm under two different types of learning rates.
Finally, we experimentally study our approach in three different multi-agent test-beds, in relation to standard baselines.

Since we relax the two limiting assumptions regarding learning from demonstrators in MARL, our hope is that this approach will spur successes in real-world applications, such as autonomous driving \citep{helou2021reasonable} and fighting wildfires \citep{jain2020review}, where MARL methods could use existing (sub-optimal) solutions as advisors to accelerate training.


\section{Related Work}\label{sec:related}

This work is most related to the approach of \emph{reinforcement learning from expert demonstrations} (RLED) \citep{piot2014boosted}. A well-known RLED technique is \emph{deep $Q$-learning from demonstrations} (DQfD)~\citep{hester2018deep}, which combines a temporal difference (TD) loss, an L2 regularization loss, and a classification loss that encourages actions to be close to that of the demonstrator. 
Another method, \textit{normalized actor-critic} (NAC) \citep{jing2020reinforcement}, drops the classification loss and is more robust under imperfect demonstrations. However, NAC is prone to weaker performances than DQfD under good demonstrations due to the absence of classification loss. A different approach, \emph{human agent transfer} (HAT) \cite{taylor2011integrating}, extracts information from limited demonstrations using a classifier, while \emph{confidence-based human-agent transfer} (CHAT) \citep{wang2017improving} improves HAT by using a confidence measurement to safeguard against sub-optimal demonstrations. A related approach is the teacher-student framework \cite{Torrey2013teaching}, where a pretrained policy (teacher) can be used to provide limited advice to a learning agent (student). Subsequent works expand this framework towards interactive learning \citep{Amir2016Interactive}, however, almost all works in this area assume a moderate level expertise for the teacher. 
Moreover, these are all independent methods primarily suited for single-agent environments, and may not be directly applicable in MARL context.

Furthermore, external knowledge sources have also been used in MARL \citep{da2019survey}, where prior works often assume near optimal experts\citep{reddy2012, waugh2013computational} or are only applicable to restrictive settings, such as fully cooperative or zero-sum competitive games  \citep{omidshafiei2019learning,leno2017simultaneously, silver2016mastering, Wang2018efficient, dayong2020differential}. Leno et al.~\cite{leno2017simultaneously} introduced a framework where an agent can learn from its peers in a shared learning environment, in addition to learning from the environmental rewards. Here the peers can be sub-optimal, however this work only applies to cooperative environments. 
Other works have provided a cooperative teaching framework for hierarchical learning~\cite{Kim2020Learning, yang2021efficient}. For multi-agent general-sum environments, \emph{advising multiple intelligent reinforcement agents - decision making} (ADMIRAL-DM) \cite{Subramanian2022multiagent} is a $Q$-learning approach that incorporates real-time information from a single online sub-optimal advisor. 

One limitation of many prior works is the assumption of a single source of demonstration.
In MARL, it may be possible to obtain advisors from different sources of knowledge that provide conflicting advice. For single-agent settings, Li et al.~\cite{li2019two} provides the two-level $Q$-learning (TLQL) algorithm that incorporates multiple advisors in RL. The TLQL maintains two $Q$-networks, where the first $Q$-network (high-level) keeps track of each advisor's performance and the second $Q$-network (low-level) learns the quality of each action. 
We improve upon TLQL and make it applicable to MARL settings.

\section{Background}\label{sec:background}

\textbf{Stochastic Games:} A $N$-player stochastic game is represented by a tuple $\langle S, A^1, \ldots, A^N, r^1, \ldots, r^N, P, \gamma \rangle$, where $S$ is the state space, $A^j$ is the action space of the agent $j \in \{1, \ldots, N\}$, and $r^j: S \times A^1 \times \cdots \times A^N \xrightarrow{} \mathcal{R}$ is the reward function of $j$. Also, $P: S \times A^1 \times \cdots A^N \xrightarrow{} \Omega(S)$ is the transition probability that determines the next state given the current state and the joint action of all agents, where $\Omega$ is a probability distribution. 
Finally, $\gamma \in [0,1)$ is the discount factor. At each time $t$, all agents observe the global state $s$ and take a local action $a^j$ \citep{shapley1953stochastic}. The joint action $\boldsymbol{a} = \{a^1, \ldots, a^N \}$ determines the immediate reward $r^j$ for $j$ and the next state of the system $s'$. Each agent learns a suitable policy that gives the best responses to its opponent(s). The policy is denoted by $\pi^j: S \xrightarrow{} \Omega(A^j)$.
Let $ \boldsymbol{\pi} \triangleq ( \pi^1, \ldots, \pi^N)$ be the joint policy of all agents. At a state $s$, the value function of $j$ under the joint policy $\boldsymbol{\pi}$ is $v^j_{\boldsymbol{\pi}} (s) = \sum_{t=0}^{\infty} \gamma^t \E_{\pi, P} [r^j_t| s_0=s, \boldsymbol{\pi}] $. This represents the expected discounted future reward of $j$, when all agents follow the policy $\boldsymbol{\pi}$ from the state $s$. Related to the value function, is the action-value function or the $Q$-function. The $Q$-function of agent $j$, under the policy $\boldsymbol{\pi}$, is given by, $Q^j_{\boldsymbol{\pi}} (s, \boldsymbol{a}) = r^j(s, \boldsymbol{a}) + \gamma \E_{s' \sim P} [v^j_{\boldsymbol{\pi}}(s')] $. 

The setting we consider is general-sum stochastic games, where the reward functions of the different agents can be related in any arbitrary fashion. In this setting, the \emph{Nash equilibrium} is typically considered as the solution concept \cite{hu2003nash}, where the joint policy $\boldsymbol{\pi}_* = [\pi^1_*, \ldots, \pi^N_*]$ for all $s \in S$ and all $j$ satisfies $v^j(s; \pi^j_*, \boldsymbol{\pi}^{-j}_{*} ) \geq v^j(s; \pi^j, \boldsymbol{\pi}^{-j}_{*}) $. Here, $\boldsymbol{\pi}^{-j}_{*} \triangleq [\pi^1_{*}, \ldots, \pi^{j-1}_*, \pi^{j+1}_*, \ldots, \pi^N_{*} ]$ represents the joint policy of all agents except $j$. In a Nash equilibrium, each agent plays the best response to the other agents and any deviation from this response is guaranteed to be worse off. 
Further, Hu and Wellman~\cite{hu2003nash} proved that 
the $Q$-updates of an agent $j$, using the Nash payoff at each stage eventually converges to its Nash $Q$ value ($Q^j_*$), which is the action-value obtained by the agent $j$ when all agents follow the joint Nash equilibrium policy for infinite periods.

\textbf{Two-level $Q$-learning}: The TLQL algorithm \cite{li2019two} enables single-agent learning under the simultaneous presence of multiple advisors providing conflicting demonstrations.
Here, the challenge is to determine which advisor to trust in a given state.  
In this regard, the TLQL contains two $Q$-tables, a high-level $Q$-table (abbreviated as high-$Q$) and a low-level $Q$-table (abbreviated as low-$Q$). The high-$Q$ stores the value of the $\langle s, ad \rangle$ pair, where $ad \in AD$ represents an advisor (with $AD$ representing the set of all advisors). 
The high-$Q$ also stores the value of following the RL policy in addition to each advisor. The low-$Q$ maintains the value of each state-action pair. 


At each time step, the agent observes the state and selects an advisor (or the RL policy) from the high-$Q$ using the $\epsilon$-greedy strategy. If the high-$Q$ returns an advisor, then the advisor's recommended action is performed. If the RL policy is returned, then an action is executed from the low-$Q$ based on the $\epsilon$-greedy strategy \cite{sutton1998introduction}. 
The low-$Q$ is updated using the vanilla $Q$-learning Bellman update~\citep{watkins1992q}. Subsequently, the high-$Q$ is updated using a \textbf{synchronization} step. In this step, when an advisor's action is performed, the value of the advisor in the high-$Q$ is simply assigned the value of that action from the low-$Q$. Finally, the high-$Q$ of the RL policy is updated using the relation $highQ(s, RL) = \max_a lowQ(s,a)$. This synchronization update of high-$Q$ preserves the convergence guarantees, due to the policy improvement guarantee in single-agent $Q$-learning \citep{sutton1998introduction}. 

There are two important limitations of TLQL. First, the high-$Q$ that represents the value of the advisors also depends on the RL policy through the synchronization step. This $Q$ value represents the value of taking the action suggested by the advisor at the current state and then following the RL policy from the next state onward. This definition is problematic since at the beginning of training, the RL policy is sub-optimal, and the objective is to accelerate learning by relying on external advisors and avoid using the RL policy at all. As advisors are evaluated at each state using the RL policy, it is likely that the most effective advisor among the set of advisors is not being followed until the RL policy improves. At this stage, it might be possible to simply follow the RL policy itself, defeating the purpose of learning from advisors. Second, the advisors have not been evaluated at the beginning of learning. Hence, it is impossible to find the most suitable advisor to follow, from the available advisors. While TLQL simply follows an $\epsilon$-greedy exploration strategy, this approach could take many data samples to figure out the right advisor. We address both these limitations.

\section{Two-level Architecture in MARL}\label{sec:extending}

We consider a general-sum stochastic game, where there are a set of agents that are learning a policy with an objective of providing a best response to the other agents as well as the environment. Each agent $j$ can access a finite set of (possibly sub-optimal) independent advisors $AD^j$. We use $ad^j$ to represent an advisor of $j$, where $ad^j \in \{ ad^j_1, \ldots, ad_{|AD^j|}^j\}$. 
Each advisor $ad^j$ can be queried by $j$ to obtain an action recommendation at each state of the stochastic game. These online advisors provide real-time action advice to the agent, which helps in learning to dynamically adapt to opponents. 
We consider a centralized training setting and assume 1) the advisors are fixed and do not learn, 2) the communication between agents and their advisors is free, 3) there is no communication directly between learning agents, 4) the environment is fully observable (i.e., an agent can observe the global state, all actions, and all rewards), and 5) the state and action spaces are discrete. Though we require these assumptions for theoretical guarantees, we will show that it is possible to relax a number of these assumptions in practice. 

To make TLQL applicable to multi-agent settings, we parameterize both the $Q$-functions with the joint actions, as is common in practice \citep{littman1994markov}. Also, we do not maintain the RL policy in the high-$Q$ table and do not perform a synchronization step. These steps are no longer needed to preserve the convergence results in multi-agent settings, since we do not have a policy improvement guarantee (unlike in single-agent settings) \cite{tan1993multi}. Instead, we choose to use the probabilistic policy reuse (PPR) technique~\cite{Fern2006probabilistic}, where a hyperparameter ($\epsilon' \in [0,1]$) decides the probability of following any advisor(s) (i.e., using the high-$Q$) or the agents' own policy (i.e., using the low-$Q$) for action selection, at each time step during training. This hyperparameter starts with a high value (maximum dependence on the available advisor(s)) at the beginning of training and is decayed (linearly) over time. After some finite time step during training, the value of this hyperparameter goes to 0 (no further dependence on any advisor(s)) and the agent only uses its low-$Q$ (own policy) for action selection. This helps in two ways: 1) in the time limit ($t \xrightarrow{} \infty$), a learning agent has the possibility of recovering from poor advising (by learning from the environment), and 2) eventually the trained agent can be independently deployed (with no requirement of having access to any advisor(s)). 

The general structure of our proposed \emph{Multi-Agent Two-Level $Q$-Learning} (MA-TLQL) algorithm is given in Figure~\ref{fig:matlqlstructure}. Since we are in a fully observable setting, like \cite{hu2003nash}, we specify that each agent maintains copies of the $Q$-tables of other agents from which it can obtain the joint actions of other agents for the current state. If such copies cannot be maintained, agents could use the previously observed actions of other agents for the joint action as done in prior works \citep{Subramanian2022multiagent, pmlr-v80-yang18d}. We use the two-level architecture, where each agent will maintain a high-$Q$ as well as a low-$Q$. The high-$Q$ provides a value for the $\langle s, \boldsymbol{a}^{-j}, ad^j \rangle$ tuples, where $\boldsymbol{a}^{-j} = \{a^1, \ldots,a^{j-1}, a^{j+1}, \ldots, a^N\}$ is the joint action of all agents except the agent $j$. 
This high-$Q$ is a value estimate for the advisor $ad^j$ as estimated by the agent $j$ at the state $s$ and joint action $\boldsymbol{a}^{-j}$.
The high-$Q$ estimates are updated with an evaluation update given by
\begin{equation}\label{eq:evaluation}
\begin{array}{lll}
    highQ^j_{t+1}(s, \boldsymbol{a}^{-j}, ad^j)   = highQ^j_t(s, \boldsymbol{a}^{-j}, ad^j) 
     \\ \quad + \alpha \Big( (r^j_t + \gamma highQ^j_t(s', \boldsymbol{a'}^{-j}, ad^j) - highQ^j_t(s, \boldsymbol{a}^{-j}, ad^j) \Big),
\end{array}
\end{equation}
\noindent where $s$ and $s'$ are the states at $t$ and $t+1$, and $\alpha$ is the learning rate. Also, $\boldsymbol{a}^{-j}$ and $\boldsymbol{a'}^{-j}$ are joint actions at $s$ and $s'$, respectively.

\begin{figure}
    \centering
    \includegraphics[width=0.45\textwidth]{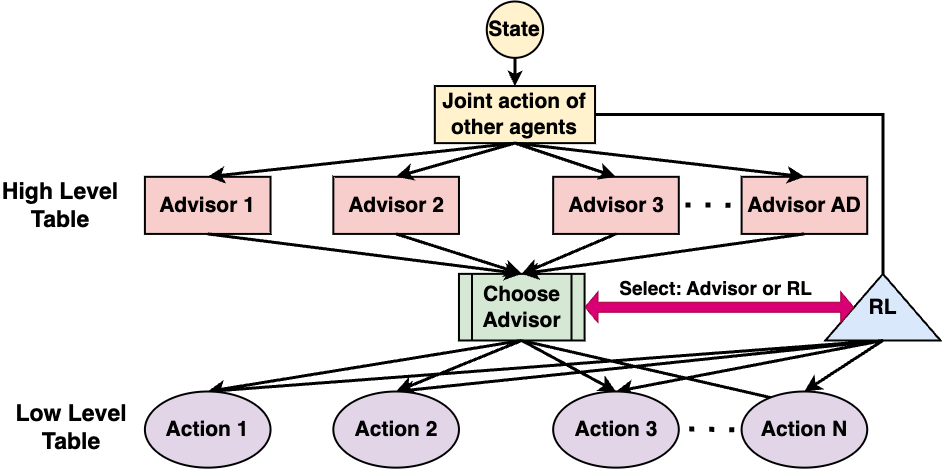}
    \caption{Structure of MA-TLQL, for a representative agent having access to a set of AD advisor(s)}
    \label{fig:matlqlstructure}
\end{figure}

As described previously, a hyperparameter is used to decide between choosing to follow an advisor or the RL policy. If the agent follows an advisor, the high-$Q$ is used to select an advisor using an ensemble selection technique. Let us denote, $\mathcal{Q}^j = \{highQ^{j}(s, \boldsymbol{a}^{-j}, ad_1^j), \ldots, highQ^{j}(s, \boldsymbol{a}^{-j}, ad_m^j)\}$, to represent the high-$Q$ estimates of a set of $M$ advisors (with $|M| = m$) advising the same action $a^j$ to an agent $j$. Here, $ad^j_i$ represents an advisor $i \in \{1, \ldots, m\}$ of $j$. Then the value of vote for action $a^j$, at the state $s$ and the joint action $\boldsymbol{a}^{-j}$, denoted by $\mathcal{V}^j(s, \boldsymbol{a}^{-j}, a^j)$, is calculated as
\begin{equation}\label{eq:valueofvote}
    \begin{array}{l}
         \mathcal{V}^j(s, \boldsymbol{a}^{-j}, a^j) = \max \mathcal{Q}^j \\ + \sum_{i = 1, i \neq \argmax_i highQ^j(s, \boldsymbol{a}^{-j}, ad^j_i)}^{m} \frac{1}{\mu(s)} highQ^j(s, \boldsymbol{a}^{-j}, ad^j_i).
    \end{array}
\end{equation}
\noindent Here, $\mu(s)$ is the number of times the agent has visited the state $s$. In Eq.~\ref{eq:valueofvote}, if an action is recommended by more than one advisor, the value of its vote is a weighted sum of all high-$Q$ estimates of advisors recommending that action. Each high-$Q$ estimate (except the best high-$Q$ estimate) is weighted by the reciprocal of the number of times the respective state is visited. In this way, when a state is visited many times, the advisor with the best high-$Q$ estimate is likely to be followed (wisdom of individual). When a state is visited only a few times, then the action suggested by a majority of advisors is likely to be selected (wisdom of crowd). From Eq.~\ref{eq:valueofvote}, if an action $a^j$ is recommended by only one advisor, then the value of vote for $a^j$ will be equal to the high-$Q$ estimate of that advisor. 
After the value of votes for all actions are calculated, the action with the maximum value of vote is executed, and the high-$Q$ estimate of the advisor recommending this action is updated by the agent $j$ using Eq.~\ref{eq:evaluation}. 

If the agent decides to use its RL policy, it uses its low-$Q$, which contains a value for the $\langle s, \boldsymbol{a}^{-j}, a^j \rangle$ tuples (value for each action). 
At each step, the low-$Q$ is updated using a control update as follows:

\begin{equation}\label{eq:control}
\begin{array}{ll}
    lowQ^j_{t+1}(s, \boldsymbol{a}^{-j}, a^j ) =  lowQ^j_t(s, \boldsymbol{a}^{-j}, a^j) 
    \\  + \alpha \big(r^j_t +  \gamma \max_{a'^j} lowQ^j_t(s', \boldsymbol{a}'^{-j}, a'^{j}) - lowQ^j_t(s, \boldsymbol{a}^{-j}, a^j ) \big).
    \end{array}
\end{equation}


Now we describe how MA-TLQL addresses the two limitations of TLQL. The first was the dependence of high-$Q$ on the RL policy in TLQL. Note, the high-$Q$ in MA-TLQL maintains the $Q$ values of the advisor themselves, i.e., the value of following the advisor's policy from the current state onward (see Eq.~\ref{eq:evaluation}). Thus, the coupling between the advisor values and the RL policy is removed (no synchronization). 
 The second was the difficulty in picking the right advisor in TLQL. MA-TLQL uses an ensemble technique to choose the advisor during the early stages of learning. In later stages, it switches to following the best advisor according to the high-$Q$ estimates, which addresses this limitation of TLQL. In Appendix~\ref{sec:illustrativeexample}, we present a toy example that illustrates the limitations of TLQL. 


We provide the complete pseudocode for a tabular implementation of the MA-TLQL algorithm in Appendix~\ref{sec:psuedocode} (Algorithm~\ref{alg:twolevelqlearning}). Further, we extend this approach to large state-action environments using a neural network based implementation (Algorithm~\ref{alg:twolevelqlearningNN}), which uses a target network and a replay buffer, as in the Deep $Q$-learning (DQN) algorithm \citep{mnih2015human}. We also provide an actor-critic implementation (Algorithm~\ref{alg:twolevelactorcritic}) which is suitable for CTDE \citep{lowe2017multi}. We will refer to this algorithm as \emph{multi-agent two-level actor-critic} (MA-TLAC). In MA-TLAC, each agent has two actors and two critics (high-level and low-level), where the respective $Q$-functions serve as the critic and the corresponding policies serve as the actors. 
In this CTDE method, agents can obtain global information (including actions and rewards of other agents) during training, however, the agents only require access to its local observation during execution. This makes our method applicable to partially observable environments as in Lowe et al.~\cite{lowe2017multi}. MA-TLAC applies to continuous state space environments as well (refer to Appendix~\ref{sec:psuedocode} for more details).



\section{Theoretical Results}\label{sec:theoriticalresults}

We present a convergence guarantee for tabular MA-TLQL and characterize the convergence rate. For these results, we build on some prior works that provide several fundamental results on the nature of stochastic iterative functions \cite{berksekas1996neuro, Dar2003learning}. We apply these to MA-TLQL in general-sum stochastic games using three assumptions from Hu and Wellman~\cite{hu2003nash}, where 
the first two are standard \citep{szepesvari1999unified}. 

\begin{assumption}\label{assumption:visitassumption}
    Every $s\in \mathcal{S}$ and $a^j \in A^j$, for every agent $j$
    are visited infinitely often, and the reward function ($\forall j$) stays bounded. 
\end{assumption}

\begin{assumption}\label{assumption:learningrate}
For all $s, t$, and $\boldsymbol{a}$, $0 \leq \alpha_t(s,\boldsymbol{a}) < 1$, $\sum_{t=0}^\infty \alpha_t(s, \boldsymbol{a}) = \infty$, $\sum_{t=0}^\infty[\alpha_t(s, \boldsymbol{a})]^2 < \infty$.
\end{assumption}

\begin{assumption}\label{assumption:globaloptimum}
The Nash equilibrium is a global optimum or saddle point in every stage game of the stochastic game. 
\end{assumption}

The third assumption is a restriction on the nature of the stochastic game. Several prior works note that this assumption is restrictive but needed to theoretically prove the convergence of $Q$-learning methods in general-sum stochastic games with two or more agents. In practice, however, it is still possible to observe convergence of $Q$-learning methods when this assumption is violated \citep{hu2003nash, Subramanian2022multiagent, pmlr-v80-yang18d}.

Now we prove our theoretical results. All theorem statements are provided here, while the proofs can be found in Appendices~\ref{sec:convergence} -- \ref{sec:linaerlearningrate}. First, we provide the convergence guarantee for the low-$Q$. Recall, the PPR technique guarantees that the MA-TLQL dependence on high-$Q$ is only until a finite time step during training. After this step, the agent only uses its low-$Q$ for action selection. As the convergence result in Theorem~\ref{theorem:lowqconvergence} is provided in the time limit ($t \xrightarrow{} \infty$), the influence of high-$Q$ can be neglected for this result. 

\begin{theorem}\label{theorem:lowqconvergence}
Given Assumptions~\ref{assumption:visitassumption}, \ref{assumption:learningrate}, \ref{assumption:globaloptimum}, the low-$Q$ values of an agent $j$ converges to its Nash $Q$ value in the limit ($t \xrightarrow{} \infty$). 
\end{theorem}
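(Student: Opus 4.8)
The plan is to reduce the convergence of the low-$Q$ update in Eq.~\ref{eq:control} to the abstract stochastic-approximation framework for Nash $Q$-learning of Hu and Wellman~\cite{hu2003nash}, which itself rests on the general contraction results for stochastic iterative processes in \cite{berksekas1996neuro, Dar2003learning}. First I would observe that, because the PPR hyperparameter $\epsilon'$ decays to $0$ after a finite time, for all sufficiently large $t$ the action selection is governed entirely by the low-$Q$ table; hence the high-$Q$ machinery is irrelevant to a limiting ($t\to\infty$) claim, and the low-$Q$ iteration is exactly a Nash $Q$-learning update parameterized by the joint action $\boldsymbol{a}=(a^j,\boldsymbol{a}^{-j})$. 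I would then cast Eq.~\ref{eq:control} in the canonical form $lowQ^j_{t+1}=(1-\alpha_t)\,lowQ^j_t+\alpha_t\big[\mathbf{H}^j lowQ_t + w_t\big]$, where $\mathbf{H}^j$ is the Nash-backup operator $(\mathbf{H}^j Q)(s,\boldsymbol{a}) = \E_{s'}\big[r^j(s,\boldsymbol{a}) + \gamma\, \mathrm{Nash}^j\big(Q(s',\cdot)\big)\big]$ and $w_t$ is a zero-mean noise term with conditionally bounded second moment.

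The key steps, in order, would be: (1) verify that the noise term $w_t$ has zero conditional mean and variance bounded affinely in $\|lowQ_t\|$, which follows from Assumption~\ref{assumption:visitassumption} (bounded rewards, infinite visitation) in the standard way; (2) establish that the Nash-backup operator $\mathbf{H}^j$ is a $\gamma$-contraction in the sup norm toward the Nash $Q$-value $Q^j_*$ — this is where Assumption~\ref{assumption:globaloptimum} is essential, since it guarantees the stage-game Nash value $\mathrm{Nash}^j$ is well-defined and non-expansive (global optimum / saddle point gives the Lipschitz-in-$Q$ property that fails for general equilibria); (3) invoke Assumption~\ref{assumption:learningrate} (Robbins–Monro conditions on $\alpha_t$) together with (1) and (2) to apply the stochastic-approximation convergence theorem of \cite{Dar2003learning} (equivalently Lemma-style results cited in \cite{hu2003nash}), concluding $lowQ^j_t \to Q^j_*$ almost surely; (4) note that $Q^j_*$ is by definition the action value under the joint Nash equilibrium policy, so the stated conclusion follows. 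I would also spell out that the difference from single-agent $Q$-learning — the absence of a policy-improvement guarantee — is precisely why Assumption~\ref{assumption:globaloptimum} is imported rather than derived.

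The main obstacle I expect is step (2): proving that the multi-agent backup operator built from stage-game Nash values is a sup-norm contraction. In the single-agent case the $\max$ operator is trivially non-expansive, but $\mathrm{Nash}^j$ need not be Lipschitz in $Q$ without extra structure; Assumption~\ref{assumption:globaloptimum} is exactly the hypothesis that repairs this, and the argument has to carefully cite the Hu–Wellman lemmas that the Nash operator under the global-optimum/saddle assumption satisfies $\|\mathbf{H}^j Q - \mathbf{H}^j \tilde Q\|_\infty \le \gamma \|Q - \tilde Q\|_\infty$, where the norm is taken over the \emph{joint} $Q$-tables of all agents (so the fixed-point argument is genuinely a joint one, and one must also confirm that all agents running MA-TLQL induces the coupled iteration to which the theorem applies). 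Everything else — the noise bookkeeping in step (1) and the invocation of the abstract theorem in step (3) — is routine once this contraction property is in hand.
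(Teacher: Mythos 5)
Your overall strategy---casting the low-$Q$ iteration as a stochastic iterative process, invoking the Hu--Wellman contraction of the Nash operator under Assumption~\ref{assumption:globaloptimum}, and applying a Robbins--Monro-type convergence theorem---is the same route the paper takes. However, there is one concrete gap at the start of your plan. You assert that Eq.~\ref{eq:control} ``is exactly a Nash $Q$-learning update,'' i.e., that it can be written as $(1-\alpha_t)\,lowQ^j_t+\alpha_t[\mathbf{H}^j lowQ_t + w_t]$ with $\mathbf{H}^j$ the Nash-backup operator and $w_t$ zero-mean noise. It cannot: the backup target in Eq.~\ref{eq:control} is $r^j_t + \gamma\max_{a'^j} lowQ^j_t(s',\boldsymbol{a}'^{-j},a'^j)$, a maximization over the agent's \emph{own} action given the observed joint action of the others, whereas the Nash-backup target is $r^j + \gamma\,\pi^1_*(s')\cdots\pi^N_*(s')\,Q^j(s')$. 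The discrepancy $\gamma\big[\max_{a^j}Q^j_t(s_{t+1},\boldsymbol{a}_{t+1}) - v^{Nash,j}(s_{t+1})\big]$ is a systematic bias, not zero-mean noise, so your step (1) (zero conditional mean of $w_t$) fails as stated, and the plain stochastic-approximation theorem you plan to invoke in step (3) does not directly apply.

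The paper closes exactly this gap by splitting the increment as $F_t = F^{Q,j}_t + C^j_t$, where $F^{Q,j}_t$ is the ideal Nash-$Q$ increment whose conditional expectation contracts by the Hu--Wellman lemma, and $C^j_t$ is precisely the bias above. It then uses the version of the convergence lemma (Jaakkola et al.) whose contraction condition reads $\|\E[F_t\mid\mathscr{P}_t]\|\le\gamma\|\Delta_t\|+c_t$ with $c_t\to 0$, and argues (via Assumption~\ref{assumption:globaloptimum} and the argument in Subramanian et al.) that $c_t=\|\E[C^j_t\mid\mathscr{P}_t]\|$ vanishes in the limit. To repair your proposal you would need either this decomposition together with a proof that the bias term vanishes, or a direct argument that under Assumption~\ref{assumption:globaloptimum} the max-over-own-action backup coincides asymptotically with the stage-game Nash value; neither is automatic. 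The remainder of your outline---the noise variance bound, the contraction of the Nash operator via Hu--Wellman's Theorem~17, measurability with respect to $\mathscr{P}_t$, and the Robbins--Monro conditions---matches the paper's argument.
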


Next, we provide sample complexity bounds for the MA-TLQL algorithm. Instead of explicitly considering the high-$Q$ values, we specify that the underlying joint policy has a covering time of $L$. The covering time specifies an upper bound on the number of time steps needed for all state-joint action pairs to be visited at least once starting from any state-joint action pair. 
Further, since the action selection is only based on the low-$Q$ values in the limit ($t \xrightarrow{} \infty$), we are most interested in the sample complexity of low-$Q$, where the dependence on the high-$Q$ is effectively represented by $L$.

Regarding sample complexity, as is done in \cite{Dar2003learning}, we distinguish between two kinds of learning rates. Consider the following equation for the low-$Q$ (rewriting Eq.~\ref{eq:control} and dropping $low$ for simplicity),

\begin{equation}\label{eq:lowlevelq}
\begin{array}{l}
    Q^j_{t+1}(s_t, \boldsymbol{a}_t) 
    = \big(1-\alpha^\omega_t(s_t, \boldsymbol{a}_t))(Q^j_t(s_t, \boldsymbol{a}_t) \big)  
    \\ \quad \quad \quad \quad \quad \quad + \alpha^\omega_t(s_t, \boldsymbol{a}_t) \big(r^j_t + \gamma \max_{a^j}Q^j_t (s_{t+1}, \boldsymbol{a}_{t+1}) \big).
\end{array}
\end{equation}

\noindent The value of $\alpha^\omega_t (s, \boldsymbol{a})= \frac{1}{[\#(s, \boldsymbol{a}, t)^\omega]}$, where $\#(s, \boldsymbol{a}, t)$ is the number of times until $t$ that the joint action $\boldsymbol{a}$ is performed at $s$. 
Here, we consider $ \omega \in (1/2, 1]$. The learning rate is linear if $\omega = 1$, and the learning rate is polynomial if $\omega \in (1/2, 1)$.

The next theorem provides a lower bound on the number of time steps needed for convergence in the case of a polynomial learning rate. From Assumption~\ref{assumption:visitassumption}, let us specify that all rewards for the agent $j$ are bounded by $R^j_{\max}$. We consider a variable $Q^j_{\max}$, which denotes the maximum possible low-$Q$ value for the agent $j$, which is bounded by $Q^j_{\max} = R^j_{\max}/(1-\gamma)$. Additionally, we also use another variable $\beta = (1 - \gamma)/2$ to present our upcoming results concisely.

\begin{theorem}\label{theorem:lowqsamplecomplexity}
Let us specify that with probability at least $1 - \delta$, for an agent $j$, $||Q^j_T - Q^j_* ||_\infty \leq \epsilon$. The bound on the rate of convergence of low-$Q$, $Q^j_T$, with a polynomial learning rate of factor $\omega$ is given by (with $Q^j_*$ as the Nash $Q$-value of the agent $j$)

\begin{equation}
\begin{array}{l}
      T = \Omega \Big( \Big(\frac{L^{1 + 3\omega}Q^{2,j}_{\max}\ln(\frac{|S|\Pi_i |A_i| Q^j_{\max}}{\delta \beta \epsilon})}{\beta^2 \epsilon^2} \Big)^{1 - \omega}/L \\ \quad \quad \quad \quad \quad \quad \quad \quad \quad \quad \quad + \Big( (\frac{L}{\beta} \ln \frac{Q^j_{\max}}{\epsilon}  + 1)/2 \Big)^{\frac{1}{1-\omega}}\Big). 
    \end{array}
\end{equation}

\end{theorem}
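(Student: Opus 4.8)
The plan is to adapt the stochastic-approximation analysis of Even-Dar and Mansour~\cite{Dar2003learning} for $Q$-learning to the multi-agent low-$Q$ recursion, treating the covering time $L$ of the behaviour policy as the single quantity that summarises exploration, so that the effect of the high-$Q$ and of the PPR schedule is absorbed into $L$ (which is finite once PPR has decayed, since every state--joint-action pair is still updated at least once every $L$ steps by Assumption~\ref{assumption:visitassumption}). First I would rewrite Eq.~\ref{eq:lowlevelq} as a noisy fixed-point iteration $Q^j_{t+1} = (1-\alpha^\omega_t)Q^j_t + \alpha^\omega_t(\mathcal{H}^j Q^j_t + w_t)$, where $\mathcal{H}^j$ is the Nash backup operator whose unique fixed point is the Nash $Q$-value $Q^j_*$ (the same operator underlying Theorem~\ref{theorem:lowqconvergence}) and $w_t = r^j_t + \gamma\max_{a'^j}Q^j_t(s_{t+1},\boldsymbol{a}'^{-j},a'^j) - (\mathcal{H}^j Q^j_t)(s_t,\boldsymbol{a}_t)$ is a martingale difference with respect to the natural filtration. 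A short induction (with bounded initialisation) shows the iterates stay in $[-Q^j_{\max},Q^j_{\max}]$, hence $|w_t|\le 2Q^j_{\max}$, and Assumption~\ref{assumption:globaloptimum} together with the Hu--Wellman structure gives that $\mathcal{H}^j$ is a $\gamma$-contraction in $\|\cdot\|_\infty$ --- the property I will use repeatedly.

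Next I would perform the usual decomposition $\Delta^j_t \triangleq Q^j_t - Q^j_*$ into a deterministic component that contracts under $\mathcal{H}^j$ and a stochastic component $W_t$ obeying $W_{t+1} = (1-\alpha^\omega_t)W_t + \alpha^\omega_t w_t$, and then run the iterated-contraction (``peeling'') argument. Define radii $D_0 = Q^j_{\max}$ and $D_{k+1} = (1-\beta)D_k$ with $\beta=(1-\gamma)/2$, so that $D_{k^\ast}\le\epsilon$ after $k^\ast = \mathcal{O}\!\big(\tfrac{1}{\beta}\ln\tfrac{Q^j_{\max}}{\epsilon}\big)$ levels; show inductively that if $\|\Delta^j_t\|_\infty\le D_k$ over a time window and the noise term stays below a constant fraction ($\sim\beta D_k$) of $D_k$ on that window, then $\|\Delta^j_t\|_\infty\le D_{k+1}$ by the end, since the contracted bias contributes at most $\gamma D_k$ and the noise at most $\beta D_k$, summing to $(1-\beta)D_k$. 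Counting the required window length per level yields two separate demands on $T$. The deterministic one: because each pair is updated at least every $L$ steps and $\alpha^\omega_t = 1/\#(s,\boldsymbol{a},t)^\omega$, the accumulated learning weight over $n$ visits grows like $n^{1-\omega}$, so forgetting the old value one level down needs $\sim(L/\beta)^{1/(1-\omega)}$ steps per level, giving the term $\big((\tfrac{L}{\beta}\ln\tfrac{Q^j_{\max}}{\epsilon}+1)/2\big)^{1/(1-\omega)}$ after summing over levels.

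The second demand comes from forcing $W_t$ below its per-level tolerance: $W_T$ is a sum of bounded martingale differences weighted by products of the $\alpha^\omega_t$, so I would apply Azuma--Hoeffding, bound the sum of squared effective weights using $\alpha^\omega_t = 1/\#(\cdot)^\omega$ and the covering-time bound (this is where the extra powers of $L$ and the $Q^{2,j}_{\max}/(\beta^2\epsilon^2)$ factor enter), and take a union bound over all $|S|\prod_i|A_i|$ state--joint-action pairs and all $k^\ast$ levels, producing the logarithmic factor $\ln\!\big(|S|\prod_i|A_i|Q^j_{\max}/(\delta\beta\epsilon)\big)$. Rearranging for $\omega\in(1/2,1]$ and converting a per-pair visit count into a number of environment steps (the $/L$ scaling in the statement) gives the first term. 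Taking $T$ equal to a constant times the sum of the two terms then makes $\|Q^j_T - Q^j_*\|_\infty\le\epsilon$ hold with probability at least $1-\delta$, which is the claim.

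\noindent\textbf{Main obstacle.}
I expect the crux to be two-fold. The conceptual one is justifying that the backup here is a genuine $\gamma$-contraction in sup-norm in the general-sum multi-agent setting: unlike the single-agent $\max$, this rests essentially on Assumption~\ref{assumption:globaloptimum} and the Nash-payoff machinery of Hu and Wellman~\cite{hu2003nash}, and one must check that $w_t$ remains a bounded martingale difference even though it involves the opponents' actions $\boldsymbol{a}^{-j}$. The technical one is book-keeping with the covering time $L$: verifying that after PPR decays the behaviour policy still has finite covering time, and then tracking $L$ consistently through both the deterministic per-level count and the Azuma variance bound so that the exponents ($L^{1+3\omega}$, the $(\cdot)^{1-\omega}/L$ scaling, and the $1/(1-\omega)$ power) come out exactly as stated --- getting the interplay between $\omega$, the martingale weights, and $L$ right is the part most likely to require care.
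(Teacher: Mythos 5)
Your plan follows essentially the same route as the paper's proof: both adapt the Even-Dar--Mansour covering-time analysis to the multi-agent low-$Q$ recursion, using the geometric radii $D_{k+1}=(1-\beta)D_k$, a decomposition into a deterministic contracting part and a martingale noise part (the paper's $Y^j_{t;\tau_k}$ and $W^j_{t;\tau_k}$ sequences), Azuma's inequality with a union bound over state--joint-action pairs and levels, and the recurrence $\tau_{k+1}=\tau_k+L\tau_k^\omega$ to assemble the two terms of the bound. The only cosmetic difference is how the $\beta D_k$ budget is split between the two parts (the paper uses $\gamma+\tfrac{2}{e}\beta$ versus $(1-\tfrac{2}{e})\beta$), which does not change the argument.
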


Assuming the same action spaces for all agents (i.e. $|A_1| = |A_2| = \cdots = |A_N| = |A|$), we note that the dependence on the number of agents is $\ln |A|^N = N \ln|A|$. Overall this results in a sub-linear dependence on the number of agents based on the value of $\omega$, which is far superior to recent works that report an exponential dependence on the number of agents when learning in general-sum stochastic game environments (with an arbitrary number of agents) for convergence to a Nash equilibrium \citep{Qinghua2021sharp,song2021can}.
Further, the dependence on the state space and action space in Theorem~\ref{theorem:lowqsamplecomplexity} is sub-linear ($\ln |S|$), and the dependence on the covering time is $\Omega(L^{2\omega-3\omega^2} + L^{1/1-\omega})$, which is a polynomial dependence.

The next theorem considers the linear learning rate case.

\begin{theorem}\label{theorem:linearlearningrate}
Let us specify that with probability at least $1 - \delta$, for an agent $j$, $||Q^j_T - Q^j_* ||_\infty \leq \epsilon$. The bound on the rate of convergence of low-$Q$, $Q^j_T$, with a linear learning rate is given by
\begin{equation}
    T = \Omega \Big( (L + \psi L + 1)^{\frac{1}{\beta} \ln \frac{Q^j_{\max}}{\epsilon}} \frac{Q^{2,j}_{\max}\ln(\frac{|S|\Pi_i |A_i| Q^j_{\max}}{\delta \beta \epsilon \psi})}{\beta^2 \epsilon^2 \psi^2}   \Big),
\end{equation}
\noindent where $\psi$ is a small arbitrary positive constant satisfying $\psi \leq 0.712$. 
\end{theorem}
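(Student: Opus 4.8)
The plan is to adapt the $Q$-learning sample-complexity analysis of Even-Dar and Mansour~\cite{Dar2003learning} to the multi-agent low-$Q$ recursion \eqref{eq:lowlevelq} with $\omega = 1$, exactly paralleling the proof of Theorem~\ref{theorem:lowqsamplecomplexity} but tracking the slower forgetting induced by a linear learning rate. The starting point is that, under Assumption~\ref{assumption:globaloptimum}, Hu and Wellman~\cite{hu2003nash} guarantee that the stochastic-approximation operator driving \eqref{eq:lowlevelq} is a $\gamma$-contraction toward the Nash $Q$-value $Q^j_*$ in $\|\cdot\|_\infty$; writing $\beta = (1-\gamma)/2$, one step of the idealized update contracts the error by a factor $1 - 2\beta = \gamma$. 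First I would introduce the error process $\Delta^j_t = Q^j_t - Q^j_*$ and split each update into a deterministic contracting part and a zero-mean noise part whose conditional magnitude is bounded in terms of $Q^j_{\max} = R^j_{\max}/(1-\gamma)$. Assumption~\ref{assumption:visitassumption} together with the covering-time hypothesis ensures every state--joint-action pair is updated at least once in every window of $L$ steps, which is the device that upgrades the asymptotic contraction into quantitative per-phase progress.

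The heart of the argument is a phased (iterated) contraction: define a decreasing sequence of high-probability envelopes $D_0 = Q^j_{\max} \ge D_1 \ge D_2 \ge \cdots$ with $D_{k+1} = (1-\beta) D_k$ up to a variance allowance of size proportional to $\psi D_k$, so that after $k^\star = \big\lceil \tfrac{1}{\beta}\ln(Q^j_{\max}/\epsilon)\big\rceil$ phases we obtain $\|\Delta^j_T\|_\infty \le \epsilon$ with the required probability. The decisive difference from the polynomial case is the length $T_{k+1}$ of phase $k{+}1$: with $\alpha_n = 1/\#(s,\boldsymbol{a},t)$ the influence of the (large) error carried into a phase decays only like $1/n$ rather than geometrically, so suppressing that stale error by a constant factor forces the phase length to grow multiplicatively, yielding a recursion of the shape $T_{k+1} \le (L + \psi L + 1)\, T_k$ (plus a lower-order additive term for the noise to average out). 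Unrolling this recursion over the $k^\star$ phases produces precisely the leading factor $(L + \psi L + 1)^{\frac{1}{\beta}\ln(Q^j_{\max}/\epsilon)}$ in the statement.

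For the noise part I would apply a martingale concentration inequality (Azuma--Hoeffding, in the iterated form used in~\cite{Dar2003learning}) coordinatewise, then union-bound over the $|S|\,\Pi_i|A_i|$ state--joint-action coordinates and over the $k^\star$ phases; forcing the aggregate failure probability below $\delta$ yields the $\ln\!\big(\tfrac{|S|\Pi_i|A_i| Q^j_{\max}}{\delta\beta\epsilon\psi}\big)$ factor, while balancing the per-phase variance allowance $\psi D_k$ against the number of fresh samples collected in the phase yields the $Q^{2,j}_{\max}/(\beta^2\epsilon^2\psi^2)$ factor. The side condition $\psi \le 0.712$ is a technical constraint inherited verbatim from the single-agent bound: it is what makes a certain geometric-type series controlling the accumulated noise converge and keeps the constants in the concentration step valid; it plays no conceptual role beyond that.

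The main obstacle, and the step I would treat most carefully, is the per-phase length recursion under the linear rate --- rigorously showing that with $\alpha_n = 1/n$ the error inherited at the start of a phase is driven down by a constant factor only after a number of steps that is $\Theta(L + \psi L + 1)$ times the previous phase length (in contrast with the $\omega\in(1/2,1)$ case, where a single additive burn-in of $\big((L/\beta)\ln(Q^j_{\max}/\epsilon)+1\big)^{1/(1-\omega)}$ suffices), and checking that these multiplicative blow-ups compose cleanly across all $\tfrac{1}{\beta}\ln(Q^j_{\max}/\epsilon)$ phases without degradation of constants. Given the contraction property, the covering-time bookkeeping, and the concentration lemma from~\cite{Dar2003learning}, the remaining steps are routine.
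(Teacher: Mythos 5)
Your proposal follows essentially the same route as the paper's proof: a phased contraction with envelopes $D_{k+1}=(1-\beta)D_k$ over $\frac{1}{\beta}\ln(Q^j_{\max}/\epsilon)$ phases, a multiplicative phase-length recursion $\tau_{k+1}=((1+\psi)L+1)\tau_k$ forced by the $1/n$ learning rate, a split of each phase's error budget into a deterministic contracting part and a $\Theta(\psi\beta D_k)$ noise allowance controlled by Azuma's inequality, and a union bound over state--joint-action pairs and phases giving the logarithmic factor and the base phase length $\tau_0=\Theta\big(Q^{2,j}_{\max}\ln(\cdot)/(\psi^2\beta^2\epsilon^2)\big)$. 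This matches the paper's Lemmas on $Y^j_{t;\tau_k}$, the martingale differences, and the per-iteration and all-iteration probability bounds, including the role of $\psi\le 0.712$ as the technical condition making $\frac{1}{e}\le\frac{1}{2+\psi}$ in the deterministic decay step.
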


Theorem~\ref{theorem:linearlearningrate} shows that the bound is linear in the number of agents and sub-linear in the state and action spaces. This linear dependence on the number of agents is also superior to prior results~\citep{Qinghua2021sharp,song2021can}. Note, the dependence on the covering time in Theorem~\ref{theorem:linearlearningrate} could be much worse than that of Theorem~\ref{theorem:lowqsamplecomplexity}, depending on the value of $Q^j_{\max}$ and $\epsilon$. Since the value of $\epsilon$ is small, the dependence is certainly worse than that obtained for the polynomial learning rate case. Also, the dependence on $Q^j_{\max}$ is exponential as opposed to a polynomial dependence for Theorem~\ref{theorem:lowqsamplecomplexity}. 
The last two theorems illustrate the performance benefit in using a polynomial learning rate as opposed to a linear learning rate in our algorithm.  

\section{Experiments and Results}\label{sec:experiments}

We consider three different experimental domains, one each for competitive, cooperative, and mixed settings, where each agent has access to a set of four advisors. We use neural network implementations of MA-TLQL and MA-TLAC, along with 5 other baselines: DQN \citep{mnih2015human}, DQfD \citep{hester2018deep}, CHAT \citep{wang2017improving}, ADMIRAL-DM \citep{Subramanian2022multiagent}, and TLQL \citep{li2019two}. In Appendix~\ref{sec:natureofalgorithms}, we tabulate the characteristics of these baselines and provide further details regarding our choices. Since CHAT and ADMIRAL-DM assume the presence of a single advisor, we use a weighted random policy approach for implementing these two algorithms in the multiple-advisor setting, as in Li et al. \cite{li2019two}. If different advisors provide different actions at the same state, each action is weighted based on the number of advisors suggesting that action.
For DQfD, during pre-training \citep{hester2018deep}, we populate the replay buffer using advisor demonstrations from all the available advisors. For all our experiments, we will describe the critical details here, while the complete description is in Appendix~\ref{sec:experimentaldetails}. All the experiments are repeated 30 times, with averages and standard deviations reported. For statistical significance we use the unpaired 2-sided t-test and report $p$-values, where $p<0.05$ is considered significant. The tests compare the highest performing algorithm (typically MA-TLQL) with the second-best baseline and best/average advisor performance. We conduct a total of seven experiments. The code for all experiments is open-sourced \cite{sourcecode}. Appendix~\ref{sec:experimentaldetails} tabulates all our experimental settings. Appendix~\ref{sec:hyperparameters} provides the hyperparameter details and Appendix~\ref{sec:wallclocktimes} contains the wall clock times.


Experiments 1--4 use the competitive, two-agent version of Pommerman \citep{resnick2018pommerman}.  
The environment is complex, with each state containing roughly 200 elements related to agent position and special features (e.g., bombs). The reward function is sparse: agents only receive a terminal reward of $\{-1, 0, +1\}$. Experiments are conducted 
in two phases. In the first phase (training), our algorithms and the baselines train against a standard DQN opponent for 50,000 episodes, where we plot the cumulative rewards. During this phase, algorithms can use advisors to accelerate training. In the second phase (execution), we test the performance of the trained policies against DQN for 1000 episodes, where we plot the win rate (fraction of games won) for each algorithm. During this phase, agents cannot access advisors, take no exploratory actions, and do not learn. All advisors pertaining to these four experiments are rule-based agents.


\begin{figure}[h!]
	\subfloat[Training ]{{\includegraphics[width=0.44\textwidth, height=3cm]{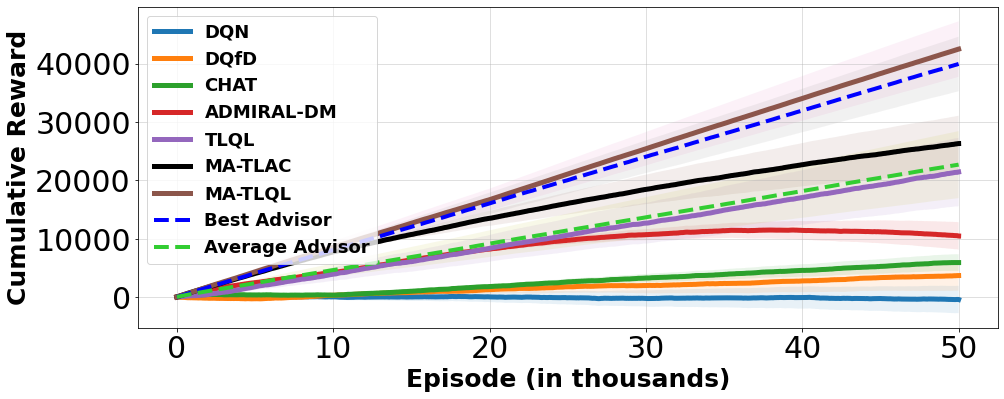}}} 
	\\ 
	\subfloat[Execution]{{\includegraphics[width=0.44\textwidth, height=3cm]{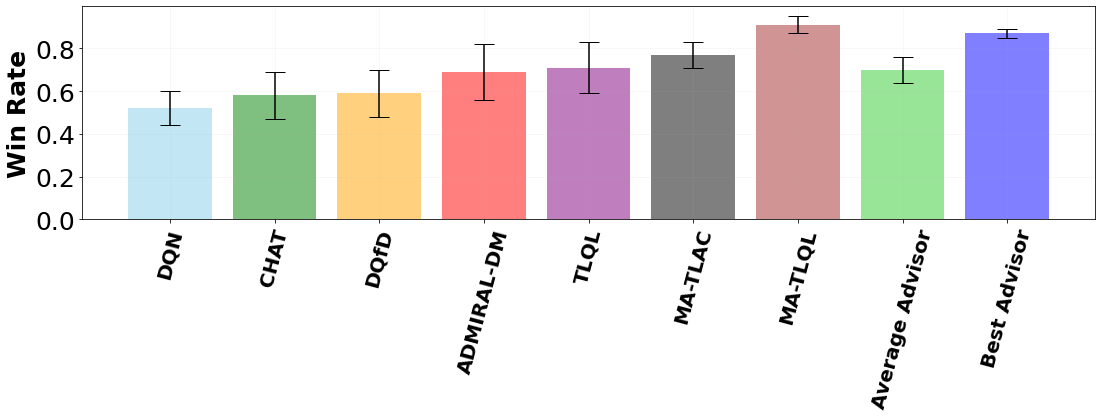}}}

  \caption{Two agent Pommerman with four sufficient advisors of different quality (Experiment 1)}%
	\label{fig:onevsonesufficientdifferent}
\end{figure}

\textbf{Experiment 1:} Our first experiment uses a set of four advisors ranked in terms of quality from Advisor~1 to Advisor~4. Here, Advisor~1 is the best advisor, capable of teaching the agent all skills 
needed to win the game of Pommerman, and Advisor~4 only suggests random actions. In Pommerman, there is a fixed set of six skills that an agent needs to master to be able to win \cite{resnick2018pommerman}. Since this set of advisors can teach all these skills, we say the agent has access to a \emph{sufficient set} of advisors.
We plot the training and execution performances in Figure~\ref{fig:onevsonesufficientdifferent}(a) and (b) respectively, 
including the performance of the best and average advisors (average of all Advisors~1--4) against DQN. MA-TLQL gives the best performance ($p < 0.01$) and is the only algorithm providing a better performance than the best advisor ($p < 0.11$) in both training and execution. MA-TLAC
performs better than the average advisor ($p < 0.04$). None of the others show better performances than the average advisor. CHAT and ADMIRAL-DM are not capable of leveraging and distinguishing amongst a set of advisors.
DQfD uses pre-training, which is not very effective in the non-stationary multi-agent context. Learning from online advising is preferable in MARL. Also, DQfD and CHAT are independent techniques that are not actively tracking the opponent's performance. While TLQL is capable of learning from multiple advisors, its independent nature in addition to coupling of advisor values with the RL policy reduces its effectiveness in multi-agent environments. MA-TLQL gives a better performance than MA-TLAC in both training and execution ($p < 0.01$). As noted previously, the $Q$-learning family of algorithms tends to induce a positive bias while using the maximum action value, which leads to providing the best possible response \cite{Hasselt2010double}. This explains the superior performance of MA-TLQL. 
We conclude that MA-TLQL is capable of leveraging a set of good and bad advisors. Further, the training results in Figure~\ref{fig:onevsonesufficientdifferent}(a) show that MA-TLQL is able to learn a better policy faster than the baselines by using advisors ($p < 0.01$). The evaluation results in Figure~\ref{fig:onevsonesufficientdifferent}(b) show that amongst all algorithms trained for the same number of episodes, MA-TLQL provides the best performance, when deployed without any advisors ($p < 0.01$). Both observations point to better sample efficiency in MA-TLQL.    
Supplementary experiments in Appendix~\ref{sec:frequency} show that MA-TLQL comes to relying more on good advisors than poor advisors, as compared to the baselines, illustrating its superiority.

\begin{figure}[h!]
	\subfloat[Training]{{\includegraphics[width=0.44\textwidth, height=3cm]{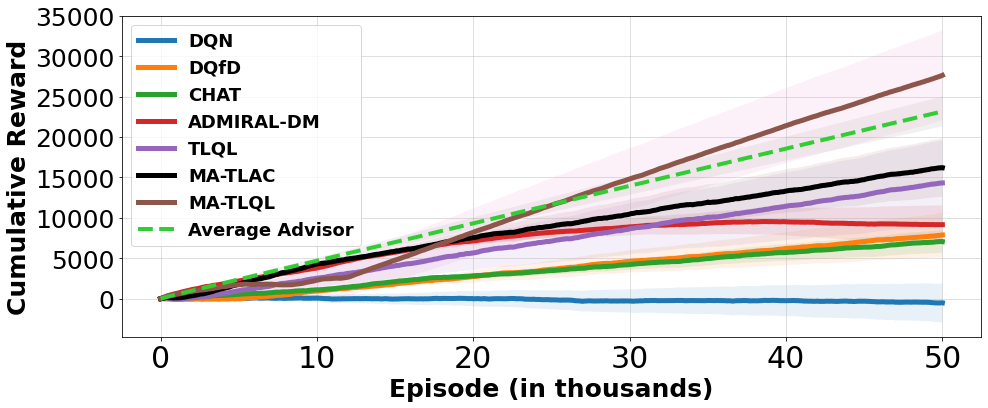}}}
    \\
	\subfloat[Execution]{{\includegraphics[width=0.44\textwidth, height=3cm]{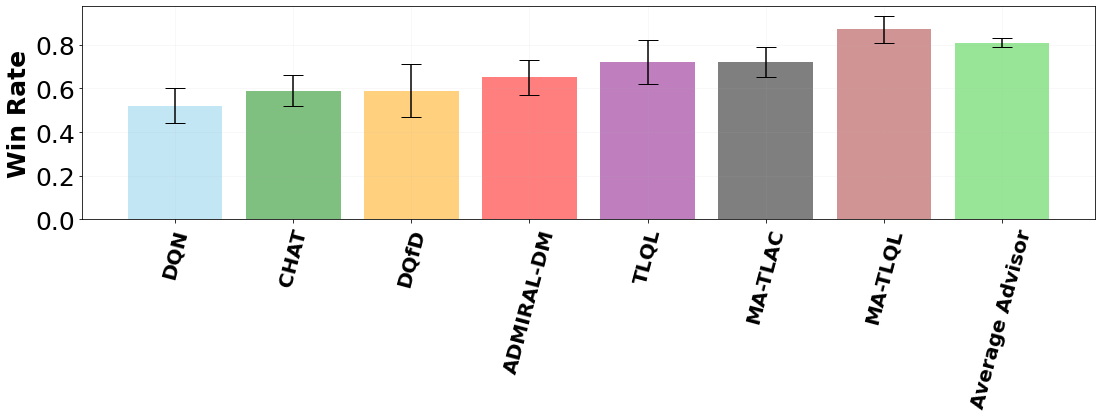}}}

  \caption{Two-agent Pommerman with four sufficient advisors of similar quality (Experiment 2)}%
	\label{fig:onevsonesufficientsimilar}
\end{figure}

\textbf{Experiment 2:} We use the same domain as in Experiment 1, but with a different set of advisors. Now, all four advisors can teach strictly different Pommerman skills. 
For example, Advisor~1 can teach how to escape the enemy (and nothing else), and Advisor~2 can teach how to obtain necessary power-ups (and nothing else --- full details are in Appendix~\ref{sec:experimentaldetails}). These advisors provide psuedo-random action advice in states outside their expertise. This set of advisors is also a sufficient set. Now, learning agents must decide what advisor to listen to in the current state.
From the training and execution results in Figure~\ref{fig:onevsonesufficientsimilar}(a) and (b), we see that MA-TLQL gives the best overall performance ($p < 0.02$), exceeding the average performance of the four advisors ($p < 0.05$). Since all four advisors have similar quality, we only choose to use the average performance of the four advisors in this experiment for comparison. 
We conclude that MA-TLQL is capable of leveraging the combined knowledge of a set of advisors with different individual expertise, during learning. 

\begin{figure}[h!]
	\subfloat[Training]{{\includegraphics[width=0.44\textwidth, height=3cm]{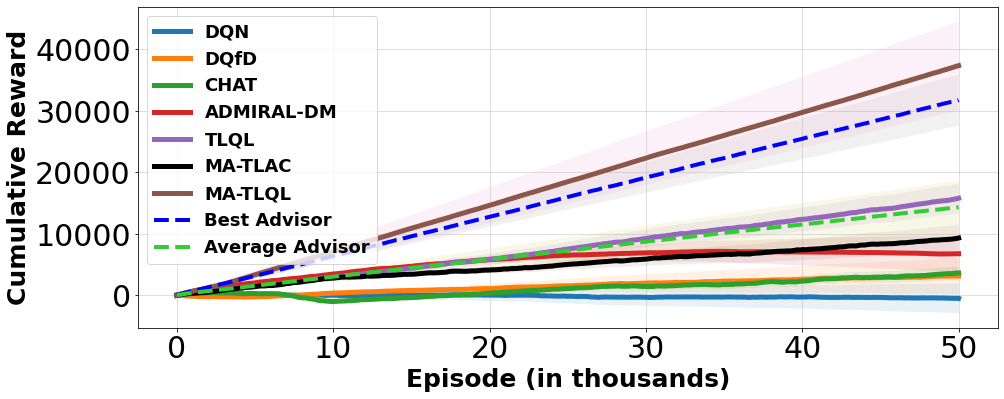}}}
	\\
			\subfloat[Execution ]{{\includegraphics[width=0.44\textwidth, height=3cm]{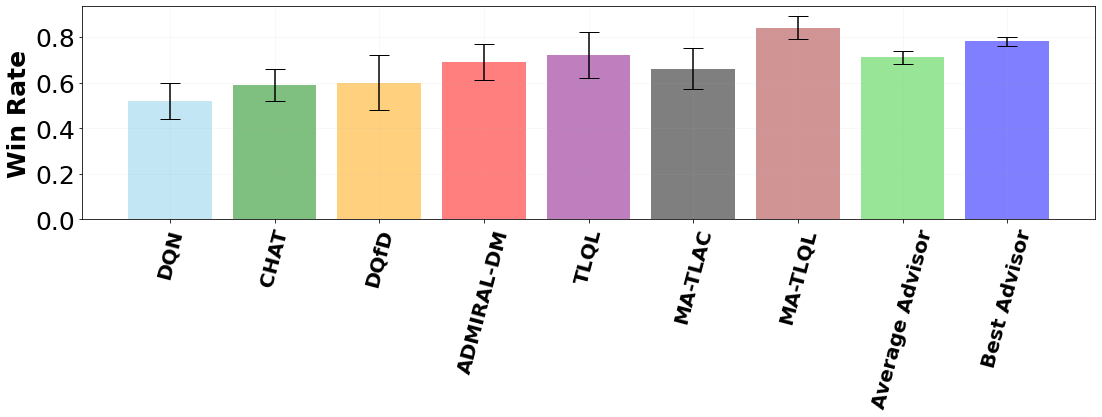}}}

  \caption{Two-agent Pommerman with four insufficient advisors of different quality (Experiment 3)}%
	\label{fig:onevsoneinsufficientdifferent}
\end{figure}

\textbf{Experiment 3:} We use the same domain as in Experiment~1 but with a different set of four advisors. These advisors are similar to the set of advisors in our first experiment, where Advisor~1 gives the best advice throughout the domain, and Advisor~4 is random. However, this set of advisors is \emph{not} capable of teaching all the strategies (i.e, Pommerman skills) needed to win in Pommerman, and compose an insufficient set (more details in Appendix~\ref{sec:experimentaldetails}). It is critical for agents to learn from the environment in addition to the advisors. Training and execution results in Figure~\ref{fig:onevsoneinsufficientdifferent} shows the superior performance of MA-TLQL, the only algorithm that outperforms the best advisor ($p < 0.05$) and all baselines ($p < 0.02$). Surprisingly, TLQL performs better than MA-TLAC ($p < 0.02$), likely due to the positive bias of $Q$-learning. This experiment reinforces the observation that MA-TLQL is capable of learning from good advisors and avoids bad advisors (also see Appendix~\ref{sec:frequency}). 
Since MA-TLQL outperforms the best advisor, this experiment demonstrates that MA-TLQL can learn from both, advisors and through direct interactions with the environment, hence having a much improved sample efficiency as compared to other algorithms that learn only from the environment. This is observed during both training and execution.

\begin{figure}[h!]
    \subfloat[Training]{{\includegraphics[width=0.44\textwidth, height=3cm]{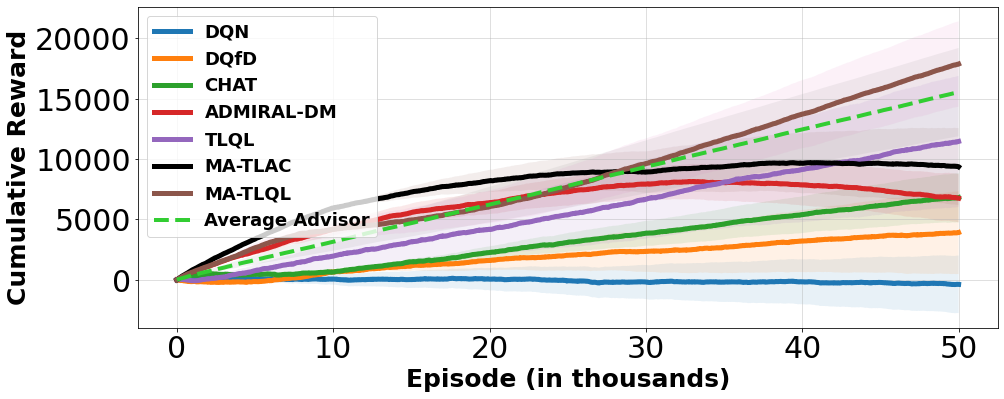}}}
		\\
		\subfloat[Execution]{{\includegraphics[width=0.44\textwidth, height=3cm]{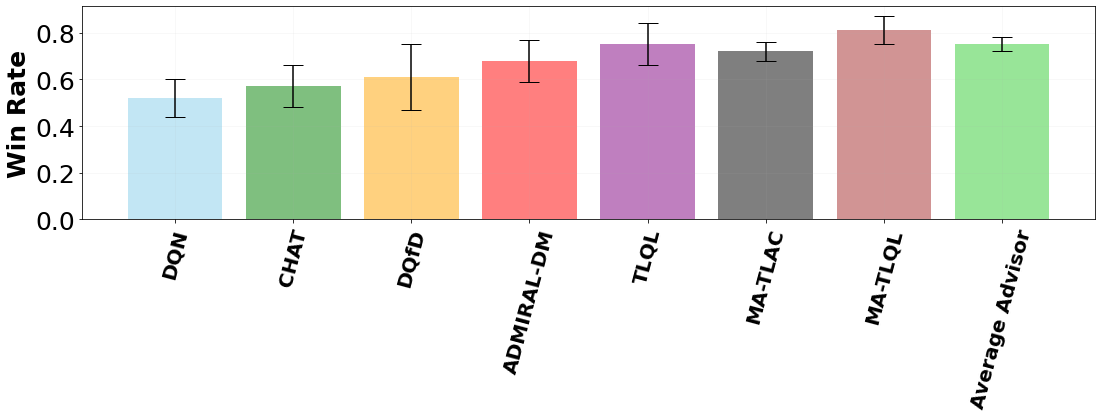}}}

  \caption{Two-agent Pommerman with four insufficient advisors of similar quality (Experiment 4)}%
	\label{fig:onevsoneinsufficientsimilar}
\end{figure}

\textbf{Experiment 4:} This is similar to the Experiment~2: four advisors have similar quality, but each understands a different Pommerman skill. However, our set of advisors in this experiment are insufficient to teach all the skills in Pommerman, and the agent must also learn from the environment.
The results in Figure~\ref{fig:onevsoneinsufficientsimilar} shows that MA-TLQL is capable of leveraging the combined expertise of the advisors and learning from the environment to obtain the best performance, as compared to the baselines ($p < 0.04$) and advisors ($p < 0.05$). This makes MA-TLQL more sample efficient than the prior algorithms.

\begin{figure}[h!]
	\centering
	\subfloat[ Training ]{{\includegraphics[width=0.44\textwidth, height=3cm]{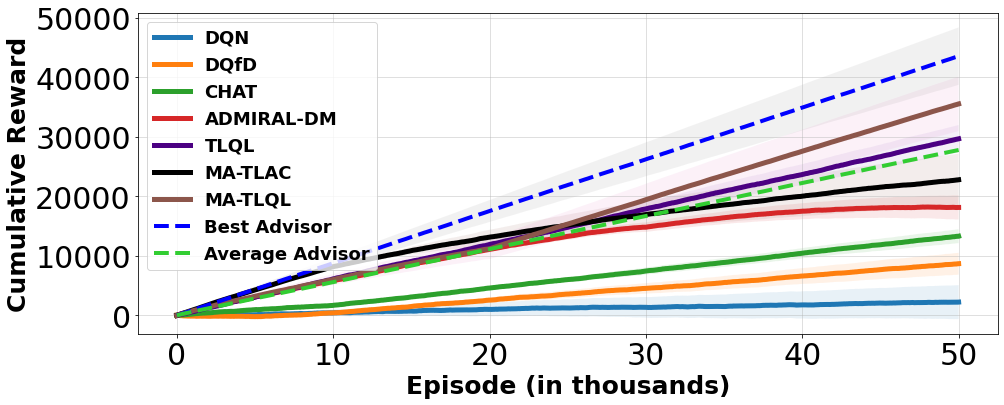}}}
    \\
	\subfloat[ Execution ]{{\includegraphics[width=0.44\textwidth, height=3cm]{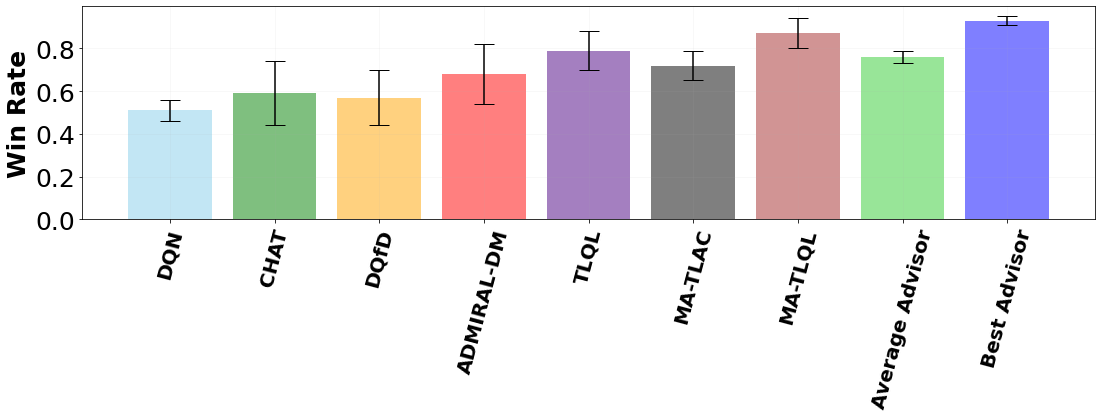}}}

  \caption{Team (mixed) Pommerman (Experiment 5)}%
	\label{fig:teamsufficientdifferent}
\end{figure}

\textbf{Experiment 5:} We now switch to a four-agent version of Pommerman, which is two vs.~two. This is a mixed setting as agents need to learn cooperative as well as competitive skills. Overall, this is a more complex domain with a larger state space. We consider four sufficient advisors of different quality, similar to Experiment 1. We conduct two phases --- training (for 50,000 episodes) and execution (for 1000 episodes). 
The training and execution results in Figure~\ref{fig:teamsufficientdifferent} show that MA-TLQL provides the best performance compared to the baselines ($p < 0.04$) but does not perform better than the best available advisor. Since this is a more complex domain, MA-TLQL needs a larger training period for learning good policies. However, MA-TLQL still performs better than the average performance of the four advisors ($p < 0.03$). 
We conclude that although MA-TLQL's performance 
suffers in the more difficult mixed setting, it still outperforms all the other baselines and is capable of distinguishing between good and bad advisors (see also Appendix~\ref{sec:frequency}). From both training and execution results in Figure~\ref{fig:teamsufficientdifferent}, we note that MA-TLQL has a superior sample efficiency as compared to the other baselines.

\begin{figure}[h!]
		\subfloat[Training ]{{\includegraphics[width=0.22\textwidth, height = 3cm]{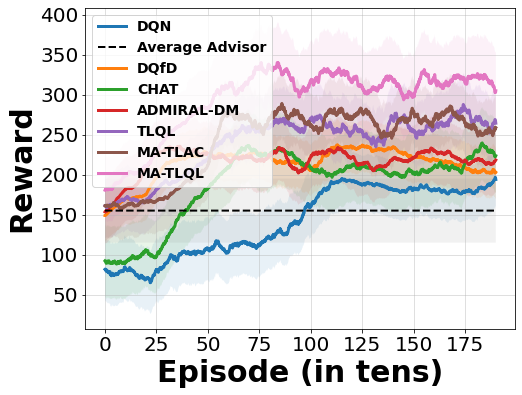}}}
		\quad
		\subfloat[Execution ]{{\includegraphics[width=0.22\textwidth, height = 3cm]{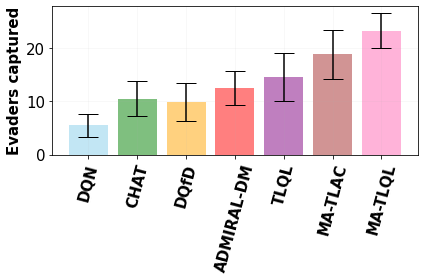}}}
  \caption{Cooperative Pursuit setting (Experiment 6)}%
	\label{fig:pursuitcooperation}
\end{figure}

\textbf{Experiment 6:} This experiment switches to the cooperative Pursuit domain \cite{gupta2017cooperative}. There are eight pursuer learning agents that learn to capture a set of 30 randomly moving targets (evaders) (details in Appendix~\ref{sec:experimentaldetails}). We use four pre-trained DQN networks as the advisors, learning for 500, 1000, 1500, and 2000 episodes, respectively. We again have two phases --- training and execution. During training, all algorithms are trained for 2000 episodes. The trained networks are then used in the execution phase for 100 episodes with no further training or influence from advisors. 
Figure~\ref{fig:pursuitcooperation}(a) plots the episodic rewards obtained during training and the Figure~\ref{fig:pursuitcooperation}(b) plots the number of targets captured in the execution phase, where MA-TLQL shows the best performance ($p < 0.03$). 
Hence, MA-TLQL can outperform all baselines in a cooperative environment as well.

\begin{figure}[h!]
		\subfloat[Training ]{{\includegraphics[width=0.22\textwidth, height = 3cm]{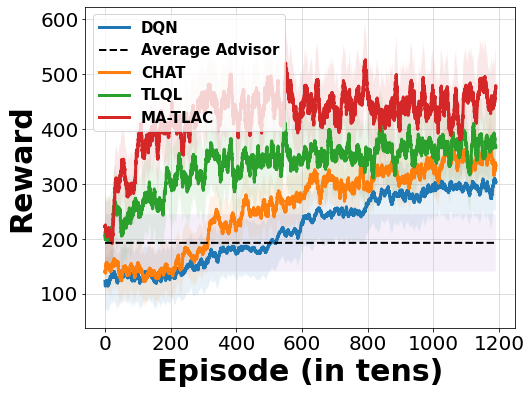}}}
		\quad 
		\subfloat[Execution ]{{\includegraphics[width=0.22\textwidth, height = 3cm]{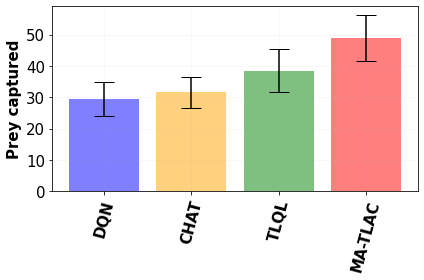}}}
  \caption{Mixed Predator-Prey setting (Experiment 7)}%
	\label{fig:mperesults}
\end{figure}

\textbf{Experiment 7:} This final experiment considers a mixed cooperative competitive Predator-Prey environment which is a part of the Multi Particle Environment (MPE) suite \cite{lowe2017multi}. Our implementation uses a discrete action space and a continuous state space (more details in Appendix~\ref{sec:experimentaldetails}). There are a total of eight predators trying to capture eight prey (prey are not removed, but respawned upon capture). In our experiment, each algorithm trains the predators while the prey is trained using a standard DQN opponent. The experiments have two phases of training and execution, which is modelled as a CTDE setting. Here each agent obtains information about the actions and rewards of all other agents during training, but only has local observation during execution. Since this environment requires decentralization during execution, we omit the fully centralized MA-TLQL and ADMIRAL-DM. We also omit DQfD since it gave poor performances previously. As in Experiment~6, we use four pretrained DQN (predator) networks as advisors (trained for 1000, 2000, 7000, and 12000 episodes). Training is conducted for 12000 episodes and execution is conducted for 100 episodes. The training results in Figure~\ref{fig:mperesults}(a) (plot of episodic rewards) show that MA-TLAC is the most sample efficient compared to other algorithms as it is able to leverage the available advisors better than others, thus outperforming them ($p < 0.04$). The execution results in Figure~\ref{fig:mperesults}(b) plots the average prey captured by each algorithm. MA-TLAC outperforms others during execution as well ($p < 0.03$). 

From all the p-values across the seven experiments, we note that most of our observations are statistically significant. Despite observing MA-TLQL outperforming the best advisor in many of the experiments, some of these comparisons are not statistically significant (i.e., $p \geq 0.05$). While the main experiments of the paper consider fixed advisors, our algorithms can also be implemented with learning/changing advisors (see Appendix~\ref{sec:learningadvisors}). In Appendix~\ref{sec:differentadvisors} we study performances under different numbers of advisors. Also, our algorithms can be used along with opponent modelling techniques as done by prior works \cite{he2016opponent} (more details in Appendix~\ref{sec:opponentmodelling}). 

\section{Ablation Study}

In this section, we run an ablation study on the three components of MA-TLQL that differ from the previously introduced TLQL algorithm by Li et al.~\cite{li2019two}. To recall these three components are: i) joint action (JA) updates, ii) ensemble method (EM), and iii) advisor evaluation (AE). For this ablation study we will consider the two-agent version of Pommerman with four sufficient advisors having different (Experiment~1) and similar quality (Experiment~2).

\begin{figure}[t]
	\subfloat[Training ]{{\includegraphics[width=0.22\textwidth, height=3cm]{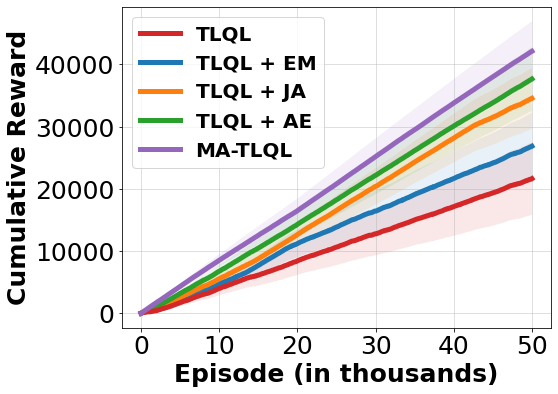}}} \quad
			\subfloat[Execution]{{\includegraphics[width=0.22\textwidth, height=3cm]{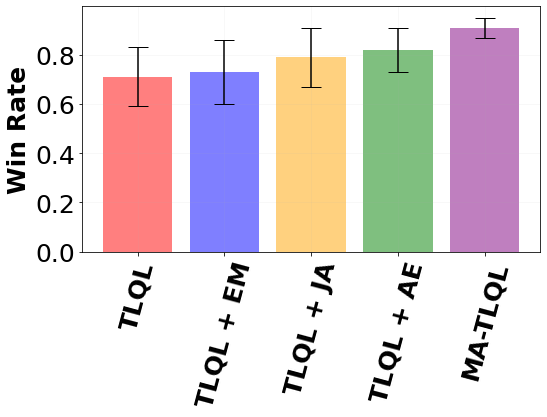}}}

  \caption{Ablation results using Experiment~1}%
	\label{fig:ablationdifferent}
\end{figure}

The ablation results corresponding to Experiment~1 are given in Figure~\ref{fig:ablationdifferent}, where we plot the performances of TLQL and MA-TLQL in addition to TLQL with each of the three components. In Figure~\ref{fig:ablationdifferent}(a) and (b), the performance of TLQL with each of the three components is better than vanilla TLQL. TLQL using the ensemble method (i.e., TLQL+EM) is able to perform better than vanilla TLQL, since at the beginning of training the $Q$-values of the advisors are not accurate, and the ensemble technique chooses the advisor action that is agreed upon by most advisors in the given set (in line with our discussions in Section~\ref{sec:extending}). Recall that the set of four different advisors had four advisors of decreasing quality, with the first three advisors capable of teaching some useful Pommerman skills and the last advisor being just random (see Appendix~\ref{sec:experimentaldetails}). Using the ensemble prevents the use of the random advisor, as the first three advisors are more likely to agree upon an action, increasing the possibility of the agent choosing that action. Further, we see that TLQL highly benefits from using the joint action update (i.e., TLQL+JA) instead of an independent update seen in vanilla TLQL. The joint action update explicitly considers the strategies of other agent(s) and helps in providing stronger best responses as compared to an independent update in the multi-agent environments. Finally, TLQL using advisor evaluation in the high-$Q$ table (i.e., TLQL+AE) provides the best benefit compared to the other components. As discussed in Section~\ref{sec:background} and Section~\ref{sec:extending}, the high-$Q$ definition in vanilla TLQL is limiting since the advisor evaluation through the high-$Q$ is coupled with the inaccurate RL policy (and AE addresses this limitation). Further, from Figure~\ref{fig:ablationdifferent}, we see that MA-TLQL (integrating all the three components) shows the best performance as compared to vanilla TLQL and individual TLQL implementations with each of the three components ($p$ < 0.05). Thus, MA-TLQL is able to seamlessly integrate the advantages of each of the individual components of TLQL, demonstrating its superiority. 

\begin{figure}[t]
	\subfloat[Training ]{{\includegraphics[width=0.22\textwidth, height = 3cm]{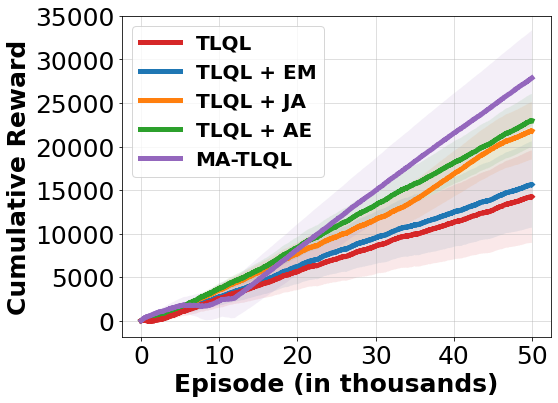}}} \quad \quad  
			\subfloat[Execution]{{\includegraphics[width=0.22\textwidth, height=3cm]{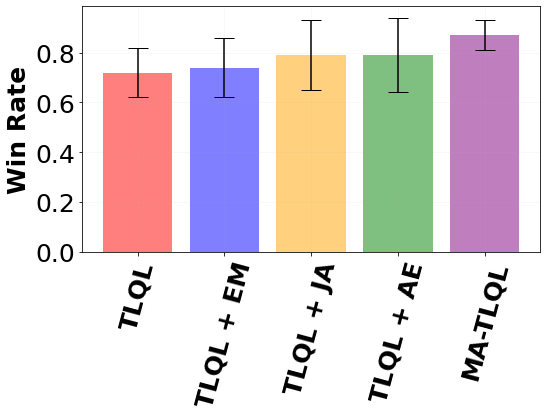}}}

  \caption{Ablation results using Experiment~2}%
	\label{fig:ablationsimilar}
\end{figure}

We also consider a similar ablation study using Experiment~2 (see Figure~\ref{fig:ablationsimilar}). As in Figure~\ref{fig:ablationdifferent}, we see that TLQL with each of the three components performs better than vanilla TLQL. Since we have four advisors of similar quality where each advisor is good at a different Pommerman skill, their agreement on an action is expected to be small. Hence, the ensemble technique (i.e., TLQL+EM) provides only a small improvement over vanilla TLQL. However, the other two components (i.e., TLQL+JA and TLQL+AE) provides a good performance benefit over TLQL. Finally, MA-TLQL, that integrates all the three components, provides the best performance ($p$ < 0.03).

\section{Conclusion}\label{sec:conclusion}

This paper provided a principled approach for learning from multiple independent advisors in MARL. Inspired by Li et al.~\cite{li2019two}, we present a two-level architecture for multi-agent environments. We discuss two limitations in TLQL and address these limitations in our approach. Also, we provide a fixed point guarantee and sample complexity bounds regarding the learning of MA-TLQL. Additionally, we provided an actor-critic implementation that can work in the CTDE paradigm. Further, we performed an extensive experimental analysis of MA-TLQL and MA-TLAC in cooperative, competitive, and mixed settings, where we show that these algorithms are capable of suitably leveraging a set of advisors, and perform better than baselines. 
As future work, we would like to consider human advisors and further explore some avenues in the real-world context.

\section*{Acknowledgements}
Resources used in preparing this research were provided by the province of Ontario and the government of Canada through CIFAR, NSERC and companies sponsoring the Vector Institute. Part of this work has taken place in the Intelligent Robot Learning (IRL) Lab at the University of Alberta, which is supported in part by research grants from the Alberta Machine Intelligence Institute (Amii); a Canada CIFAR AI Chair, Amii; Compute Canada; Huawei; Mitacs; and NSERC.

\bibliographystyle{ACM-Reference-Format}
\bibliography{main}

\clearpage
\newpage 
\appendix

\section{Algorithm Pseudocodes}\label{sec:psuedocode}

A complete pseudocode of a tabular implementation of our $Q$-learning based algorithm (MA-TLQL) is given in Algorithm~\ref{alg:twolevelqlearning}. All agents initialize a low-$Q$ table and a high-$Q$ table in line~2. Then at each state, all agents choose to perform an action in lines~8--20. This action can come from the advisor or the RL policy as described in Section~\ref{sec:extending}. Then the action is executed, and the next state and reward are observed in line~21. Finally, the $Q$ values for the low-$Q$ as well as the high-$Q$ are updated (line~22 and line~23) according to equations presented in Section~\ref{sec:extending}. The value of $\epsilon'$ is linearly decayed from a high-value to a value close to zero during training (line~24). 

To make Algorithm~\ref{alg:twolevelqlearning} applicable to high dimensional state and action spaces, we provide a function approximation-based implementation of MA-TLQL in Algorithm~\ref{alg:twolevelqlearningNN}. Here neural networks are used as the function approximator, and the algorithm uses a separate target network and a replay buffer for training, as introduced in the well-known DQN algorithm \citep{mnih2015human}. The agent maintains a high-$Q$ network and two low-$Q$ networks (evaluation and target networks) and updates these networks using the temporal difference (T.D.) errors with the update equations presented in Section~\ref{sec:extending}. If the full state of the stochastic game is not available, the agent can simply use its observation instead of the state, as applicable in most function approximation-based RL methods. 

We also extend Algorithm~\ref{alg:twolevelqlearningNN} to an actor-critic implementation described in Algorithm~\ref{alg:twolevelactorcritic}. This algorithm is called multi-agent two-level actor-critic (MA-TLAC). This algorithm uses the policy as the actor and the $Q$-values as the critic, consistent with prior work \citep{konda1999actor}. We maintain two actors, and two critics to reflect the two-level (high and low) nature of our algorithm. The high-level actor determines an advisor and the high-level critic helps train the high-level actor, using the T.D. errors. Similarly, the low-level actor determines the appropriate action, with the low-level critic providing the T.D. errors for training. In MA-TLAC, since we use a separate actor network for advisor selection, we do not use the ensemble technique from Eq.~\ref{eq:valueofvote}. Instead, the high-level actor directly chooses one amongst the given advisors for the current state. The advantage of this algorithm is that it can be implemented using the popular CTDE paradigm \citep{lowe2017multi}, since the actors do not require the actions of other agents for action/advisor selection. In CTDE, global information (i.e., information from other agents) is available during training time but not available during execution. This CTDE based implementation also allows our method to be used in partially observable domains, since the actors can use the local observations for action/advisor selection while the critic can use the joint actions and states during training, as described in Lowe et al.~\cite{lowe2017multi}. Also, since the ensemble technique (Eq.~\ref{eq:valueofvote}) is not used in MA-TLAC, it is also applicable to continuous state space environments as well (unlike MA-TLQL, which is only applicable to environments with discrete state spaces).

All the algorithm pseudocode provided in this section assume that all agents are using the same algorithmic steps for learning where it can maintain copies of updates of other agents, as done in prior work \citep{hu2003nash}. If this is not possible, the agents would directly use the observed previous actions of other agents for its updates.

\begin{algorithm}
\caption{MA-TLQL Tabular Method}
\label{alg:twolevelqlearning}
\begin{algorithmic}[1]
\State Let $Ad^j$ denote a set of advisors available to the agent $j$.  
\State For all $j \in {1, \ldots, N}$, $s \in S$, and $a^j \in A^j$: $lowQ^j(s, a^j, \boldsymbol{a}^{-j}) \xleftarrow{} 0$ where $\boldsymbol{a}^{-j} = [a^1, \ldots, a^{j-1}, a^{j+1}, \ldots, a^N]$
\State For all $s \in S$, $ad^j \in Ad^j$, and for all $j \in {1, \ldots, N}$: $highQ^j(s,\boldsymbol{a}^{-j}, ad^j) \xleftarrow{} 0$
\State Initialize a value for hyperparameters $\epsilon$ and $\epsilon'$ and $\eta$
\While{training is not finished}
\State For each agent $j$, get the current state $s$
\State For each agent $j$, get the joint actions of other agents $\boldsymbol{a}^{-j}$ at state $s$ using the respective copies and previous actions of all agents
\State For each agent $j$, let $u$ be a uniform random number between 0 and 1
 
  \If{$u$ < $\epsilon'$}
  
   \State Let $u'$ be a uniform random number between 0 and 1
    \If {$u'$ < $\eta$}
    
    \State Choose an advisor using the high-$Q$ values of agent 
     $j$ for the current state and joint action of other agents from Eq.~\ref{eq:valueofvote} and use its action as the current action $a^{j}_t$ 
    
    \Else
    
    \State Set the advisor $ad^j$ as a random advisor from 
    $Ad^j$ and use its action as the current action $a^{j}_t$.
    \EndIf
    
    \ElsIf{$u$ > $\epsilon'$ and $u$ < $\epsilon$}
    
    \State Set the action $a^{j}_t$ as a random action from the action space $A^j$
    
    \Else
    
    \State Choose a greedy action $a^{j}_t$ from the low-$Q$ value using $s$ and the joint action of other agents 
    
    \EndIf

\State Execute the joint action $\boldsymbol{a}$, observe joint reward $\boldsymbol{r}$ and the next state $s'$, where $\boldsymbol{a} = [a^1, \ldots, a^N]$ and $\boldsymbol{r} = [r^1, \ldots, r^N]$
\State Update value of low-$Q$ for the agent $j$ using Eq.~\ref{eq:control}. Obtain the next actions for other agents $\boldsymbol{a'}^{-j}$ from the respective copies and previous actions of other agents
\State If an advisor was chosen, update value of high-$Q$ of the advisor for the agent $j$ using Eq.~\ref{eq:evaluation}
 \State At the end of each episode, linearly decay $\epsilon'$
\EndWhile
\end{algorithmic}
\end{algorithm}

\begin{algorithm}
\fontsize{9pt}{9pt}\selectfont
\caption{MA-TLQL Neural Network Method}
\label{alg:twolevelqlearningNN}
\begin{algorithmic}[1]
\State Let $Ad^j$ denote a set of advisors available to the agent $j$  
\State Initialize $Q_{\phi^j}, Q_{\phi_{-}^j}$ for all $j \in {1, \ldots, N}$ (to denote low-$Q$). Initialize $Q_{\theta^j}$, $Q_{\theta_{-}^j}$ for all $j \in {1, \ldots, N}$ (to denote high-$Q$)
\State Initialize a value for hyperparameters $\epsilon$ and $\epsilon'$ and $\eta$
\While{training is not finished}
\State For each agent $j$, get the current state $s$
\State For each agent $j$, get the joint actions of other agents $\boldsymbol{a}^{-j}$ at state $s$ using the respective copies and previous actions of all agents
\State For each agent $j$, let $u$ be a uniform random number between 0 and 1

  \If{$u$ < $\epsilon'$}
  
   \State Let $u'$ be a uniform random number between 0 and 1
    \If {$u'$ < $\eta$}
    
    \State Choose an advisor using the high-$Q$ values of agent 
    $j$ from Eq.~\ref{eq:valueofvote} for the current state and joint action of other agents using the high-$Q$, $Q_{\theta^j}$, and use its action as the current action $a^{j}_t$
    
    \Else
    
    \State Set the advisor $ad^j$ as a random advisor from 
    $Ad^j$ and use its action as the current action $a^{j}_t$.
    \EndIf
    
    \ElsIf{$u$ > $\epsilon'$ and $u$ < $\epsilon$}
    
    \State Set the action $a^{j}_t$ as a random action from the action space $A^j$
    
    \Else
    
    \State Choose a greedy action $a^{j}_t$ from the low-$Q$, $Q_{\phi^j}$, using $s$ and the joint action of other agents 
    
    \EndIf
    
\State Execute the joint action $\boldsymbol{a}$, observe joint reward $\boldsymbol{r}$ and the next state $s'$, where $\boldsymbol{a} = [a^1, \ldots, a^N]$ and $\boldsymbol{r} = [r^1, \ldots, r^N]$

\State For each agent $j$, store $ \langle s, \boldsymbol{a}, \boldsymbol{r}, s', \boldsymbol{a'} \rangle$ in replay buffer $\mathcal{D}^j$, where $\boldsymbol{a} = [a^1, \ldots, a^N]$, $\boldsymbol{a'} = [a'^1, \ldots, a'^N]$. Obtain the next actions for other agents $\boldsymbol{a'}^{-j}$ from the respective copies and previous actions of other agents

\State If an advisor was used, for each agent $j$, store $ \langle s, \boldsymbol{a}, \boldsymbol{r}, s', \boldsymbol{a'}, ad^j \rangle$ in replay buffer $\mathcal{D}'^j$, where $ad^j$ is the advisor

\State Set the next state $s'$ as the current state $s$

\State At the end of each episode, linearly decay $\epsilon'$

\While{j = 1 to N}
\State Sample a minibatch of K experiences $ \langle s, \boldsymbol{a}, \boldsymbol{r}, s', \boldsymbol{a'} \rangle$ from $\mathcal{D}^j$
\State Set $y^j = r^j + \gamma \max_{a'^j} Q_{\phi^j_{-}}(s', \boldsymbol{a'}^{-j}, a'^j)$ according to Eq.~\ref{eq:control} 
\State Update the $Q$-network $\phi^j$ by minimizing the loss $\mathcal{L}(\phi^j) = \frac{1}{K} \sum ( y^j  - Q_{\phi^j}(s, \boldsymbol{a}^{-j}, a^j))^2$
\State Sample a minibatch of K experiences $ \langle s, \boldsymbol{a}, \boldsymbol{r}, s', \boldsymbol{a'}, ad^j \rangle$ from $\mathcal{D'}^j$
\State Set $y^j = r^j + \gamma Q_{\theta^j_{-}}(s', \boldsymbol{a'}^{-j}, ad^j)$ according to Eq.~\ref{eq:evaluation}
\State Update the $Q$-network $\theta^j$ by minimizing the loss $\mathcal{L}(\theta^j) = \frac{1}{K}\sum ( y^j  - Q_{\theta^j}(s, \boldsymbol{a}^{-j}, ad^j))^2$
\EndWhile
\State Update the parameters of the target network for each agent by copying over the evaluation network every $\mathcal{T}$ steps: $\phi^j_{-} \xleftarrow{} \phi^j $ and $\theta^j_{-} \xleftarrow{} \theta^j $

\EndWhile
\end{algorithmic}
\end{algorithm}

\begin{algorithm}
\fontsize{9pt}{9pt}\selectfont
\caption{MA-TLAC}
\label{alg:twolevelactorcritic}
\begin{algorithmic}[1]
\State Let $Ad^j$ denote a set of advisors available to the agent $j$ 
\State Initialize $Q_{\phi^j}, \pi_{\phi_{-}^j}$, the low-level critic and actor networks for all $j \in \{1, \ldots, n\}$
\State Initialize $Q_{\theta^j}, \pi_{\theta_{-}^j}$, the high-level critic and actor networks for all $j \in \{1, \ldots, n\}$
\State Initialize a value for hyperparameters $\epsilon$ and $\epsilon'$ and $\eta$
\While{training is not finished}
\State For each agent $j$, get the current state $s$
\State For each agent $j$, let $u$ be a uniform random number between 0 and 1
 
 \If{$u$ < $\epsilon'$}
  
   \State Let $u'$ be a uniform random number between 0 and 1
    \If {$u'$ < $\eta$}
    
    \State Choose an advisor $ad^j$ using the high-level actor $\pi_{\theta^j}$, for the agent $j$, for the current state $s$, and use its action as the current action $a^{j}$
    
    \Else
    
    \State Set the advisor $ad^j$ as a random advisor from $Ad^j$ and use its action as the current action $a^{j}$
    \EndIf
    
    \ElsIf{$u$ > $\epsilon'$ and $u$ < $\epsilon$}
    
    \State Set the action $a^{j}$ as a random action from the action space $A^j$
    
    \Else
    
    \State Choose a greedy action $a^{j}$ from the low-level actor, $\pi_{\phi^j}$, using $s$
    
    \EndIf

\State Execute the joint action $\boldsymbol{a}$, observe joint reward $\boldsymbol{r}$ and the next state $s'$, where $\boldsymbol{a} = [a^1, \ldots, a^N]$ and $\boldsymbol{r} = [r^1, \ldots, r^N]$

\State For each agent $j$, obtain the joint actions of other agents $\boldsymbol{a}^{-j}$ (current observed actions of other agents) at state $s$

\State Set $y^j = r^j + \gamma \max_{a'^j} Q_{\phi^j}(s', \boldsymbol{a'}^{-j}, a'^j)$ according to Eq.~\ref{eq:control}

\State For each $j$, update the low-level critic by minimizing the loss $\mathcal{L}(\phi^j) = (y^j - Q_{\phi^j}(s, \boldsymbol{a}^{-j}, a^j))^2 $

\State For each $j$, calculate the advantage estimate using the relation $A(s,\boldsymbol{a}^{-j}, a^j) = y^j - \sum_{a^j} \pi_{\phi_{-}^j}(a^j|s) Q_{\phi^j}(s, \boldsymbol{a}^{-j}, a^j) $

\State For each $j$, update the low-level actor using the log loss 
$\mathcal{J}(\phi_{-}^j) = \log \pi_{\phi_{-}^j}(a^j|s)A(s, \boldsymbol{a}^{-j}, a^j)$

\State If an agent $j$ used an advisor $ad^j$, then update the advisor's $Q$-estimate.  

\State For each $j$, set $y^j = r^j + \gamma Q_{\theta^j}(s', \boldsymbol{a'}^{-j}, ad^j)$ according to Eq.~\ref{eq:evaluation}

\State Obtain the next actions for other agents $\boldsymbol{a'}^{-j}$ from the respective copies

\State For each $j$, update the high-level critic by minimizing the loss $\mathcal{L}(\theta^j) = (y^j - Q_{\theta^j}(s, \boldsymbol{a}^{-j}, ad^j))^2 $ where $ad^j$ is the advisor chosen by the agent $j$

\State For each $j$, calculate the advantage estimate using the relation $A(s,\boldsymbol{a}^{-j}, ad^j) = y^j - \sum_{ad^j} \pi_{\theta_{-}^j}(ad^j|s) Q_{\theta^j}(s, \boldsymbol{a}^{-j}, ad^j) $

\State For each $j$, update the high-level actor using the log loss 
$\mathcal{J}(\theta_{-}^j) = \log \pi_{\theta_{-}^j}(a^j|s) A(s,\boldsymbol{a}^{-j}, ad^j)$

\State Set the next state as the current state $s = s'$

\State At the end of each episode, linearly decay $\epsilon'$

\EndWhile

\end{algorithmic}
\end{algorithm}
 
\clearpage
\newpage

\section{Proof of Theorem~\ref{theorem:lowqconvergence}}\label{sec:convergence}

\begin{theorem2}
Given Assumptions~\ref{assumption:visitassumption}, \ref{assumption:learningrate}, \ref{assumption:globaloptimum}, the low-$Q$ values of an agent $j$ converges to its Nash $Q$ value in the limit ($t \xrightarrow{} \infty$). 
\end{theorem2}

\begin{proof}

Our proof will be along the lines of Theorem~3 in Subramanian et al.~\cite{Subramanian2022multiagent}.

Let us consider a lemma from prior work.

\begin{lemm}\label{lemma:randomprocess}

A random iterative process 

\begin{equation}\label{eq:deltaeq}
\begin{array}{l}
     \Delta_{t+1}(x) = (1 - \alpha_t(x))\Delta_t(x) + \alpha_t(x) F_t(x)
\end{array}{}
\end{equation}

\noindent where $x \in X$, $t = 0,1, \ldots, \infty$, converges to zero with probability one (w. p. 1) if the following properties hold: 
 
1. The set of possible states $X$ is finite. 

2. $0 \leq \alpha_t(x) \leq 1$, $\sum_t \alpha_t(x) = \infty$, $\sum_t \alpha^2_t(x) < \infty$ w. p. 1, where the probability is over the learning rates $\alpha_t$. 

3. $|| \E \{{F_t(x)|\mathscr{P}_t}\} ||_W \leq \mathscr{K} ||\Delta_t||_W + c_t$, where $\mathscr{K} \in [0,1)$ and $c_t$ converges to zero w. p. 1. 

4. $\textrm{\textbf{var}}\{F_t(x) | \mathscr{P}_t\} \leq K(1 + ||\Delta_t||_W)^2$, where $K$ is some constant. 

\noindent Here $\mathscr{P}_t$ is an increasing sequence of $\sigma$-fields that includes the past of the process.  In particular, we assume that $\alpha_t, \Delta_t, F_{t-1} \in \mathscr{P}_t$. The notation $||\cdot||_W$ refers to some (fixed) weighted maximum norm and the notation $\textrm{\textbf{var}}$ refers to the variance.

\end{lemm}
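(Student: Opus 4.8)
The plan is to prove this as a standard stochastic-approximation result by separating the driving term $F_t$ into a conditional-mean component and a zero-mean noise component, and then analyzing the two resulting recursions independently. First I would write $F_t(x) = G_t(x) + w_t(x)$, where $G_t(x) = \E\{F_t(x) \mid \mathscr{P}_t\}$ is the conditional mean and $w_t(x) = F_t(x) - G_t(x)$ is a martingale difference satisfying $\E\{w_t(x) \mid \mathscr{P}_t\} = 0$. By property~3, $\|G_t\|_W \le \mathscr{K}\|\Delta_t\|_W + c_t$ with $\mathscr{K} \in [0,1)$, and by property~4 the conditional variance of $w_t$ is bounded by $K(1 + \|\Delta_t\|_W)^2$. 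Correspondingly I would split $\Delta_t = \delta_t + u_t$, where $u_t$ collects the noise, $u_{t+1}(x) = (1-\alpha_t(x))u_t(x) + \alpha_t(x) w_t(x)$ with $u_0 = 0$, and $\delta_t$ collects the mean contribution, $\delta_{t+1}(x) = (1-\alpha_t(x))\delta_t(x) + \alpha_t(x) G_t(x)$ with $\delta_0 = \Delta_0$. Linearity of the recursion guarantees $\Delta_t = \delta_t + u_t$ for all $t$.

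The second step is to show the noise-driven part vanishes: $u_t(x) \to 0$ w.p.1 for every $x$. Because $w_t$ is a martingale difference and the step sizes obey the Robbins--Monro conditions of property~2 ($\sum_t \alpha_t = \infty$, $\sum_t \alpha_t^2 < \infty$), $u_t$ is a weighted martingale-difference accumulation whose growth is controlled by the accumulated conditional second moment $\sum_t \alpha_t^2 \,\E\{w_t^2 \mid \mathscr{P}_t\}$. I would invoke the standard martingale-convergence / quadratic-variation lemma for such processes: when the conditional variance of the driving noise is bounded by a constant, square-summability of the step sizes forces $u_t \to 0$ almost surely, while $\sum_t \alpha_t = \infty$ prevents the iterate from freezing prematurely.

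The third step handles the contraction. Using property~1 (finiteness of $X$, so the weighted max-norm is a maximum over finitely many coordinates) together with $u_t \to 0$ and $c_t \to 0$, I would run an iterated-shrinking argument on the mean part. Starting from any almost-sure bound $D_0 \ge \limsup_t \|\Delta_t\|_W$, the contraction $\|G_t\|_W \le \mathscr{K}\|\Delta_t\|_W + c_t$, combined with the convex-combination structure of the update (weight $1-\alpha_t$ on the old value and $\alpha_t$ on $G_t$) and $\sum_t \alpha_t = \infty$, drives $\limsup_t \|\delta_t\|_W$ down to a strictly smaller bound of the form $D_1 = (\mathscr{K} + o(1))D_0$. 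Iterating this step produces bounds $D_k \to 0$ geometrically since $\mathscr{K} < 1$, so $\limsup_t \|\delta_t\|_W = 0$; combined with $u_t \to 0$ this yields $\Delta_t \to 0$ w.p.1.

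The main obstacle is the coupling introduced by property~4: the noise variance bound $K(1 + \|\Delta_t\|_W)^2$ grows with the a~priori unbounded magnitude of the iterate, so the clean martingale argument of the second step cannot be applied directly --- it presupposes a constant variance bound. The crux is to first establish that $\Delta_t$ stays bounded w.p.1 by a self-referential argument: whenever $\|\Delta_t\|_W$ is large the contraction factor $\mathscr{K} < 1$ dominates the expected growth while the relative noise contribution (scaled by $\alpha_t \to 0$) remains controlled, so the iterate cannot escape to infinity; once uniform boundedness is secured, $K(1+\|\Delta_t\|_W)^2$ becomes a genuine constant bound and the second and third steps apply. Carefully interleaving this boundedness induction with the shrinking argument, rather than treating them as fully separate, is the delicate part of the proof.
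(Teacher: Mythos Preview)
The paper does not actually prove this lemma: its entire proof is the sentence ``Refer to Theorem~1 in Jaakkola et al.~\cite{jaakkola1994convergence} for proof.'' So there is no in-paper argument to compare against; the lemma is imported wholesale from the literature.

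Your sketch is a reasonable reconstruction of the standard argument underlying that cited result (and the closely related treatment in Bertsekas and Tsitsiklis). The decomposition $F_t = G_t + w_t$ into conditional mean plus martingale difference, the corresponding split $\Delta_t = \delta_t + u_t$, the use of square-summable step sizes and quadratic-variation bounds to kill the noise part, and the iterated shrinking-box argument driven by the contraction factor $\mathscr{K}<1$ are exactly the ingredients of the Jaakkola--Jordan--Singh proof. You also correctly flag the genuine subtlety: the variance bound in property~4 is state-dependent through $\|\Delta_t\|_W$, so one must first establish almost-sure boundedness of $\Delta_t$ before the martingale argument for $u_t$ goes through cleanly; in the cited literature this is handled either by a rescaling trick or by interleaving the boundedness and contraction steps, much as you describe. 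One small point worth tightening: your linear split $\Delta_t = \delta_t + u_t$ with $u_0=0$, $\delta_0=\Delta_0$ is exact, but the contraction bound $\|G_t\|_W \le \mathscr{K}\|\Delta_t\|_W + c_t$ feeds $\|\Delta_t\|_W$ (not $\|\delta_t\|_W$) into the $\delta$-recursion, so the two recursions remain coupled through $G_t$; the shrinking argument therefore has to be run on $\Delta_t$ directly (using $u_t\to 0$ as an asymptotically negligible perturbation) rather than on $\delta_t$ in isolation.
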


\begin{proof}

Refer to Theorem~1 in Jaakola et al.~\cite{jaakkola1994convergence} for proof. 

\end{proof}

Across this section, since we are only focusing on the low-$Q$ values, with a small abuse of notation, we will use $Q$ to denote the low-$Q$ values. Now, we define a Nash operator $P_t$, using the following equation, 

\begin{equation}\label{eq:nashoperator}
\begin{array}{l}
    P_t Q^k(s, \boldsymbol{a})
    = \E_{s' \sim p} [r^k_t(s,\boldsymbol{a})  + \gamma \pi^1_{*} (s') \cdots \pi^n_{*}(s') Q^k(s')]
    \end{array}
\end{equation}

\noindent where $s'$ is the state at time $t+1$, $(\pi^1_{*} (s') ,\ldots, \pi^n_{*}(s'))$ is the Nash equilibrium solution for the stage game $(Q^1(s'), \ldots, Q^n(s'))$, and $p$ is the transition function. $Q^k$ denotes the $Q$-value of a representative agent $k$.

\begin{lemm}\label{lemm:nashoperator}
Under Assumption~\ref{assumption:globaloptimum}, the Nash operator as defined in Eq.~\ref{eq:nashoperator} forms a contraction mapping with the fixed point being the Nash $Q$-value of the game. 

\end{lemm}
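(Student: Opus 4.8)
The plan is to prove Lemma~\ref{lemm:nashoperator} by establishing the two claimed properties in turn: that $P_t$ is a $\gamma$-contraction in the sup-norm (or a suitable weighted max-norm) over the space of $Q$-value tuples, and that its unique fixed point is precisely the Nash $Q$-value of the game. I would first fix notation: let $\mathcal{Q} = (Q^1,\dots,Q^n)$ and $\hat{\mathcal{Q}} = (\hat Q^1,\dots,\hat Q^n)$ be two tuples of action-value functions, and write $\pi_*(s')$ for the Nash equilibrium of the stage game defined by $\mathcal{Q}(s')$ and $\hat\pi_*(s')$ for the Nash equilibrium of the stage game defined by $\hat{\mathcal{Q}}(s')$. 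The quantity to bound is $\| P_t \mathcal{Q} - P_t \hat{\mathcal{Q}} \| = \gamma \max_k \max_{s,\boldsymbol a} | \E_{s'}[ \pi_*^1(s')\cdots\pi_*^n(s') Q^k(s') - \hat\pi_*^1(s')\cdots\hat\pi_*^n(s') \hat Q^k(s') ] |$, where I abbreviate the bilinear expected-value-under-joint-policy expression as $\pi_*(s')\,Q^k(s')$.

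The key step is the algebraic inequality, standard in the Nash-$Q$ literature (Hu and Wellman, and reused by Subramanian et al.), that for each state $s'$,
\begin{equation*}
| \pi_*(s')\,Q^k(s') - \hat\pi_*(s')\,\hat Q^k(s') | \leq \max_k \max_{s'} | Q^k(s') - \hat Q^k(s') |.
\end{equation*}
This is where Assumption~\ref{assumption:globaloptimum} is essential: because the Nash equilibrium is a global optimum or a saddle point of every stage game, the scalar value $\pi_*(s')\,Q^k(s')$ equals either $\max_{\pi} \pi\, Q^k(s')$ (in the global-optimum/coordination case) or a minimax value $\max_{\pi^k}\min_{\pi^{-k}} \pi\, Q^k(s')$ (in the saddle case), both of which are non-expansive operators on $Q^k$. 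I would handle the two cases separately: add and subtract $\pi_*(s')\,\hat Q^k(s')$, bound one difference by $\|Q^k(s') - \hat Q^k(s')\|$ directly (since $\pi_*(s')$ is a fixed distribution), and bound the other — $|\pi_*(s')\,\hat Q^k(s') - \hat\pi_*(s')\,\hat Q^k(s')|$ — using the optimality/saddle characterization, which shows the value function of a stage game is Lipschitz-1 in its payoff entries. Combining, taking expectation over $s'\sim p$ (which only contracts), and the leading factor $\gamma < 1$ gives $\|P_t\mathcal{Q} - P_t\hat{\mathcal{Q}}\| \le \gamma \|\mathcal{Q} - \hat{\mathcal{Q}}\|$, so $P_t$ is a contraction; by the Banach fixed-point theorem it has a unique fixed point.

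For the fixed-point identification, I would observe that $\mathcal{Q}_* = (Q^1_*,\dots,Q^n_*)$, the Nash $Q$-values, satisfy $Q^k_*(s,\boldsymbol a) = \E_{s'}[r^k(s,\boldsymbol a) + \gamma\, v^k(s';\boldsymbol\pi_*)]$, and that under the joint Nash policy $\boldsymbol\pi_*$ the continuation value $v^k(s';\boldsymbol\pi_*)$ equals $\pi^1_*(s')\cdots\pi^n_*(s')\,Q^k_*(s')$ where $\pi_*(s')$ is the Nash equilibrium of the stage game $(Q^1_*(s'),\dots,Q^n_*(s'))$ — this consistency between the global Nash policy and the per-stage Nash equilibria is exactly the content of the Hu--Wellman fixed-point result cited earlier in the Background section. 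Hence $P_t \mathcal{Q}_* = \mathcal{Q}_*$, and by uniqueness of the fixed point, $\mathcal{Q}_*$ is the fixed point. I expect the main obstacle to be the saddle-point case of the non-expansiveness inequality: verifying carefully that $|\mathrm{val}(M) - \mathrm{val}(M')| \le \|M - M'\|_\infty$ for the minimax value of a matrix game requires a short but slightly delicate argument (pick near-optimal strategies for one game and evaluate them in the other), and making sure the argument is uniform across both the global-optimum and saddle branches so that a single weighted max-norm works for the whole tuple. The rest — linearity of expectation, the $\gamma$ factor, and invoking Banach — is routine.
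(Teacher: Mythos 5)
Your proposal is correct and takes essentially the same route as the paper, which does not prove this lemma itself but simply defers to Theorem~17 of Hu and Wellman; your argument (non-expansiveness of the stage-game Nash value under the global-optimum/saddle-point assumption, plus identification of the fixed point via the Bellman consistency of the Nash value function) is precisely the content of that cited result. The one delicate point you flag — keeping the global-optimum and saddle-point branches consistent between the two $Q$-tuples being compared — is a known subtlety already present in the original Hu--Wellman argument, so you introduce no gap beyond what the paper inherits from its citation.
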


\begin{proof}
See Theorem~17 of Hu and Wellman~\cite{hu2003nash}. 
\end{proof}

Now, since the $P_t$ operator forms a contraction mapping, $||P_t Q^j - P_t Q^j_*|| \leq \gamma || Q^j - Q^j_*||$, is satisfied for some $\gamma \in [0,1)$ and all $Q^j$. Here $Q^j_*$ is the Nash $Q$-value of the agent $j$.

The objective is to apply Lemma~\ref{lemma:randomprocess} to show that the low-$Q$ in MATLQL converge to the Nash $Q$ values.

The first two conditions of Lemma~\ref{lemma:randomprocess} are satisfied from the Assumption~\ref{assumption:visitassumption} and Assumption~\ref{assumption:learningrate}. Now, comparing Eq.~\ref{eq:deltaeq} and Eq.~\ref{eq:control} we get that $x$ can be associated with the state joint action pairs $(s,\boldsymbol{a}) $ and  $\Delta_t(s_t,a_t)$ can be associated with $Q^j_t(s,\boldsymbol{a}) - Q^j_*(s,\boldsymbol{a})$. Here, $Q^j_*(s,\boldsymbol{a})$ is the Nash $Q$ value of the agent $j$. 

Now we get 

\begin{equation}
    \begin{array}{l}
         \Delta_{t+1}(x) = (1 - \alpha_t(x)) \Delta_t(x) + 
         \alpha_t(x)F_t(x), 
    \end{array}{}
\end{equation}

\noindent where 

\begin{equation}\label{eq:contraction}
    \begin{array}{l}
         F_t(x) = r^j_t + \gamma v^{Nash,j}(s_{t+1}) 
         - Q^j_*(s_t,\boldsymbol{a_t})  
            \\  \quad \quad \quad + \gamma[\max_{a^j}Q^j_t(s_{t+1}, \boldsymbol{a}_{t+1}) - v^{Nash, j}(s_{t+1})] 
         \\ \\
        \delequal r^j_t + \gamma v^{Nash,j}(s_{t+1})  - Q^j_*(s_t,\boldsymbol{a}_t)  +  C^j_t(s_t,\boldsymbol{a}_t)
        \\ \\ 
         \delequal  F_t^{Q,j}(s_t,\boldsymbol{a}_t) + C^j_t(s_t,\boldsymbol{a}_t) 
         
     \end{array}{}
\end{equation}{}

The Nash value function $v^{Nash,j}(s)$ of an agent $j$ is defined as the expected cumulative discounted future rewards obtained by the agent $j$, given that all agents follow the Nash policy $\boldsymbol{\pi}_*$.

Here, we set, $F_t(s_t,\boldsymbol{a}_t) = F_t^{Q,j}(s_t,\boldsymbol{a}_t)$ = $C^j_t(s_t,\boldsymbol{a}_t) = 0 $ if $(s,\boldsymbol{a}) \neq (s_t, \boldsymbol{a}_t) $. Let the $\sigma$-field generated by all the random variables provided by $ (s_t, \alpha_t, a^1_t, \ldots, a^n_t, r_{t-1}, \ldots, s_1, \alpha_1, a_1, Q_0 )$, be represented by $\mathscr{P}_t$. Now, all the $Q$-values are $\mathscr{P}_t$ measurable which makes $\Delta_t$ and $F_t$, $\mathscr{P}_t$ measurable and this satisfies the measurability condition of Lemma~\ref{lemma:randomprocess}.

Hu and Wellman~\cite{hu2003nash} proved that the result, $v^{Nash,j}(s_{t+1}) \triangleq v^{j}(s', \pi^1_*, \ldots, \pi^n_*) = \pi^1_*(s') \cdots \pi^n_{*}(s') Q^j_*(s')$ holds (see the proof in Lemma 10 of \cite{hu2003nash}). Hence, from Lemma \ref{lemm:nashoperator}, we can show that the $\E[F^{Q}_t]$ forms a contraction mapping. This can be done using the fact that $\E(P_t Q_*) = Q_*$ (refer to Lemma 11 in  \cite{hu2003nash}). Here, the norm is the maximum norm on the joint action. 

Now, we have the following for all $t$,

\begin{equation}
    \begin{array}{l}
         ||\E[F_t^{Q,j}(s_t,\boldsymbol{a}_t) | \mathscr{P}_t] ||  \leq \gamma||Q^{j}_t(s_t,\boldsymbol{a}_t) 
         - Q^j_*(s_t,\boldsymbol{a_t}) || = \gamma||\Delta_t||
    \end{array}{}
\end{equation}

\noindent Now from Eq. \ref{eq:contraction},
\begin{equation}
    \begin{array}{l}
         ||\E[F_t(s_t,\boldsymbol{a}_t) | \mathscr{P}_t] || \\ \\ \leq
         ||\E[ F_t^{Q,j}(s_t,\boldsymbol{a}_t) | \mathscr{P}_t] ||
        +  ||\E [C^j_t (s_t,\boldsymbol{a}_t) | \mathscr{P}_t] ||
         \\ \\
        \leq \gamma ||\Delta_t || + || \E[C^j_t(s_t,\boldsymbol{a}_t)|\mathscr{P}_t]||
    \end{array}
\end{equation}

\noindent This satisfies the third condition of  Lemma~\ref{lemma:randomprocess}, provided that $c_t = || \E[C^j_t(s_t,\boldsymbol{a}_t)|\mathscr{P}_t]|| $ converges to 0 with probability 1 (w. p. 1.).  From the definition of $C^j_t$ and the Assumption 3, it can be shown that the value of $C^j_t$ converges to 0 in the limit of time (see Theorem~3 in Subramanian et al.~\cite{Subramanian2022multiagent}). \\

Thus, it follows from Lemma~\ref{lemma:randomprocess} that the process $\Delta_t$ converges to 0 and hence, low-$Q$ value for an agent $j$, converges to Nash $Q$ value $Q^j_*$.

\end{proof}

\section{Proof of Theorem~\ref{theorem:lowqsamplecomplexity}}\label{sec:polynomialproof}

In this section, we give the proof for Theorem~\ref{theorem:lowqsamplecomplexity}. For providing this bound we use the notion of covering time $L$. The covering time means that within $L$ steps from any start state, all state-joint action pairs are performed at least once by all agents. Similar to Dar and Mansour~\cite{Dar2003learning}, we clarify that we do not need to assume that the state-joint action pairs are being generated by any particular strategy. Same as in Appendix~\ref{sec:convergence}, across this section, we will use $Q$ to denote the low-$Q$ values. For a representative agent $j$, we will focus on the value of $r^j_t = ||Q^j_t - Q^j_*||$ and the aim is to bound the time until $r^j_t \leq \epsilon$. Here the norm denotes the maximum difference of the $Q$ values across all states and joint actions. The proof of this theorem follows the Theorem~4 in Dar and Mansour~\cite{Dar2003learning}. While the work of Dar and Mansour was restricted to single-agent MDPs, our result extends the analysis of Dar and Mansour to the general-sum stochastic game setting.

In line with the Eq.~\ref{eq:lowlevelq}, let us consider a stochastic iterative process of the form,  

\begin{equation}\label{eq:stochastictechnique}
    X^j_{t+1}(i) = (1 - \alpha_t(i))X^j_t(i) + \alpha_t(i)((H_t X^j_t)(i) + w^j_t(i)).
\end{equation}


As mentioned in Section~\ref{sec:theoriticalresults}, let us specify $\beta = \frac{1 - \gamma}{2}$. Also, consider a constant $Q^j_{\max}$ where the $Q^j_{\max}$ denotes the maximum low-$Q$ value possible to be obtained in the stochastic game by the agent $j$. Hence, the relation $||X_0|| \leq Q^j_{\max}$ holds. Further, let us consider a sequence $D^j_k$, with $D^j_1 = Q^j_{\max}$  and $D^j_{k+1} = (1-\beta) D^j_k$ for all $k \geq 1$. By the nature of this construction, the sequence $D^j_k$ is guaranteed to converge to 0, since at each step the value of $D^j_k$ is being continuously multiplied by a fractional value. Now we can prove the following result. 

\begin{lemm}\label{lemm:boundonX_t}
For every $k$, there exists a time $\tau_k$ such that, for any $t \geq \tau_k$ we have $||X^j_t|| \leq D^j_k$. 
\end{lemm}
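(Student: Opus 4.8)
\textbf{Proof proposal.} The plan is to argue by induction on $k$, adapting the decomposition argument of Dar and Mansour~\cite{Dar2003learning} (their Theorem~4) from the single-agent Bellman operator to the operator $H_t$ appearing in Eq.~\ref{eq:stochastictechnique}, which --- after the shift $X^j_t = Q^j_t - Q^j_*$ --- inherits the $\gamma$-contraction toward the origin established for the Nash operator in Lemma~\ref{lemm:nashoperator}. Throughout I would use that $w^j_t$ is a conditionally zero-mean noise with bounded second moment (the difference between the sampled Nash backup and its conditional expectation, bounded because the rewards are bounded by Assumption~\ref{assumption:visitassumption}) and that the step sizes satisfy Assumption~\ref{assumption:learningrate}. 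The base case $k=1$ is immediate with $\tau_1 = 0$: $\|X^j_0\| \le Q^j_{\max} = D^j_1$, and since $\|H_t X^j_t\| \le \gamma \|X^j_t\|$, the iterates never leave the ball of radius $Q^j_{\max}$.

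For the inductive step, suppose $\|X^j_t\| \le D^j_k$ for all $t \ge \tau_k$. First I would freeze the window $[\tau_k,\infty)$ and split each coordinate as $X^j_t(i) = W_{1,t}(i) + W_{2,t}(i)$, where $W_{1,t}$ solves the deterministic majorant recursion $W_{1,t+1}(i) = (1-\alpha_t(i)) W_{1,t}(i) + \alpha_t(i)\gamma D^j_k$ with $W_{1,\tau_k}(i) = D^j_k$, and $W_{2,t}$ solves the noise-only recursion $W_{2,t+1}(i) = (1-\alpha_t(i)) W_{2,t}(i) + \alpha_t(i) w^j_t(i)$ with $W_{2,\tau_k}(i) = 0$. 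Using $|(H_t X^j_t)(i)| \le \gamma\|X^j_t\| \le \gamma D^j_k$, a routine induction on $t$ gives $|X^j_t(i)| \le W_{1,t}(i) + |W_{2,t}(i)|$. The deterministic term $W_{1,t}$ decreases monotonically from $D^j_k$ to its fixed point $\gamma D^j_k = (1-2\beta) D^j_k$ (using $\beta = (1-\gamma)/2$ and $\sum_t \alpha_t(i) = \infty$), hence $W_{1,t}(i) \le (1-2\beta)D^j_k + \tfrac{\beta}{2} D^j_k$ for all sufficiently large $t$.

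The remaining --- and principal --- task is to show that the weighted martingale sum $W_{2,t}$ can be forced below $\tfrac{\beta}{2} D^j_k$ uniformly from some time on. Here I would combine infinite visitation (Assumption~\ref{assumption:visitassumption}, which through the covering time $L$ guarantees each state--joint-action pair is resampled arbitrarily often), square-summability $\sum_t \alpha^2_t(i) < \infty$ (Assumption~\ref{assumption:learningrate}), and an Azuma--Hoeffding bound exactly as in~\cite{Dar2003learning, jaakkola1994convergence} to produce a finite (high-probability) time $\tau_{k+1} \ge \tau_k$ after which $\|W_{2,t}\| \le \tfrac{\beta}{2} D^j_k$. Adding the two estimates yields, for all $t \ge \tau_{k+1}$,
\[
\|X^j_t\| \le (1-2\beta)D^j_k + \tfrac{\beta}{2}D^j_k + \tfrac{\beta}{2}D^j_k = (1-\beta)D^j_k = D^j_{k+1},
\]
which closes the induction.

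The delicate point is the control of $W_{2,t}$: one must track how the weights $\alpha^\omega_t$ spread mass across a covering-time window of length $\Theta(L)$ and then apply the concentration inequality over the correct horizon. For the present lemma only the qualitative existence of $\tau_k$ is required, so it is enough that $W_{2,t} \to 0$ almost surely, which already follows by applying Lemma~\ref{lemma:randomprocess} to the noise-only recursion (taking $\mathscr{K}=0$ and $c_t \equiv 0$ there); the quantitative version of this step --- which is where the dependence on $L$, $\omega$, $Q^j_{\max}$, $\delta$, and $\epsilon$ arises --- is what will be carried out in the subsequent lemmas en route to Theorems~\ref{theorem:lowqsamplecomplexity} and~\ref{theorem:linearlearningrate}.
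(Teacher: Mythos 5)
Your proposal is correct and follows essentially the same route as the paper, which for this lemma simply cites Theorem~8 of Dar and Mansour: your induction on $k$ with the split $X^j_t = W_{1,t} + W_{2,t}$ is exactly the $Y^j_{t;\tau_k}$/$W^j_{t;\tau_k}$ decomposition the paper itself deploys in the subsequent lemmas (its Lemma~\ref{lemma:boundlemma} through Lemma~\ref{lemm:Wboundlemma}), with the deterministic majorant contracting to $\gamma D^j_k$ and the zero-mean weighted noise sum driven to zero via Lemma~\ref{lemma:randomprocess} (or, quantitatively, Azuma). The only nitpick is the base case: the claim that the iterates never leave the ball of radius $Q^j_{\max}$ needs the bound $|w^j_t|\leq Q^j_{\max}$ folded into the one-step estimate (or the standard induction showing $|Q^j_t|\leq R^j_{\max}/(1-\gamma)$ directly), not the contraction of $H_t$ alone.
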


\begin{proof}
See Theorem~8 in Dar and Mansour~\cite{Dar2003learning}. 
\end{proof}

The Lemma~\ref{lemm:boundonX_t} guarantees that at time $t \geq \tau_k$, for any $i$ the value of $||X^j_t(i)||$ is in the interval $[-D^j_k, D^j_k]$.

We can state the following lemma, where we bound the number of iterations until $D^j_i \leq \epsilon$.

\begin{lemm}\label{lemm:Dkbound}
For $m \geq \frac{1}{\beta} \ln(Q^j_{\max}/\epsilon)$ we have $D^j_m \leq \epsilon$. 

\end{lemm}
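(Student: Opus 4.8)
The plan is to unroll the recursion that defines the sequence $D^j_k$ and then apply the elementary inequality $1-x \le e^{-x}$; this is essentially the argument in Even-Dar and Mansour~\cite{Dar2003learning}, specialized to the constants introduced here.

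First I would note that, since $D^j_1 = Q^j_{\max}$ and $D^j_{k+1} = (1-\beta)D^j_k$ for all $k\ge 1$, a one-line induction gives the closed form $D^j_k = (1-\beta)^{k-1} Q^j_{\max}$, so in particular $D^j_m = (1-\beta)^{m-1} Q^j_{\max}$. Recall that $\beta = (1-\gamma)/2 \in (0,\tfrac12)$, hence $1-\beta \in (\tfrac12,1)$ is a genuine contraction factor (this is the source of the geometric decay of $D^j_k$ already observed when the sequence was introduced). Next I would bound the geometric factor using $1-\beta \le e^{-\beta}$ — equivalently $\ln(1-\beta) \le -\beta$, valid for $\beta\in[0,1)$ — to get
\begin{equation}
D^j_m = (1-\beta)^{m-1}\,Q^j_{\max} \;\le\; e^{-\beta(m-1)}\,Q^j_{\max}.
\end{equation}
Imposing that the right-hand side be at most $\epsilon$ and taking logarithms — being careful about the sign, since we divide through by the positive quantity $\beta$ — yields the sufficient condition $m-1 \ge \tfrac1\beta \ln\!\big(Q^j_{\max}/\epsilon\big)$, i.e. $m \ge \tfrac1\beta \ln\!\big(Q^j_{\max}/\epsilon\big) + 1$. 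Thus for any $m$ at least the threshold claimed in the statement (absorbing the additive constant, which is immaterial for the order of the bound and for how the lemma is used) we obtain $D^j_m \le \epsilon$, as required.

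I do not expect a genuine obstacle here: the only points requiring care are the direction of the inequality when passing to logarithms, because $1-\beta<1$ makes $\ln(1-\beta)$ negative, and the harmless off-by-one between $m$ and $m-1$. Neither affects the conclusion — the lemma is invoked in the proof of Theorem~\ref{theorem:lowqsamplecomplexity} only to certify that after $O\!\big(\tfrac1\beta \ln(Q^j_{\max}/\epsilon)\big)$ shrinkage stages the error envelope $D^j_k$ has fallen below $\epsilon$, which is then combined with the per-stage bounds on $\tau_k$ from Lemma~\ref{lemm:boundonX_t}.
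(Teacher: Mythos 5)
Your proof is correct and follows essentially the same route as the paper: unroll the recursion to $D^j_m = (1-\beta)^{m-1}Q^j_{\max}$ and bound the geometric factor via $1-\beta \le e^{-\beta}$, which is the rigorous one-sided version of the paper's step of expanding $-\ln(1-\beta)$ as a power series and "omitting the higher powers." The off-by-one you flag is also present (silently) in the paper's own computation, which writes $(1-\beta)^m$ rather than $(1-\beta)^{m-1}$, and as you note it is immaterial to how the lemma is used.
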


\begin{proof}
Since we have that $D^j_1 = Q^j_{\max}$ and $D^j_i = (1-\beta)D^j_{i-1}$, we will need a $m$ that satisfies $D^j_m = Q^j_{\max}(1-\beta)^m \leq \epsilon$. By taking a logarithm on both sides, we get 
\begin{equation}
    \begin{array}{l}
      \ln \Big( (Q^j_{\max})(1-\beta)^m \Big) \leq \ln (\epsilon) 
    \\ \\
    \ln (Q^j_{\max}) + m \ln (1-\beta) \leq \ln (\epsilon)
      \\  \\ 
      \implies 
      \ln (Q^j_{\max}) - \ln (\epsilon) \leq - m \ln (1-\beta) 
      \\ \\ 
      \implies
     \ln(Q^j_{\max}/\epsilon) \leq m \sum_{k=0}^\infty \beta^k/k
     \\ \\ 
     \implies
     1/\beta\ln(Q^j_{\max}/\epsilon) \leq m. 
    \end{array}
\end{equation}

\noindent In the last step we omit the higher powers since $\beta$ is a small fraction. This proves the result. 
\end{proof}

Now, we define a sequence of times $\tau_k$, with reference to the MA-TLQL low-level $Q$-updates with a polynomial learning rate. Let us define $\tau_{k+1} = \tau_k + L \tau^\omega_k$. Here the term $\omega$ denotes the decay factor of the learning rate with $\alpha^\omega_t = \frac{1}{(t+1)^\omega}$ for $\omega \in (1/2, 1)$. The term $L \tau^\omega_k$ specifies the number of steps needed to update each state joint action pair at least $\tau^\omega_k$ times. The time between the $\tau_k$ and $\tau_{k+1}$ is denoted as the $k^{\textrm{th}}$ iteration. Now we provide a definition for the number of times a state joint action pair is visited. 

\begin{defn}
Let $n(s, \boldsymbol{a},t_1, t_2)$ be the number of times that the state joint action pair $(s, \boldsymbol{a})$ was performed in the time step interval $[t_1, t_2]$. 
\end{defn}

Before providing the suitable bounds, we would like to provide some equations that relate the stochastic iterative technique given in Eq.~\ref{eq:stochastictechnique} with the $Q$-update in Eq~\ref{eq:lowlevelq}.

First we provide a formal definition of a stochastic game, that will be useful for our further analysis. Consider a stochastic game that can be defined as follows: 

\begin{defn}\label{defn:stochasticgame}
A stochastic game is defined as $\langle \mathcal{S},N,\mathbf{A},P,\mathbf{R}, \gamma \rangle $ where
$\mathcal{S}$ is a finite set of states, $N$ is the finite set of agents, $|N|=n$, and $\mathbf{A}=A^1\times\ldots \times A^n$ is the set of joint actions, where $A^j$ is the finite action set of an agent $j$, and  $\boldsymbol{a}=(a^1,\ldots,a^n)\in \mathbf{A}$ is the joint action where an agent $j$ takes action $a^j\in A^j$. Furthermore, $P_{i,k}(\boldsymbol{a}):S\times\mathbf{A}\times S\mapsto [0,1]$ is the transition function that provides the probability of reaching state $k$ from state $i$ when all agents are performing the joint action $\boldsymbol{a} \in \mathbf{A}$ in state $i$, $\boldsymbol{R}(s, \boldsymbol{a}) = \{R^1(s, \boldsymbol{a}), \ldots, R^n(s, \boldsymbol{a})\}$ is the set of reward functions, where  $R^j (s, \boldsymbol{a}):S\times\mathbf{A}\mapsto\mathbb{R}^n$ is the reward function of the agent $j$, and $\gamma$ is the discount factor satisfying $0\leq\gamma<1$.
\end{defn}

Towards the same, we define an operator $H$ that can be represented as 

\begin{equation}
    \begin{array}{l}
         (HQ^j)(i,\boldsymbol{a}) =  \sum_{k=0}^{|S|} P_{ik}(\boldsymbol{a}) (R^j(i,\boldsymbol{a}) + \gamma \max_{b^j \in A^j} Q^j(k, \boldsymbol{b})).
    \end{array}
\end{equation}

\noindent Here the $\boldsymbol{b} = \{b^j, b^{-j}\} $, where $b^{-j}$ denotes the joint action of all agents except the agent $j$.

Rewriting the $Q$-function with $H$ we get, 

\begin{equation}
\begin{array}{l}
    Q^j_{t+1}(i,\boldsymbol{a}) = (1-\alpha_t(i,\boldsymbol{a})) Q^j_t(i,\boldsymbol{a}) +  \alpha_t(i,\boldsymbol{a})((HQ^j_t)(i,\boldsymbol{a}) + w^j_t(i,\boldsymbol{a})). 
    \end{array}
\end{equation}

Let $\overline{i}$ be the state that is reached by performing joint action $\boldsymbol{a}$ at time $t$ in state $i$ and $r^j(i,\boldsymbol{a})$ be the reward observed by the agent $j$ at state $i$; then 

\begin{equation}
    \begin{array}{l}
         w^j_t(i,\boldsymbol{a}) = 
         r^j(i,\boldsymbol{a}) + \gamma \max_{b^j \in A^j} Q^j_t(\overline{i}, \boldsymbol{b})  \\ \quad \quad \quad \quad \quad - \sum_{k=0}^{|S|} P^j_{ik}(\boldsymbol{a}) \Big( R^j(i,\boldsymbol{a}) + \gamma \max_{b^j \in A^j} Q^j_t(k,\boldsymbol{b}) \Big). 
    \end{array}
\end{equation}

From this construction $w^j_t$ is bounded by $Q^j_{\max}$ for all $t$ and has zero expectation. Further we will define two other sequences $W^j_{t;\tau}$ and $Y^j_{t; \tau}$ where $\tau$ represents some initial time. These are given by the following equations. 

\begin{equation}
    W^j_{t+1; \tau_k}(s, \boldsymbol{a}) = (1 - \alpha^\omega_t(i)) W^j_{t;\tau_k}(s, \boldsymbol{a}) + \alpha^\omega_t(i)w^j_t(s, \boldsymbol{a})
\end{equation}

where $W^j_{\tau_k;\tau_k}(s, \boldsymbol{a}) = 0$. The value of $W^j_{t;\tau_k}$ bounds the contributions of $w^j_t(s, \boldsymbol{a})$, to the value of $Q^j_t$, starting from an arbitrary $\tau_k$. Now we have 

\begin{equation}\label{eq:defnY}
    Y^j_{t+1; \tau_k}(s, \boldsymbol{a}) = (1 - \alpha^\omega_t(i))Y^j_{t;\tau_k}(s, \boldsymbol{a}) + \alpha^\omega_t(s, \boldsymbol{a}) \gamma D^j_k
\end{equation}

\noindent  where $Y^j_{\tau_k; \tau_k} = D^j_k$. 

Next, we can state a lemma that will bound the $Q$-functions w.r.t the sequences $W^j_{t;\tau_k}$ and $Y^j_{t; \tau_k}$.

\begin{lemm}\label{lemma:boundlemma}
For every state $s$ and joint action $\boldsymbol{a}$ and time $\tau_k$, we have 
\begin{equation}
    \begin{array}{l}
         -Y^j_{t;\tau_k}(s, \boldsymbol{a}) + W^j_{t;\tau_k}(s, \boldsymbol{a}) \\ \\ \leq Q^j_*(s, \boldsymbol{a}) - Q^j_t(s, \boldsymbol{a}) \\ \\ \leq Y^j_{t;\tau_k}(s, \boldsymbol{a}) + W^j_{t;\tau_k}(s, \boldsymbol{a})
    \end{array}
\end{equation}
\end{lemm}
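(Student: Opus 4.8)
The plan is to prove Lemma~\ref{lemma:boundlemma} by induction on $t$ (for $t \ge \tau_k$), unrolling the stochastic low-$Q$ recursion one step at a time, with the two auxiliary sequences $Y^j_{t;\tau_k}$ and $W^j_{t;\tau_k}$ playing complementary roles: $Y$ accumulates the deterministic ``contraction error'' $\gamma D^j_k$, while $W$ accumulates the mean-zero sampling noise $w^j_t$. The three facts I would rely on are: (i) $Q^j_*$ is the fixed point of the operator $H$, i.e.\ $HQ^j_* = Q^j_*$ (the Bellman/Nash characterization of the Nash $Q$-value, available under Assumption~\ref{assumption:globaloptimum} as in Lemma~\ref{lemm:nashoperator}); (ii) $H$ is a $\gamma$-contraction in the max-norm over state--joint-action pairs, so $\| HQ^j - HQ^j_* \|_\infty \le \gamma \| Q^j - Q^j_* \|_\infty$, which follows because only the agent's own action is maximized inside $H$ and $\sum_k P_{ik}(\boldsymbol{a}) = 1$; and (iii) Lemma~\ref{lemm:boundonX_t}, which supplies the uniform bound $\| Q^j_t - Q^j_* \|_\infty \le D^j_k$ for every $t \ge \tau_k$ --- this is precisely what lets the contraction error be pinned at the scalar value $\gamma D^j_k$. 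Note also that the recursion Eq.~\ref{eq:defnY} together with $D^j_k > 0$ and $0 \le \alpha^\omega_t \le 1$ gives $Y^j_{t;\tau_k}(s,\boldsymbol{a}) \ge 0$ for all $t$, so the two-sided bound is meaningful.

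For the base case $t = \tau_k$, we have $W^j_{\tau_k;\tau_k}(s,\boldsymbol{a}) = 0$ and $Y^j_{\tau_k;\tau_k}(s,\boldsymbol{a}) = D^j_k$, and the asserted inequalities collapse to $| Q^j_*(s,\boldsymbol{a}) - Q^j_{\tau_k}(s,\boldsymbol{a}) | \le D^j_k$, which is exactly Lemma~\ref{lemm:boundonX_t} at time $\tau_k$. For the inductive step, assume the sandwich holds at $t$. Writing the low-$Q$ update in the $H$-form of Eq.~\ref{eq:stochastictechnique} and subtracting $Q^j_* = HQ^j_*$ gives
\[
Q^j_*(s,\boldsymbol{a}) - Q^j_{t+1}(s,\boldsymbol{a}) = (1-\alpha^\omega_t)\bigl(Q^j_*(s,\boldsymbol{a}) - Q^j_t(s,\boldsymbol{a})\bigr) + \alpha^\omega_t\bigl((HQ^j_*)(s,\boldsymbol{a}) - (HQ^j_t)(s,\boldsymbol{a})\bigr) - \alpha^\omega_t\, w^j_t(s,\boldsymbol{a}).
\]
By (ii) and (iii) the middle summand lies in $[-\alpha^\omega_t \gamma D^j_k,\ \alpha^\omega_t \gamma D^j_k]$. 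For the upper bound, apply the induction hypothesis to the first summand (legitimate since $1-\alpha^\omega_t \ge 0$), bound the middle summand by $\alpha^\omega_t \gamma D^j_k$, and regroup the terms into a ``$Y$-part'' $(1-\alpha^\omega_t)Y^j_{t;\tau_k}(s,\boldsymbol{a}) + \alpha^\omega_t \gamma D^j_k$ and a ``$W$-part'' carrying the $(1-\alpha^\omega_t)W^j_{t;\tau_k}(s,\boldsymbol{a})$ term together with the noise term; by the defining recursions Eq.~\ref{eq:defnY} and the one for $W^j_{t;\tau_k}$ these are exactly $Y^j_{t+1;\tau_k}(s,\boldsymbol{a})$ and $W^j_{t+1;\tau_k}(s,\boldsymbol{a})$, which proves $Q^j_*(s,\boldsymbol{a}) - Q^j_{t+1}(s,\boldsymbol{a}) \le Y^j_{t+1;\tau_k}(s,\boldsymbol{a}) + W^j_{t+1;\tau_k}(s,\boldsymbol{a})$. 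The lower bound is obtained symmetrically, using the other side of the induction hypothesis and the lower end $-\alpha^\omega_t \gamma D^j_k$ of the contraction term.

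The main obstacle is bookkeeping rather than anything conceptual. First, one must keep the sign of the noise term $w^j_t$ consistent between the low-$Q$ update and the recursion defining $W^j_{t;\tau_k}$, so that the ``$W$-part'' in the inductive step collapses cleanly onto $W^j_{t+1;\tau_k}$ (equivalently, fix the convention for $w^j_t$ once and use it everywhere); since $w^j_t$ is zero-mean and bounded by $Q^j_{\max}$, the direction of this sign is immaterial for the concentration argument used later in Appendix~\ref{sec:polynomialproof}. Second, one must be careful that the max-norm contraction in (ii) is inserted coordinate-wise at the single pair $(s,\boldsymbol{a})$; this is harmless precisely because $\gamma D^j_k$ is a uniform scalar bound valid at every coordinate, courtesy of (iii). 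Finally, invoking Lemma~\ref{lemm:boundonX_t} inside the induction effectively restricts attention to the ``$k$-th iteration'' window $t \in [\tau_k,\tau_{k+1}]$, where the bound $\|Q^j_t - Q^j_*\|_\infty \le D^j_k$ is in force; the passage from the single-agent MDP analysis of Dar and Mansour~\cite{Dar2003learning} to the present general-sum stochastic game is immediate once the agent $j$ is fixed and scalar actions are replaced by joint actions, since $H$ and its contraction property then take formally the same shape.
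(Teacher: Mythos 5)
Your proof is correct, but it is worth noting that the paper does not actually prove this lemma at all: its ``proof'' is a one-line citation to Lemma~4.4 of Bertsekas and Tsitsiklis, whereas you carry out the underlying induction explicitly. Your argument is essentially the standard proof of that cited lemma, transplanted to the joint-action setting: the base case is Lemma~\ref{lemm:boundonX_t} at $t=\tau_k$, and the inductive step splits the one-step recursion for $Q^j_* - Q^j_{t+1}$ into the deterministic contraction part (absorbed into $Y$ via $\|HQ^j_t - HQ^j_*\|_\infty \le \gamma\|Q^j_t-Q^j_*\|_\infty \le \gamma D^j_k$) and the noise part (absorbed into $W$). What your version buys is that it makes explicit the hypotheses the paper leaves implicit behind the citation: the standing bound $\|Q^j_t - Q^j_*\|_\infty \le D^j_k$ for $t\ge\tau_k$ (without which the scalar $\gamma D^j_k$ in the definition of $Y$ has no justification), the coordinate-wise use of the sup-norm contraction, and the nonnegativity of $Y$. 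You are also right that the sign of $w^j_t$ does not match between Eq.~\ref{eq:stochastictechnique} and the recursion defining $W^j_{t;\tau_k}$ as the paper writes the sandwich (with $Q^j_*-Q^j_t$ in the middle); the cited Bertsekas--Tsitsiklis lemma in fact bounds $Q^j_t-Q^j_*$, and since the bound is two-sided and the subsequent concentration step controls $|W^j_{t;\tau_k}|$, the discrepancy is purely notational, exactly as you say. The one assertion you inherit from the paper without proof is item (i), that the Nash $Q$-value $Q^j_*$ is a fixed point of the operator $H$ built from $\max_{b^j}$ with the opponents' action fixed; this needs Assumption~\ref{assumption:globaloptimum} in its global-optimum form and is glossed over by the paper itself, so it is not a gap you introduced, but it is the weakest link in the chain for the multi-agent extension.
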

\begin{proof}
See Lemma~4.4 in Bertsekas and Tsitsiklis~\cite{berksekas1996neuro}. 
\end{proof}

From the Lemma~\ref{lemma:boundlemma} we see that a bound on the difference $r^j_t$ depends on the bound for $Y^j_{t;\tau_k}$ and $W^j_{t;\tau_k}$. So we can bound $Y^j_{t;\tau_k}$ and $W^j_{t;\tau_k}$ separately and two bounds together will provide a bound for $r^j_t$. 

We first provide a result on the nature of the sequence $Y^j_{\tau_k; \tau_k}$. 

\begin{lemm}\label{lemma:Yknature}
The sequence $Y^j_{t; \tau_k}$ is a monotonically decreasing sequence.  
\end{lemm}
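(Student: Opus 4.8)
The plan is to read the defining recursion (Eq.~\ref{eq:defnY}) as a \emph{relaxation toward a fixed target}. Writing $\alpha \equiv \alpha^\omega_t(s,\boldsymbol{a}) = 1/(t+1)^\omega \in (0,1]$, the update is $Y^j_{t+1;\tau_k} = (1-\alpha)\,Y^j_{t;\tau_k} + \alpha\,(\gamma D^j_k)$, i.e.\ $Y^j_{t+1;\tau_k}$ is the convex combination of the current value $Y^j_{t;\tau_k}$ and the constant $\gamma D^j_k$, which does not depend on $t$. Subtracting the current value,
\begin{equation}
Y^j_{t+1;\tau_k}(s,\boldsymbol{a}) - Y^j_{t;\tau_k}(s,\boldsymbol{a}) = \alpha^\omega_t(s,\boldsymbol{a})\bigl(\gamma D^j_k - Y^j_{t;\tau_k}(s,\boldsymbol{a})\bigr),
\end{equation}
so, since $\alpha^\omega_t(s,\boldsymbol{a}) \geq 0$, monotone decrease follows once we establish the invariant $Y^j_{t;\tau_k}(s,\boldsymbol{a}) \geq \gamma D^j_k$ for all $t \geq \tau_k$.

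To obtain the invariant I would first note that $D^j_k \geq 0$ for every $k$: by construction $D^j_1 = Q^j_{\max} = R^j_{\max}/(1-\gamma) \geq 0$ and $D^j_{k+1} = (1-\beta) D^j_k$ with $\beta = (1-\gamma)/2 \in (0,\tfrac12)$, so the multiplier $1-\beta$ is strictly positive and nonnegativity propagates. Then I would induct on $t \geq \tau_k$. The base case is $Y^j_{\tau_k;\tau_k} = D^j_k \geq \gamma D^j_k$, using $\gamma\in[0,1)$ and $D^j_k\geq 0$. For the inductive step, $Y^j_{t+1;\tau_k}$ is a convex combination of $Y^j_{t;\tau_k}$ (which is $\geq \gamma D^j_k$ by the inductive hypothesis) and $\gamma D^j_k$ itself, hence is again $\geq \gamma D^j_k$. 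Substituting the invariant into the displayed identity gives $Y^j_{t+1;\tau_k} \leq Y^j_{t;\tau_k}$, which is the claim. Everything is done componentwise in $(s,\boldsymbol{a})$, and since each $Y^j$ is scalar no weighted-norm considerations are needed.

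There is no real obstacle here; the only things to be careful about are the sign bookkeeping — $\gamma D^j_k \le D^j_k$ requires $D^j_k\geq0$, hence the short nonnegativity observation above — and the fact that the step sizes $\alpha^\omega_t$ lie in $(0,1]$, which holds by their definition below Eq.~\ref{eq:lowlevelq} together with Assumption~\ref{assumption:learningrate}. As a remark I would add that the same computation telescopes to $Y^j_{t;\tau_k} - \gamma D^j_k = (1-\gamma)D^j_k\,\prod_{\ell=\tau_k}^{t-1}(1-\alpha^\omega_\ell)$, which shows $Y^j_{t;\tau_k}$ in fact decreases all the way down to $\gamma D^j_k$ as $t\to\infty$ whenever $\sum_\ell \alpha^\omega_\ell = \infty$; this is the form in which the lemma is presumably used when later bounding $r^j_t$.
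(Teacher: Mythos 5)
Your proof is correct and follows essentially the same route as the paper's: both arguments reduce to the observation that $Y^j_{t;\tau_k} - \gamma D^j_k$ satisfies the multiplicative recursion $(Y^j_{t+1;\tau_k} - \gamma D^j_k) = (1-\alpha^\omega_t)(Y^j_{t;\tau_k} - \gamma D^j_k)$ starting from the nonnegative value $(1-\gamma)D^j_k$, so it decays monotonically to zero. Your version is in fact slightly more careful than the paper's (which has a sign slip in its displayed factorization and leaves the invariant $Y^j_{t;\tau_k} \geq \gamma D^j_k$ implicit), but the substance is identical.
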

\begin{proof}
From Eq.~\ref{eq:defnY} we can write (subtract $\gamma D^j_k$ from both sides), 

\begin{equation}
\begin{array}{l}
    (Y^j_{t+1; \tau_k}(s, \boldsymbol{a}) - \gamma D^j_k) \\ \\  = (1 - \alpha^\omega_t(i))Y^j_{t;\tau_k}(s, \boldsymbol{a}) + (1 - \alpha^\omega_t(s, \boldsymbol{a})) \gamma D^j_k
    \\ \\ 
     = (1 - \alpha^\omega_t(i)) (Y^j_{t;\tau_k}(s, \boldsymbol{a}) + \gamma D^j_k).
    \end{array}
\end{equation}

Now, the convergence of $|| Y^j_{t+1; \tau_k}(s, \boldsymbol{a}) - \gamma D^j_k||$ follows since $\lim_{n \xrightarrow{} \infty} \Pi^n_{t = k}(1 - \alpha^\omega_t(s, \boldsymbol{a})) = 0$. This shows that the sequence $(Y^j_{t+1; \tau_k}(s, \boldsymbol{a}) - \gamma D^j_k)$ monotonically decreases to 0 and hence the sequence $Y^j_{t; \tau_k}$ monotonically decreases to $\gamma D_k$. This  proves our result.

\end{proof}

Next, we provide a bound on the value of $Y^j_{t; \tau_k}$. 

\begin{lemm}\label{lemma:Yboundlemma}
Consider the low-$Q$ update given in Eq.~\ref{eq:lowlevelq}, with a polynomial learning rate and assume that for any $t \geq \tau_k$ we have $Y^j_{t;\tau_k}(s,\boldsymbol{a}) \leq D_k$. Then for any $t \geq \tau_k + L \tau^\omega_k = \tau_{k+1}$ we have $Y^j_{t;\tau_k}(s,\boldsymbol{a}) \leq D^j(\gamma + \frac{2}{e}\beta)$. 
\end{lemm}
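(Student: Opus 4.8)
The plan is to solve the linear recursion for $Y^j_{t;\tau_k}$ in closed form and then estimate the resulting product. The recursion $Y^j_{t+1;\tau_k}(s,\boldsymbol a) = (1-\alpha^\omega_t(s,\boldsymbol a))\,Y^j_{t;\tau_k}(s,\boldsymbol a) + \alpha^\omega_t(s,\boldsymbol a)\,\gamma D^j_k$ is affine with fixed point $\gamma D^j_k$, so the same algebra used in the proof of Lemma~\ref{lemma:Yknature}, iterated from $\tau_k$, gives for every $t\ge\tau_k$
\begin{equation}
Y^j_{t;\tau_k}(s,\boldsymbol a) - \gamma D^j_k = (1-\gamma)\,D^j_k \prod_{\ell=\tau_k}^{t-1}\bigl(1-\alpha^\omega_\ell(s,\boldsymbol a)\bigr),
\end{equation}
where factors with $\alpha^\omega_\ell(s,\boldsymbol a)=0$ (steps at which $(s,\boldsymbol a)$ is not performed) are vacuous, so the product effectively runs only over the updates of the pair $(s,\boldsymbol a)$. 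Since $1-\gamma = 2\beta$, it therefore suffices to prove that for $t\ge\tau_{k+1}=\tau_k+L\tau_k^\omega$ this product is at most $1/e$; the claimed bound $Y^j_{t;\tau_k}(s,\boldsymbol a)\le D^j_k(\gamma+\tfrac{2}{e}\beta)$ then follows immediately.

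To bound the product I would invoke the covering-time hypothesis. Splitting the window $[\tau_k,\tau_{k+1})$, which has length $L\tau_k^\omega$, into $\lfloor\tau_k^\omega\rfloor$ consecutive blocks of length $L$, the definition of the covering time guarantees that $(s,\boldsymbol a)$ is performed at least once in each block, hence at least $\tau_k^\omega$ times over the whole window. Using $n(s,\boldsymbol a,0,\tau_k)\le\tau_k$ (only one pair is performed per step) together with $1-x\le e^{-x}$, the product over those updates is at most $\exp\bigl(-\sum_m\alpha^\omega_{\ell_m}(s,\boldsymbol a)\bigr)$, where each term has the form $(\#(s,\boldsymbol a,\ell_m))^{-\omega}\ge\tau_{k+1}^{-\omega}$; an integral comparison of the type $\sum_{c=\tau_k+1}^{\tau_k+\tau_k^\omega}c^{-\omega}\gtrsim \tau_k^\omega/(\tau_k+L\tau_k^\omega)^\omega$ shows this exponent is at least $1$. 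This is where $\omega<1$ is essential (so that $\tau_k^{\omega-1}\to0$ and the ratio $\tau_k^\omega/\tau_{k+1}^\omega\to1$) and where the precise constant $2/e$ is pinned down; the detailed bookkeeping of these constants for polynomial learning rates is exactly what is carried out in Even-Dar and Mansour~\cite{Dar2003learning} (building on Bertsekas and Tsitsiklis~\cite{berksekas1996neuro}), so I would reduce the constant-chasing to their estimates rather than redo it.

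Finally, the quantifier over all $t\ge\tau_{k+1}$ comes for free: Lemma~\ref{lemma:Yknature} says $Y^j_{t;\tau_k}$ is monotonically decreasing in $t$, so once $Y^j_{\tau_{k+1};\tau_k}(s,\boldsymbol a)\le D^j_k(\gamma+\tfrac{2}{e}\beta)$ the same bound persists for all later $t$. The main obstacle is the second step: getting the product down to $1/e$ (equivalently, $\sum_m\alpha^\omega_{\ell_m}\ge1$) with the stated constant, because the crude bound $\tau_k^\omega/\tau_{k+1}^\omega$ only tends to $1$ as $\tau_k\to\infty$; care is needed to ensure the learning-rate index (per-pair visit count versus global time) is used consistently, and that any "$\tau_k$ large enough" requirement is harmless because $\tau_k\to\infty$ along the sequence $\tau_{k+1}=\tau_k+L\tau_k^\omega$.
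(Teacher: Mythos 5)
Your proposal follows essentially the same route as the paper's proof: you peel off the fixed point by writing $Y^j_{t;\tau_k} = \gamma D^j_k + \rho^j_t$ with $\rho^j_t$ a product of $(1-\alpha^\omega)$ factors, use the covering time to guarantee at least $\tau_k^\omega$ updates of each pair in the window of length $L\tau_k^\omega$, and drive the product below $1/e$ (the paper uses $(1-1/x)^x \le 1/e$ where you use $1-x\le e^{-x}$, and both arguments extend to all $t\ge\tau_{k+1}$ via the monotonicity of $Y^j_{t;\tau_k}$). Your explicit flag about the visit-count index and the ratio $\tau_k^\omega/\tau_{k+1}^\omega$ is the same delicate point the paper glosses over by asserting each factor is at most $1-1/\tau_k^\omega$, so the two arguments are of comparable rigor and defer the same constant-chasing to Even-Dar and Mansour.
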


\begin{proof}

For each state-joint action pair $s, \boldsymbol{a}$ we are assured that $n(s,a, \tau_k, \tau_{k+1}) \geq \tau_k^\omega$, since the covering time is $L$ and the underlying policy has made $L \tau^\omega_k$ steps (since we have the relation $\tau_{k+1} = \tau_k + L\tau^\omega_k$).

Let $Y^j_{\tau_k, \tau_k}(s, \boldsymbol{a}) = \gamma D^j_k + \rho^j_{\tau_k}$, where $\rho_{\tau_k} = (1-\gamma) D^j_k$. Now we have the following expression, 

\begin{equation}
    \begin{array}{l}
         Y^j_{t+1, \tau_k}(s, \boldsymbol{a}) = (1-\alpha^\omega_t) Y^j_{t;\tau_k}(s, \boldsymbol{a}) + \alpha^\omega_t \gamma D^j_k 
         = \gamma D^j_k + (1 - \alpha_t^\omega) \rho^j_t
    \end{array}
\end{equation}

\noindent where $\rho^j_{t+1} = \rho^j_t (1 - \alpha_t^\omega)$. We aim to show that after time $\tau_{k+1} = \tau_k + \tau_k^\omega$, for any $t \geq \tau_k^\omega$ for any $t \geq \tau_{k+1}$ we have $\rho^j_t \leq \frac{2}{e} \beta D^j_k$. By definition, we can rewrite $\rho^j_t$ as,

\begin{equation}\label{eq:problem1}
    \begin{array}{l}
         \rho^j_t = (1 - \gamma) D^j_k \Pi^{t-\tau_k}_{l=1} (1 - \alpha^\omega_{l+\tau_k}) \\ \\   = 2 \beta D^j_k \Pi_{l=1}^{t-\tau_k} (1 - \alpha^\omega_{l+\tau_k}) \\ \\ = 2 \beta D^j_k \Pi_{l=1}^{t-\tau_k} (1 - \frac{1}{|n(s,\boldsymbol{a}, \tau_{k+1}, l)|^\omega})
    \end{array}
\end{equation}

\noindent the last identity follows from the definition of $\alpha_t^\omega$.

Since the $\tau^\omega_k$'s are monotonically decreasing,
\begin{equation}
    \rho^j_t \leq 2 \beta D^j_k (1 - \frac{1}{\tau^\omega_k})^{t - \tau_k}. 
\end{equation}

For $t \geq \tau_k + \tau^\omega_k$ we have, 

\begin{equation}\label{eq:problem2}
    \rho^j_t \leq 2 \beta D^j_k \Big( 1 - \frac{1}{\tau^\omega_k} \Big)^{\tau^\omega_k} \leq \frac{2}{e} \beta D^j_k. 
\end{equation}

\noindent The last step is obtained from the fact that $\lim_{x \xrightarrow{} \infty} (1 - (1/x))^x = 1/e$.

Hence, $Y_{t;\tau_k}(s, \boldsymbol{a}) \leq (\gamma + \frac{2}{e} \beta) D_k$.

\end{proof}

From Lemma~\ref{lemma:Yboundlemma} we have provided a bound for the term $Y^j_{t;\tau_k}$ for $t = \tau_{k+1}$ which will automatically hold for all $t \geq \tau_{k+1}$, since the term $Y^j_{t;\tau_k}$ is monotonically decreasing (from Lemma~\ref{lemma:Yknature}) and deterministic (from Eq.~\ref{eq:defnY} and Lemma~\ref{lemma:boundlemma}).

Next we bound the term $W^j_{t;\tau_k}$ by $(1-\frac{2}{e})\beta D^j_k$. The sum of the bounds for $W^j_{t;\tau_k}(s, \boldsymbol{a})$ and $Y^j_{t;\tau_k}(s, \boldsymbol{a})$, would be $(\gamma + \beta) D^j_k = (1 - \beta) D^j_k = D^j_{k+1}$, as desired. 

Now we state a definition for a sequence $\eta_i^{k,t,j}(s, \boldsymbol{a})$ and $W^{l,j}_{t;\tau_k}$. 

\begin{defn}
Let 
\begin{equation}
\begin{array}{l}
W^j_{t;\tau_k}(s, \boldsymbol{a})  = (1 - \alpha^\omega_t(s, \boldsymbol{a})) W^j_{t-1; \tau_k}(s, \boldsymbol{a})  + \alpha^\omega_t(s, \boldsymbol{a}) w^j_t(s, \boldsymbol{a})  \\ \\ = \sum_{i = 1}^{t-\tau_k} \eta^{k,t,j}_{i+\tau_k} (s, \boldsymbol{a}) w^j_{i+ \tau_k} (s, \boldsymbol{a}), \\ \\  
\textrm{ where } \\ \\  \eta^{k,t,j}_{i+\tau_k}(s,\boldsymbol{a}) = \alpha^\omega_{i + \tau_k}(s,\boldsymbol{a}) \Pi^t_{l = \tau_k + i + 1} (1 - \alpha^\omega_l(s,\boldsymbol{a})).
\end{array}
\end{equation}

\end{defn}

For bounding the sequence $W^j_{t; \tau_k}$ we consider the interval $t \in [\tau_{k+1}, \tau_{k+2}]$. First we provide a lemma that bounds the coefficients in this interval and bounds the influence of $w^j_t(s, \boldsymbol{a})$ in this interval.

\begin{lemm}\label{lemma:wbound}
Let $\Tilde{w}^{j,t}_{i + \tau_k}(s,\boldsymbol{a}) = \eta_{i+\tau_k}^{k,t,j}(s,\boldsymbol{a})w^j_{i + \tau_k}(s,\boldsymbol{a})$, then for any $t \in [\tau_{k+1}, \tau_{k+2}]$ the random variable $\Tilde{w}^j_{i + \tau_k} (s,\boldsymbol{a})$ has zero mean and bounded by $(L/\tau_k)^\omega Q^j_{\max}$.
\end{lemm}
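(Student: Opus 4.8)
The plan is to analyze the random variable $\Tilde{w}^{j,t}_{i+\tau_k}(s,\boldsymbol{a}) = \eta_{i+\tau_k}^{k,t,j}(s,\boldsymbol{a}) \, w^j_{i+\tau_k}(s,\boldsymbol{a})$ by separating its two factors: the deterministic coefficient $\eta_{i+\tau_k}^{k,t,j}(s,\boldsymbol{a})$, which we bound above, and the noise term $w^j_{i+\tau_k}(s,\boldsymbol{a})$, whose mean and magnitude we already control. The zero-mean claim is immediate: from the construction of $w^j_t(s,\boldsymbol{a})$ earlier in this section, $w^j_t$ has zero expectation (it is the difference between the observed one-step backup and its expectation under $P_{ik}(\boldsymbol{a})$), and it is $\mathscr{P}_t$-conditionally mean zero, so since $\eta_{i+\tau_k}^{k,t,j}$ is a deterministic function of the learning-rate counts (hence $\mathscr{P}_{i+\tau_k}$-measurable), the product has zero mean by the tower property.

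For the magnitude bound, first recall $|w^j_t(s,\boldsymbol{a})| \le Q^j_{\max}$ for all $t$, again from the construction of $w^j_t$. So it remains to show $\eta_{i+\tau_k}^{k,t,j}(s,\boldsymbol{a}) \le (L/\tau_k)^\omega$ for $t \in [\tau_{k+1}, \tau_{k+2}]$. By definition $\eta_{i+\tau_k}^{k,t,j}(s,\boldsymbol{a}) = \alpha^\omega_{i+\tau_k}(s,\boldsymbol{a}) \prod_{l=\tau_k+i+1}^{t}(1-\alpha^\omega_l(s,\boldsymbol{a}))$, and since every factor $(1-\alpha^\omega_l(s,\boldsymbol{a}))$ lies in $[0,1]$ (by Assumption~\ref{assumption:learningrate}), the product is at most $1$, giving $\eta_{i+\tau_k}^{k,t,j}(s,\boldsymbol{a}) \le \alpha^\omega_{i+\tau_k}(s,\boldsymbol{a})$. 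Now $\alpha^\omega_{i+\tau_k}(s,\boldsymbol{a}) = 1/[\#(s,\boldsymbol{a},i+\tau_k)^\omega]$, so I need a lower bound on the visit count $\#(s,\boldsymbol{a},i+\tau_k)$. Since $i \ge 1$ and we are past time $\tau_k$, and the covering-time argument (as in Lemma~\ref{lemma:Yboundlemma}) guarantees $n(s,\boldsymbol{a},\tau_k,\tau_{k+1}) \ge \tau_k^\omega$ — more precisely that within any window of $L\tau_k^\omega$ steps after $\tau_k$ each pair is visited at least $\tau_k^\omega$ times — one obtains $\#(s,\boldsymbol{a},i+\tau_k) \ge (i+\tau_k)^\omega / L^\omega$ type estimates, and in particular for indices relevant to the window $[\tau_{k+1},\tau_{k+2}]$ the count is at least of order $(\tau_k/L)^\omega$. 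Feeding this into $\alpha^\omega_{i+\tau_k} = 1/[\#(\cdot)^\omega] \le (L/\tau_k)^\omega$ completes the bound $|\Tilde{w}^{j,t}_{i+\tau_k}(s,\boldsymbol{a})| \le (L/\tau_k)^\omega Q^j_{\max}$.

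The main obstacle I anticipate is making the visit-count lower bound precise enough to extract exactly the factor $(\tau_k/L)^\omega$: the covering-time definition only gives that \emph{all} pairs are visited $\tau_k^\omega$ times per window of length $L\tau_k^\omega$, so relating the minimum count at a specific intermediate time $i+\tau_k$ (for $t$ in the \emph{next} block $[\tau_{k+1},\tau_{k+2}]$) to $(\tau_k/L)^\omega$ requires carefully counting how many covering windows have elapsed and invoking monotonicity of $\tau_k^\omega$ in $k$ (as used in Eq.~\ref{eq:problem1}). I would handle this exactly as in the corresponding step of Dar and Mansour~\cite{Dar2003learning} (their treatment of the analogous quantity), citing Lemma~\ref{lemma:Yboundlemma}'s counting argument, and otherwise the proof is a short two-part verification (zero mean via conditional expectation, bound via the telescoping-product estimate).
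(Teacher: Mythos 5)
Your proposal follows essentially the same route as the paper: pull the deterministic coefficient out of the expectation to get zero mean, bound the telescoping product $\prod_l(1-\alpha^\omega_l)$ by $1$ so that $\eta^{k,t,j}_{i+\tau_k}\le\alpha^\omega_{i+\tau_k}$, and then lower-bound the visit count via the covering time to conclude $\alpha^\omega_{i+\tau_k}\le(L/\tau_k)^\omega$, which combined with $|w^j|\le Q^j_{\max}$ gives the claim. One small inconsistency to fix: you state the visit-count lower bound as $\#(s,\boldsymbol{a},i+\tau_k)\ge (i+\tau_k)^\omega/L^\omega$ (and ``of order $(\tau_k/L)^\omega$''), but feeding that into $\alpha^\omega_{i+\tau_k}=1/\#(\cdot)^\omega$ would only yield $(L/\tau_k)^{\omega^2}$; what you actually need, and what the covering-time definition directly gives (at least one visit per window of $L$ steps, hence $\ge\tau/L$ visits in any interval of length $\tau$), is $\#(s,\boldsymbol{a},i+\tau_k)\ge (i+\tau_k)/L\ge\tau_k/L$ \emph{without} the $\omega$ power, after which raising to the $\omega$ in the learning rate produces exactly $(L/\tau_k)^\omega$. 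The delicate window-counting you anticipate is therefore resolved more simply than you fear, exactly as in the paper's one-line starred inequality.
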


\begin{proof}

Since $\eta_{i+\tau_k}^{k,t,j}((s,\boldsymbol{a})) = \alpha^\omega_{i + \tau_k}(s,\boldsymbol{a}) \Pi^t_{l = \tau_k + i + 1} (1 - \alpha^\omega_l (s,\boldsymbol{a}))$, we can divide $\eta^{k,t,j}_{i+\tau_k}$ into two parts, the first $\alpha^\omega_{i + \tau_k}$ and the second $\mu = \Pi^t_{l = \tau_k + i + 1}(1 - \alpha^\omega_l)$. Since, $\mu$ is bounded from above by 1, we have, 

\begin{equation}
    \begin{array}{l}
         \eta_{i+\tau_k}^{k,t,j}(s,\boldsymbol{a}) \leq \alpha^\omega_{i + \tau_k} (s,\boldsymbol{a})
         \\ \\ 
         = \frac{1}{|\#(s, \boldsymbol{a},i+\tau_k)|} \overset{*}{\leq} (\frac{L}{i+\tau_k})^\omega \leq (\frac{L}{\tau_k})^\omega. 
         
    \end{array}
\end{equation}

\noindent Here, the $*$ is from the fact that in a time interval of $\tau$, each state-joint action pair is performed at-least $\tau/L$ times by the definition of covering time. 




Hence, we get the relation that $\eta^{k,t,j}_{i+\tau_k}(s,\boldsymbol{a}) \leq (L/\tau_k)^\omega$.

Now, consider the expectation of $\Tilde{w}_{i + \tau_k}(s,\boldsymbol{a})$. By definition, we have that, $w^j_{\tau_k + i}(s,\boldsymbol{a})$ has zero mean and is bounded by $Q^j_{\max}$ for any history and state-joint action pair, hence, 

\begin{equation}
    \begin{array}{l}
         \E[\Tilde{w}_{i + \tau_k}(s,\boldsymbol{a})] 
         \\ \\
         = \E[\eta_{i+\tau_k}^{k,t,j}(s,\boldsymbol{a})w_{i+\tau_k}(s,\boldsymbol{a})] 
         \\ \\
         = \eta_{i+\tau_k}^{k,t,j}(s,\boldsymbol{a}) \E[w_{i+\tau_k}(s,\boldsymbol{a})] = 0.
    \end{array}
\end{equation}

Next we can prove that it is bounded as well,

\begin{equation}
\begin{array}{l}
     |\Tilde{w}_{i + \tau_k}(s,\boldsymbol{a})| 
     \\ \\ 
     = | \eta^{k,t,j}_i(s,\boldsymbol{a})w_{i+\tau_k}(s,\boldsymbol{a})|
     \\ \\ 
     \leq |\eta^{k,t,j}_i(s,\boldsymbol{a})Q^j_{\max} |
     \\ \\ 
     \leq (L/\tau_k)^\omega Q^j_{\max}
\end{array}
\end{equation}

\end{proof}

Now, let us define $W_{t;\tau_k}^{l,j}(s, \boldsymbol{a}) = \sum_{i=1}^l \Tilde{w}^t_{i + \tau_k} (s, \boldsymbol{a})$. The objective is to prove that this is a martingale difference sequence having bounded differences.

\begin{lemm}\label{lemma:martingale}
For any $t \in [\tau_{k+1}, \tau_{k+2}]$ and $1 \leq l \leq t$ we have that $W^{l,j}_{t;\tau_k}(s,\boldsymbol{a})$ is a martingale sequence that satisfies,
\begin{equation}
    |W^{l,j}_{t;\tau_k}(s,\boldsymbol{a}) - W^{l-1,j}_{t;\tau_k}(s,\boldsymbol{a}) | \leq (L / \tau_k)^\omega Q^j_{\max}
\end{equation}
\end{lemm}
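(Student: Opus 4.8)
The plan is to recognize that Lemma~\ref{lemma:martingale} is essentially a bookkeeping consequence of Lemma~\ref{lemma:wbound} together with the definition of $W^{l,j}_{t;\tau_k}(s,\boldsymbol{a}) = \sum_{i=1}^l \Tilde{w}^{j,t}_{i + \tau_k}(s,\boldsymbol{a})$, so the proof should be short. First I would verify the martingale property: fix $t \in [\tau_{k+1},\tau_{k+2}]$ and $s,\boldsymbol{a}$, and let $\mathscr{P}_{l}$ denote the $\sigma$-field generated by the history of the process up through the $l$-th visit (after $\tau_k$) to the pair $(s,\boldsymbol{a})$. The coefficients $\eta^{k,t,j}_{i+\tau_k}(s,\boldsymbol{a})$ are deterministic functions of visit counts (hence $\mathscr{P}_{l-1}$-measurable for $i \le l$ given the covering-time structure), while $w^j_{i+\tau_k}(s,\boldsymbol{a})$ has zero conditional mean given the past by its construction as a transition-noise term (bounded by $Q^j_{\max}$, zero expectation, as stated just after the definition of $w^j_t$). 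Therefore $\E[\Tilde{w}^{j,t}_{l+\tau_k}(s,\boldsymbol{a}) \mid \mathscr{P}_{l-1}] = 0$, which gives $\E[W^{l,j}_{t;\tau_k} \mid \mathscr{P}_{l-1}] = W^{l-1,j}_{t;\tau_k}$, i.e.\ the sequence is a martingale in $l$.

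Next I would establish the bounded-difference claim, which is immediate: by telescoping the definition,
\begin{equation}
W^{l,j}_{t;\tau_k}(s,\boldsymbol{a}) - W^{l-1,j}_{t;\tau_k}(s,\boldsymbol{a}) = \Tilde{w}^{j,t}_{l + \tau_k}(s,\boldsymbol{a}),
\end{equation}
and Lemma~\ref{lemma:wbound} already tells us that each $\Tilde{w}^{j,t}_{i+\tau_k}(s,\boldsymbol{a})$ is bounded in absolute value by $(L/\tau_k)^\omega Q^j_{\max}$. Taking absolute values yields exactly the stated inequality $|W^{l,j}_{t;\tau_k}(s,\boldsymbol{a}) - W^{l-1,j}_{t;\tau_k}(s,\boldsymbol{a})| \leq (L/\tau_k)^\omega Q^j_{\max}$.

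The only subtlety — and the step I expect to require the most care — is the measurability/filtration argument for the martingale property: one must be careful that $\eta^{k,t,j}_{i+\tau_k}(s,\boldsymbol{a})$, which involves the products $\Pi^t_{l=\tau_k+i+1}(1-\alpha^\omega_l(s,\boldsymbol{a}))$ up to the \emph{fixed} terminal time $t$, is adapted to the right filtration. Since $t$ is fixed in advance and the learning-rate factors $\alpha^\omega_l(s,\boldsymbol{a}) = 1/[\#(s,\boldsymbol{a},l)]^\omega$ depend only on visit counts that are determined by the trajectory, each $\eta^{k,t,j}_{i+\tau_k}$ is a predictable coefficient in the sense needed, so pulling it out of the conditional expectation is legitimate; I would state this cleanly by indexing the filtration by the successive visits to $(s,\boldsymbol{a})$ rather than by wall-clock time. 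With that observation in place, the remainder is the routine verification above, and I would also remark (for use in the subsequent Azuma/Hoeffding step that the paper is evidently building toward) that the number of increments $l$ in the window $[\tau_{k+1},\tau_{k+2}]$ is at most $L\tau_k^\omega/L = \tau_k^\omega$ or so, though that counting belongs to the next lemma rather than this one.
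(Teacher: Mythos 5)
Your proposal is correct and follows essentially the same route as the paper's proof: the increment $W^{l,j}_{t;\tau_k}-W^{l-1,j}_{t;\tau_k}$ telescopes to $\Tilde{w}^{j,t}_{l+\tau_k}$, whose zero conditional mean gives the martingale property and whose bound from Lemma~\ref{lemma:wbound} gives the bounded-difference claim. Your extra care about the measurability of the coefficients $\eta^{k,t,j}_{i+\tau_k}$ with respect to the visit-indexed filtration is a welcome tightening of a point the paper glosses over, but it does not change the argument.
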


\begin{proof}

We first note that the term $W^{l,j}(s, \boldsymbol{a})$ is a martingale sequence, since 

\begin{equation}
    \begin{array}{l}
         \E[W^{l,j}_{t;\tau_k}(s, \boldsymbol{a}) - W^{l-1,j}_{t;\tau_k}(s, \boldsymbol{a})| F_{\tau_k + l - 1}] \\ \\ 
         = \E [\Tilde{w}^{j,t}_{l + \tau_k}(s, \boldsymbol{a})| F_{\tau_k + l - 1}] = 0
    \end{array}
\end{equation}

\noindent where the variable $F_{\tau_k + l - 1}$ denotes all previous values of $W^l$. 

Also, by Lemma~\ref{lemma:wbound} we can show that $\Tilde{w}^j_{l+\tau_k}(s, \boldsymbol{a})$ is bounded by $(L/\tau_k)^\omega Q^j_{\max}$, thus

\begin{equation}
  \begin{array}{l}
     |W^{l,j}_{t;\tau_k}(s, \boldsymbol{a}) - W^{l-1, j}_{t;\tau_k}(s, \boldsymbol{a})| 
     \\ \\
     = \Tilde{w}^j_{l+\tau_k}(s, \boldsymbol{a}) \leq (L/\tau_k)^\omega Q^j_{\max} 
\end{array}  
\end{equation}

\end{proof}

The next lemma bounds the term $W^j_{t;\tau_k}$.

\begin{lemm}\label{lemma:probabilitybound}
Consider the low-$Q$ update given in Eq.~\ref{eq:lowlevelq}, with a polynomial learning rate. With probability at least $1 - \frac{\delta}{m}$ we have that, for every state-joint action pair $|W^j_{t;\tau_k}(s, \boldsymbol{a})| \leq (1 - \frac{2}{e} \gamma D_k)$ for any $t \in [\tau_{k+1}, \tau_{k+2}]$, i.e. 

\begin{equation}
    \begin{array}{l}
         Pr\Big[ \forall s, \boldsymbol{a} \forall t \in [\tau_{k+1}, \tau_{k+2}]: |W_{t;\tau_k}(s, \boldsymbol{a})| \leq (1 - \frac{2}{e}) \beta D_k \Big] 
         \geq 1 - \frac{\delta}{m}
    \end{array}
\end{equation}

given that 
\begin{equation}
    \begin{array}{l}
         \tau_k = \Theta \Big(( \frac{L^{1 + 3\omega}Q^{2,j}_{\max}  \ln(Q^j_{\max}|S|\Pi_i |A|_i m/(\delta \beta D_k))}{\beta^2 D_k^2})^{1/\omega}\Big)
    \end{array}
\end{equation}

\end{lemm}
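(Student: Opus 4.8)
This statement is a high-probability bound on the martingale error term, so the plan is to apply the Azuma--Hoeffding inequality to the martingale $W^{l,j}_{t;\tau_k}(s,\boldsymbol{a}) = \sum_{i=1}^{l}\tilde{w}^{j,t}_{i+\tau_k}(s,\boldsymbol{a})$ defined just before Lemma~\ref{lemma:martingale}, and then union-bound over the state--joint-action pairs and the time steps of the block $[\tau_{k+1},\tau_{k+2}]$. Lemma~\ref{lemma:wbound} gives that each summand has zero conditional mean and $|\tilde{w}^{j,t}_{i+\tau_k}(s,\boldsymbol{a})|\le (L/\tau_k)^\omega Q^j_{\max}$, and Lemma~\ref{lemma:martingale} records this as the bounded-difference property. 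Hence, for a fixed $(s,\boldsymbol{a})$ and fixed $t\in[\tau_{k+1},\tau_{k+2}]$ with $l=t-\tau_k$ increments, Azuma yields $\Pr\big[\,|W^{l,j}_{t;\tau_k}(s,\boldsymbol{a})|\ge\lambda\,\big]\le 2\exp\!\big(-\lambda^{2}/(2\,l\,(L/\tau_k)^{2\omega}Q^{2,j}_{\max})\big)$. To get a statement valid simultaneously for all $t$ in the block I would invoke Doob's maximal inequality, which replaces $l$ by $\tau_{k+2}-\tau_k$ at the cost of only a constant, rather than paying a naive union bound over the time steps.

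\textbf{Controlling the block length.} The next step is to bound $l\le\tau_{k+2}-\tau_k$. From the recursion $\tau_{k+1}=\tau_k+L\tau_k^\omega$, and because the $\tau_k$ we eventually select is large enough that $L\tau_k^{\omega-1}\le 1$ (this is also what the extra $((L/\beta)\ln(Q^j_{\max}/\epsilon)+1)^{1/(1-\omega)}$-type term in Theorem~\ref{theorem:lowqsamplecomplexity} guarantees), we get $\tau_{k+1}\le 2\tau_k$ and hence $\tau_{k+2}-\tau_k\le L\tau_k^\omega+L\tau_{k+1}^\omega=O(L\tau_k^\omega)$. Substituting $l=O(L\tau_k^\omega)$ and the increment bound into the Azuma estimate turns the exponent into $-\Omega\!\big(\lambda^{2}\tau_k^{\omega}/(L^{1+3\omega}Q^{2,j}_{\max})\big)$, where the polynomial-in-$L$ prefactor absorbs the block of $O(L\tau_k^\omega)$ time steps (and is exactly the factor appearing in the statement).

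\textbf{Choosing $\lambda$ and solving for $\tau_k$.} I would then set the deviation level to the target $\lambda=(1-\tfrac{2}{e})\beta D_k$, take a union bound over all $|S|\prod_i|A_i|$ state--joint-action pairs (the maximal inequality already handles the range of $t$), and insert an extra $1/m$ factor so that a final union bound over the $m=O(\tfrac{1}{\beta}\ln(Q^j_{\max}/\epsilon))$ iterations from Lemma~\ref{lemm:Dkbound} still leaves total failure probability at most $\delta$. Requiring the tail to be $\le\delta/m$ gives an inequality of the form $\tau_k^\omega\ge c\,L^{1+3\omega}Q^{2,j}_{\max}\beta^{-2}D_k^{-2}\ln(\text{number of events}/\delta)$; since the number of events is $O(m\,L\,\tau_k^\omega\,|S|\prod_i|A_i|)$, the only subtlety is that $\tau_k$ occurs inside its own logarithm, which is harmless because $\ln\tau_k=O\!\big(\ln(Q^j_{\max}|S|\prod_i|A_i|m/(\delta\beta D_k))\big)$ once the earlier estimates are substituted. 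Feeding this back reproduces precisely the logarithmic argument in the statement, and raising to the power $1/\omega$ gives the claimed $\Theta(\cdot)$ value of $\tau_k$.

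\textbf{Main obstacle.} The conceptual ingredients---the martingale structure, the increment bound, and Azuma---are already packaged in Lemmas~\ref{lemma:wbound} and~\ref{lemma:martingale}, so the real work is the bookkeeping around the ``iteration'' blocks $[\tau_k,\tau_{k+1}]$. One has to verify that the covering-time guarantee $n(s,\boldsymbol{a},\tau_k,\tau_{k+1})\ge\tau_k^\omega$ used to bound $\eta^{k,t,j}$ is consistent with the block-length estimate $\tau_{k+2}-\tau_k=O(L\tau_k^\omega)$, ensure the union bound over the long block $[\tau_{k+1},\tau_{k+2}]$ does not cost more than the stated $L^{1+3\omega}$ prefactor (which is why the maximal inequality is the cleaner route), and close the self-referential definition of $\tau_k$. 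This follows the line of Theorem~8 and the surrounding analysis of Dar and Mansour~\cite{Dar2003learning}, specialized from single-agent MDPs to the general-sum stochastic-game $Q$-update of Eq.~\ref{eq:lowlevelq}.
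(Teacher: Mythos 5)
Your proposal follows essentially the same route as the paper's proof (which itself tracks Theorem~4 of Dar and Mansour): Azuma's inequality applied to the weighted martingale $W^{l,j}_{t;\tau_k}$ with increment bound $(L/\tau_k)^\omega Q^j_{\max}$ from Lemmas~\ref{lemma:wbound} and~\ref{lemma:martingale}, the number of increments controlled by the block length $\tau_{k+2}-\tau_k=O(\mathrm{poly}(L)\,\tau_k^{\omega})$, a union bound over state--joint-action pairs with failure budget $\delta/m$, the choice $\tilde{\epsilon}=(1-\tfrac{2}{e})\beta D_k$, and the substitution $\ln\tau_k^\omega=O(\ln(Q^j_{\max}/\tilde{\epsilon}))$ to close the self-referential definition of $\tau_k$. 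Two remarks. First, your own block-length estimate $\tau_{k+2}-\tau_k=O(L\tau_k^\omega)$ would put $L^{1+2\omega}$ rather than $L^{1+3\omega}$ in the Azuma exponent; the paper's $L^{1+3\omega}$ arises from its cruder bound $\tau_{k+2}-\tau_k=\Theta(L^{1+\omega}\tau_k^\omega)$, and since a larger power of $L$ only weakens the tail bound, either constant yields the stated $\Theta(\cdot)$ for $\tau_k$ --- but you should not assert that your prefactor ``is exactly the factor appearing in the statement'' without this reconciliation.

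Second, and more substantively, the Doob maximal-inequality shortcut for uniformity over $t\in[\tau_{k+1},\tau_{k+2}]$ does not go through as stated. For different $t$ the quantities $W^{j}_{t;\tau_k}(s,\boldsymbol{a})=\sum_i \eta^{k,t,j}_{i+\tau_k}(s,\boldsymbol{a})\,w^j_{i+\tau_k}(s,\boldsymbol{a})$ are terminal values of \emph{different} martingales, because the weights $\eta^{k,t,j}_{i+\tau_k}=\alpha^\omega_{i+\tau_k}\Pi^{t}_{l=\tau_k+i+1}(1-\alpha^\omega_l)$ depend on $t$; the family indexed by $t$ is not the running partial-sum process of a single martingale, so Doob's inequality does not apply directly and certainly not ``at the cost of only a constant.'' (One can write $W^j_{t;\tau_k}$ as a rescaled partial sum of the $t=\tau_{k+2}$ martingale, but the rescaling factor $\Pi_{l=t+1}^{\tau_{k+2}}(1-\alpha^\omega_l)^{-1}$ need not be $O(1)$ over a block of length $\Theta(L\tau_k^\omega)$.) The paper takes the plain union bound over the time steps of the block, which costs only an extra $\ln(\tau_{k+2}-\tau_{k+1})=O(\ln\tau_k)$ inside the logarithm and is absorbed exactly as you describe for the self-reference. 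Since you name this fallback and your ``number of events'' accounting already includes the $O(L\tau_k^\omega)$ block-length factor, the gap is not load-bearing, but the Doob step should be replaced by the union bound.
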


\begin{proof}

For each state-joint action pair comparing $W^{l,j}_{t;\tau_k}(s, \boldsymbol{a})$ and $W^{j}_{t;\tau_k}(s, \boldsymbol{a})$ we note that $W^{j}_{t;\tau_k}(s, \boldsymbol{a}) = W_{t;\tau_k}^{t-\tau_k + 1,j}(s, \boldsymbol{a})$.

Let $l = n(s, \boldsymbol{a}, \tau_k, t)$, then for any $t \in [\tau_{k+1}, \tau_{k+2}]$ we have that 

\begin{equation}
    \begin{array}{l}
        l \leq \tau_{k+2} - \tau_k = \tau_{k+1} + L \tau^\omega_{k+1} - \tau_k \\ \\ = \tau_{k} + L \tau^\omega_{k}  + L (\tau_{k} + L \tau^\omega_{k}) - \tau_k  \\ \\ 
         = L \tau^\omega_{k}  + L\tau_{k} + L^2 \tau^\omega_{k}
        \leq \Theta(L^{1 + \omega} \tau^\omega_k).
    \end{array}
\end{equation}
By Lemma~\ref{lemma:martingale} we can apply Azuma's inequality to $W_{t;\tau_k}^{t-\tau_k+1,j}(s, \boldsymbol{a})$ with $c_i = (L/\tau_k)^\omega Q^j_{\max}$. Therefore, we can derive that 

\begin{equation}
    \begin{array}{l}
         Pr\Big[ |W_{t;\tau_k} (s, \boldsymbol{a}) | \geq \Tilde{\epsilon} |t \in [\tau_{k+1}, \tau_{k+2}] \Big]
         \\ \\ 
         \leq 2e^{\frac{-\Tilde{\epsilon}^2}{2 \sum^t_{i = \tau_k +1} c^2_i}}
         \leq 2e^{\frac{-\Tilde{\epsilon}^2}{2 l c^2_i}}
         \\ \\ 
         \leq 2e^{-c \frac{\Tilde{\epsilon}^2\tau^{2\omega}_k}{l L^{2\omega}Q_{\max}^{2,j} }}
         \\ \\ 
        \leq 2e^{-c \frac{\Tilde{\epsilon}^2\tau^{\omega}_k}{ L^{1 + 3\omega}Q_{\max}^{2,j} }}
    \end{array}
\end{equation}

\noindent  for some constant $c > 0$. We can set $\Tilde{\delta}_k = 2e^{\frac{-c\tau_k^\omega \Tilde{\epsilon}^2}{L^{1 + 3\omega} Q_{\max}^{2,j}}}$, which holds for $\tau^\omega_k = \Theta(\ln(1/\Tilde{\delta}_k) L^{1 + 3 \omega} Q_{\max}^{2,j}/\Tilde{\epsilon}^2)$. 

Using the union bound we get, 

\begin{equation}
    \begin{array}{l}
         Pr[\forall s, \forall \boldsymbol{a}, \forall t \in [\tau_{k+1}, \tau_{k+2}]: W^j_{t, \tau_k} (s, \boldsymbol{a}) \leq \Tilde{\epsilon}  ] 
         \\ \\ 
         \geq 1 - \sum_{t=\tau_{k+1}}^{\tau_{k+2}} Pr[\forall s, \forall \boldsymbol{a}, W^j_{t;\tau_k}(s, \boldsymbol{a}) \geq \Tilde{\epsilon}]
        \\ \\ 
       \geq 1 - \sum_{t=\tau_{k+1}}^{\tau_{k+2}} Pr[\forall s, \forall \boldsymbol{a}, |W^j_{t;\tau_k}(s, \boldsymbol{a})| \geq \Tilde{\epsilon}]
      \\ \\ 
         \geq  1 - \sum_{t=\tau_{k+1}}^{\tau_{k+2}} \Tilde{\delta}_k |S|\Pi_i |A|_i
        \\ \\ 
        \geq  1 - (\tau_{k+2} - \tau_{k+1}) \Tilde{\delta}_k |S|\Pi_i |A|_i
        
    \end{array}
\end{equation}

We would like to set a level of probability of $1 - \frac{\delta}{m}$, for each state-joint action pair. From the above equation we get,

\begin{equation}
    \begin{array}{l}
       1 - (\tau_{k+2} - \tau_{k+1}) \Tilde{\delta}_k |S| \Pi_i |A|_i = 1 - \delta/m
       \\ \\ 
       \implies
       (\tau_{k+2} - \tau_{k+1}) \Tilde{\delta}_k |S| \Pi_i |A|_i = \delta/m.
        \\ \\ 
       \implies
        \Tilde{\delta}_k = \delta/(\tau_{k+2} - \tau_{k+1})m |S| \Pi_i |A|_i.
    \end{array}
\end{equation}

Thus taking $\Tilde{\delta}_k = \frac{\delta}{m(\tau_{k+2} - \tau_{k+1})|S| |A| }$ assures $1 - \frac{\delta}{m}$ for each state-joint action pair. As a result we have, 

\begin{equation}
    \begin{array}{l}
         \tau^\omega_k 
         = \Theta \Big(\frac{L^{1+3\omega}Q^{2,j}_{\max} \ln(|S|\Pi_i |A|_i m \tau^\omega_k/\delta ) }{\Tilde{\epsilon}^2} \Big) 
         \\ \\
         = \Theta \Big(\frac{L^{1+3\omega}Q^{2,j}_{\max} \ln(|S|\Pi_i |A|_i m Q^{j}_{\max}/(\delta \Tilde{\epsilon}) ) }{\Tilde{\epsilon}^2} \Big)
    \end{array}
\end{equation}

Setting $\Tilde{\epsilon} = (1 - 2/e) \beta D_k$ gives the desired bound.

\end{proof}

Now that we have bounded for each iteration the time needed to achieve the desired probability $1 - \frac{\delta}{m}$. The next lemma provides a bound for error in all iterations.

\begin{lemm}\label{lemm:Wboundlemma}
Consider the low-$Q$ update given in Eq.~\ref{eq:lowlevelq}, with a polynomial learning rate. With probability $1-\delta$, for every iteration $k \in [1, m]$ and time $t \in [\tau_{k+1}, \tau_{k+2}] $ we have $|W^j_{t;\tau_k}(s, \boldsymbol{a}) | \leq (1 - \frac{2}{e})\beta D^j_k$, i.e., 
\begin{equation}
    \begin{array}{l}
         Pr \Big[ \forall k \in [1,m], \forall t \in [\tau_{k+1}, \tau_{k+2}], \forall s, \boldsymbol{a}:
         |W^j_{t;\tau_k} (s, \boldsymbol{a})| \\ \quad \quad \quad \quad \quad \leq (1 - \frac{2}{e}) \beta D^j_k \Big] 
         \geq 1 - \delta
    \end{array}
\end{equation}

given that 

\begin{equation}
  \tau_0 = \Theta \Big( \big( \frac{L^{1 + 3\omega} Q^{2,j}_{\max} \ln(Q^j_{\max} |S|\Pi_i |A|_im/(\delta \beta \epsilon))}{\beta^2 \epsilon^2} \big)^{1/\omega} \Big)  
\end{equation}

\end{lemm}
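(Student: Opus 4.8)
The plan is to derive Lemma~\ref{lemm:Wboundlemma} from the single-iteration estimate of Lemma~\ref{lemma:probabilitybound} by a union bound over the $m$ iterations. First I would fix $m = \lceil \tfrac{1}{\beta}\ln(Q^j_{\max}/\epsilon)\rceil$, which by Lemma~\ref{lemm:Dkbound} is the smallest index with $D^j_m \leq \epsilon$. Minimality of $m$ gives $D^j_{m-1} > \epsilon$, hence $D^j_m = (1-\beta) D^j_{m-1} > (1-\beta)\epsilon$, and since $\beta = (1-\gamma)/2 < \tfrac{1}{2}$ we get $D^j_m \in (\epsilon/2,\, \epsilon]$; consequently $D^j_k \in (\epsilon/2,\, Q^j_{\max}]$ for every $k \in [1,m]$. (Recall the target bound $(1-\tfrac{2}{e})\beta D^j_k$ is chosen precisely so that, added to the $Y$-bound $(\gamma+\tfrac{2}{e}\beta)D^j_k$ of Lemma~\ref{lemma:Yboundlemma}, it gives $D^j_{k+1}$; that combination is used elsewhere, so here we only need the $W$ piece.)

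Second, I would verify that, with $\tau_0$ chosen as in the statement, the hypothesis of Lemma~\ref{lemma:probabilitybound} is met simultaneously for all $k\in[1,m]$. For a fixed $k$, Lemma~\ref{lemma:probabilitybound} requires $\tau_k$ to be at least of order $\big(\frac{L^{1+3\omega}Q^{2,j}_{\max}\ln(Q^j_{\max}|S|\Pi_i |A|_i m/(\delta\beta D^j_k))}{\beta^2 (D^j_k)^2}\big)^{1/\omega}$; denote this quantity $\tau^{\mathrm{req}}_k$. Since $D^j_k$ decreases geometrically in $k$ and the map $D \mapsto \big(\ln(c/D)/D^2\big)^{1/\omega}$ increases as $D$ decreases, $\tau^{\mathrm{req}}_k$ is nondecreasing in $k$, so the binding constraint is $k=m$; and because $D^j_m = \Theta(\epsilon)$, the quantity $\tau^{\mathrm{req}}_m$ has exactly the order prescribed for $\tau_0$. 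Choosing the implicit constant in $\tau_0$ so that $\tau_0 \geq \tau^{\mathrm{req}}_m$, and noting that the recursion $\tau_{k+1} = \tau_k + L\tau^\omega_k$ makes $(\tau_k)$ strictly increasing, we obtain $\tau_k \geq \tau_0 \geq \tau^{\mathrm{req}}_m \geq \tau^{\mathrm{req}}_k$ for every $k$, so Lemma~\ref{lemma:probabilitybound} applies to each iteration.

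Third, I would apply Lemma~\ref{lemma:probabilitybound} once per iteration: for each $k\in[1,m]$, with probability at least $1-\delta/m$, $|W^j_{t;\tau_k}(s,\boldsymbol{a})| \leq (1-\tfrac{2}{e})\beta D^j_k$ for all states $s$, joint actions $\boldsymbol{a}$, and $t\in[\tau_{k+1},\tau_{k+2}]$. A union bound over $k=1,\dots,m$ then shows the complementary event --- failure at some iteration --- has probability at most $m\cdot(\delta/m)=\delta$, which is exactly the claimed statement.

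The main obstacle is the bookkeeping in the second step: one must check that a single $\tau_0$, with one fixed constant, dominates $\tau^{\mathrm{req}}_k$ uniformly over $k$. This needs (i) the monotonicity of $\tau^{\mathrm{req}}_k$ in $k$, (ii) keeping $D^j_m$ bounded below by a constant multiple of $\epsilon$ so that $\tau^{\mathrm{req}}_m$ really is $\Theta$ of the stated order in $\epsilon$ --- handled above via minimality of $m$ together with $\beta<\tfrac{1}{2}$ --- and (iii) absorbing the $\ln m$ factor from the union bound and the self-referential $\ln(\tau_k^\omega)$ term into the $\Theta(\cdot)$, the latter already collapsed inside the proof of Lemma~\ref{lemma:probabilitybound} via the substitution $\tilde{\epsilon}=(1-\tfrac{2}{e})\beta D^j_k$. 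Everything past that is a routine union bound.
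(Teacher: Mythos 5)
Your proposal is correct and follows essentially the same route as the paper: apply Lemma~\ref{lemma:probabilitybound} once per iteration $k$ and take a union bound over $k \in [1,m]$ so that the failure probability is at most $m \cdot (\delta/m) = \delta$. In fact you supply more detail than the paper does --- the paper's proof is just the union bound, whereas your second step (checking that the stated $\tau_0$ dominates the per-iteration requirement uniformly in $k$, using the monotonicity of $\tau_k^{\mathrm{req}}$ and $D^j_m = \Theta(\epsilon)$) makes explicit a verification the paper leaves implicit.
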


\begin{proof}

From Lemma~\ref{lemma:probabilitybound} we have that 

\begin{equation}
    \begin{array}{l}
         Pr\Big[ \forall t \in [\tau_{k+1}, \tau_{k+2}]: |W^j_{t;\tau_k}| \geq (1 - \frac{2}{e}) \beta D^j_k \Big] \leq \frac{\delta}{m}
    \end{array}
\end{equation}

Using the union bound we have that 

\begin{equation}
    \begin{array}{l}
         Pr[ \forall k \leq m, \forall t \in [\tau_{k+1}, \tau_{k+2}] |W^j_{t;\tau_k}| \geq \Tilde{\epsilon}]
         \\ \\ 
         \leq \sum_{k=1}^m Pr [\forall t \in [\tau_{k+1}, \tau_{k+2}] |W^j_{t;\tau_k}| \geq \Tilde{\epsilon}] \leq \delta
    \end{array}
\end{equation}

where $\Tilde{\epsilon} = (1 - \frac{2}{e}) \beta D_k$.

\end{proof}

The next lemma solves the recurrence $\sum_{i=0}^{m-1} L\tau^\omega_i + \tau_0$ and derives the time complexity. 

\begin{lemm}\label{lemm:abound}
Let 
\begin{equation}
    a_{k+1} = a_k + La^\omega_k = a_0 + \sum_{i=0}^k La^\omega_i
\end{equation}
Then for any constant $\omega \in (0,1)$, $a_k = \Omega(a_0^{1 - \omega}/L + L^{\frac{1}{1-\omega}} ((k+1)/2)^{\frac{1}{1-\omega}})$.

\end{lemm}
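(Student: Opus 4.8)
The plan is to linearize the recurrence by passing to the variable $b_k := a_k^{\,1-\omega}$, which turns the nonlinear iteration $a_{k+1}=a_k+La_k^{\omega}$ into an (essentially) arithmetic progression, and then to invert the substitution. This is the discrete analogue of the ODE $\dot a = La^{\omega}$, whose solution is $a(t)=(a_0^{\,1-\omega}+(1-\omega)Lt)^{1/(1-\omega)}$, so one should expect $a_k=\Theta\big((a_0^{\,1-\omega}+Lk)^{1/(1-\omega)}\big)$, which splits precisely into the two claimed terms. The argument will closely follow Lemma~25 of Even-Dar and Mansour~\cite{Dar2003learning}, specialized to this recurrence.

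First I would note that since $La_k^{\omega}\ge 0$, the sequence $(a_k)$ is non-decreasing, so $a_k\ge a_0$ for all $k$. The crucial estimate is a lower bound on the increment of $b_k$: applying the mean value theorem to $x\mapsto x^{1-\omega}$, whose derivative $(1-\omega)x^{-\omega}$ is decreasing, gives
\[
 b_{k+1}-b_k \;=\; a_{k+1}^{\,1-\omega}-a_k^{\,1-\omega}\;\ge\;(1-\omega)\,a_{k+1}^{-\omega}(a_{k+1}-a_k)\;=\;(1-\omega)L\Big(\tfrac{a_k}{a_{k+1}}\Big)^{\omega}\;=\;(1-\omega)L\Big(\tfrac{b_k}{b_k+L}\Big)^{\omega}.
\]
Hence whenever $b_k\ge L$ (equivalently $a_k\ge L^{1/(1-\omega)}$) we get $b_{k+1}-b_k\ge (1-\omega)L\,2^{-\omega}\ge (1-\omega)L/2$, i.e.\ $b_k$ grows at least linearly with slope $\Theta(L)$. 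I would also observe that while $b_k<L$ one has $La_k^{\omega}=a_k\cdot(L/b_k)\ge a_k$, so $a_{k+1}\ge 2a_k$; thus $a_k$ at least doubles each step during this initial phase, which therefore lasts only $O\big((1-\omega)^{-1}\log(L^{1/(1-\omega)}/a_0)\big)$ steps and only increases $a_k$. In the regime used for Theorem~\ref{theorem:lowqsamplecomplexity}, $a_0=\tau_0$ is polynomially large, so this phase is empty.

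Assuming $a_0\ge L^{1/(1-\omega)}$ (the main regime), the previous step gives $b_k\ge b_0+c(1-\omega)Lk\ge a_0^{\,1-\omega}+c(1-\omega)Lk$ for an absolute constant $c>0$. Raising to the power $p:=1/(1-\omega)\ge1$ and using superadditivity $(x+y)^{p}\ge x^{p}+y^{p}$ for $x,y\ge0$, $p\ge1$,
\[
 a_k \;=\; b_k^{\,p}\;\ge\; a_0 \;+\; (c(1-\omega))^{p}L^{p}k^{p}\;\ge\; a_0 \;+\; c'\,L^{1/(1-\omega)}\Big(\tfrac{k+1}{2}\Big)^{1/(1-\omega)}
\]
for an absolute constant $c'>0$ (using $k\ge (k+1)/2$ for $k\ge1$). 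Finally, since $L\ge1$ and $a_0\ge L^{1/(1-\omega)}$ forces $La_0^{\omega}\ge1$, we have $a_0\ge a_0^{\,1-\omega}/L$, so $a_k\ge a_0\ge a_0^{\,1-\omega}/L$ as well; combining the two lower bounds yields $a_k=\Omega\big(a_0^{\,1-\omega}/L + L^{1/(1-\omega)}((k+1)/2)^{1/(1-\omega)}\big)$, which is the claim.

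I expect the main obstacle to be the bookkeeping around the initial (small-$a_0$) phase and the constants: one must count the "large-regime" steps precisely so that the linear-growth term is genuinely $\Omega(k)$ and not swallowed by the logarithmic length of the initial phase — relevant because in the application $k=m=\Theta(\beta^{-1}\ln(Q^j_{\max}/\epsilon))$ is itself only logarithmic — and one must check that the (weaker) first term $a_0^{\,1-\omega}/L$ is really implied by $a_k\ge a_0$ in all cases, including very small $a_0$, where a handful of doubling steps must be run before the bound takes hold. The two core inequalities (the MVT bound on $b_{k+1}-b_k$ and the superadditivity step) are routine.
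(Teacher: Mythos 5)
Your proof is correct in the regime that actually matters, but it takes a genuinely different route from the paper. The paper works directly with a comparison sequence seeded at $b_0 = L^{\frac{1}{1-\omega}}$ and proves $b_k \geq (L(k+1)/2)^{\frac{1}{1-\omega}}$ by induction on the recurrence itself, then accounts for a general $a_0$ via an informal ``the start point has moved by $\Theta(a_0^{1-\omega}/L)$'' index-shift argument. You instead linearize by the substitution $b_k = a_k^{1-\omega}$, use the mean value theorem to show $b_{k+1}-b_k \geq (1-\omega)L\big(\tfrac{b_k}{b_k+L}\big)^{\omega} \geq (1-\omega)L/2$ once $b_k \geq L$, and invert via superadditivity of $x\mapsto x^{1/(1-\omega)}$. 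Your route buys several things: it makes the asymptotic $a_k=\Theta\big((a_0^{1-\omega}+Lk)^{1/(1-\omega)}\big)$ transparent (it is exactly the ODE solution), it replaces the paper's index-shift heuristic with the honest observation that the first term $a_0^{1-\omega}/L$ is already implied by $a_k\geq a_0$ whenever $La_0^{\omega}\geq 1$, and it carries the $\omega$-dependent constant $((1-\omega)/2)^{1/(1-\omega)}$ explicitly rather than claiming the clean inner constant $1/2$ --- the paper's inductive step $(k/2)^{\frac{1}{1-\omega}}+(k/2)^{\frac{\omega}{1-\omega}} \geq ((k+1)/2)^{\frac{1}{1-\omega}}$ actually fails for $\omega$ close to $1$ (e.g.\ $k=1$, $\omega=0.9$), whereas your constant is legitimately absorbed into the $\Omega(\cdot)$ since $\omega$ is a fixed constant. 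Your worry about the small-$a_0$ phase is also well founded: for $a_0 \ll L^{\frac{1}{1-\omega}}$ the stated bound is literally false at small $k$ (take $a_0=1$, $L=10^6$, $\omega=1/2$, $k=1$), so the hypothesis $a_0 \gtrsim L^{\frac{1}{1-\omega}}$ is genuinely needed; the paper silently assumes it, and it holds in the application since $a_0=\tau_0$ is polynomially large, so your restriction to that regime loses nothing.
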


\begin{proof}

Let us define the following series 

\begin{equation}
    \begin{array}{l}
         b_{k+1} = \sum_{i=0}^k Lb^\omega_i + b_0
    \end{array}
\end{equation}

\noindent with an initial condition $b_0 = L^{\frac{1}{1-\omega}}$.






Now we lower bound $b_k$ by $(L(k+1)/2)^{\frac{1}{1-\omega}}$. We use induction to prove this hypothesis. For $k=0$, 

\begin{equation}
    b_0 = L^{\frac{1}{1-\omega}} \geq (\frac{L}{2})^\frac{1}{1-\omega}
\end{equation}

Assume that the induction hypothesis holds for $k-1$ and prove for $k$, 

\begin{equation}
    \begin{array}{l}
         b_k = b_{k-1} + L b^\omega_{k-1} 
         \\ \\ 
         = (Lk/2)^{\frac{1}{1-\omega}} + L(Lk/2)^{\frac{\omega}{1-\omega}}
         \\ \\ 
         = L^\frac{1}{1-\omega}((k/2)^\frac{1}{1-\omega} + (k/2)^\frac{\omega}{ 1 - \omega})
         \\ \\ 
         \geq L^\frac{1}{1-\omega}((k+1)/2)^\frac{1}{1-\omega}.
    \end{array}
\end{equation}

For $a_0 \geq L^\frac{1}{1-\omega}$ we can view the series as starting at $b_k = a_0$. Since the first time step is 1, we can see that the start point has moved $\Theta (a_0^{1-\omega}/L)$. Therefore, we have a total complexity of $\Omega(a_0^{1 - \omega}/L + L^{\frac{1}{1-\omega}} ((k+1)/2)^{\frac{1}{1-\omega}})$.

\end{proof}

\begin{theorem2}
Let us specify that with probability at least $1 - \delta$, for an agent $j$, $||Q^j_T - Q^j_* ||_\infty \leq \epsilon$. The bound on the rate of convergence of low-$Q$, $Q^j_T$, with a polynomial learning rate of factor $\omega$ is given by (with $Q^j_*$ as the Nash $Q$-value of $j$)

\begin{equation*}
\begin{array}{l}
      T = \Omega \Big( \Big(\frac{L^{1 + 3\omega}Q^{2,j}_{\max}\ln(\frac{|S|\Pi_i |A_i| Q^j_{\max}}{\delta \beta \epsilon})}{\beta^2 \epsilon^2} \Big)^{1 - \omega}/L \\ \quad \quad \quad \quad \quad \quad \quad \quad \quad \quad \quad + \Big( (\frac{L}{\beta} \ln \frac{Q^j_{\max}}{\epsilon}  + 1)/2 \Big)^{\frac{1}{1-\omega}}\Big).  
    \end{array}
\end{equation*}

\end{theorem2}

\begin{proof}
The proof of Theorem~\ref{theorem:lowqsamplecomplexity} follows from Lemmas~\ref{lemma:Yboundlemma}, \ref{lemm:Wboundlemma}, \ref{lemm:Dkbound}, and \ref{lemm:abound}. 

Specifically, from the relation in Lemma~\ref{lemm:abound} substitute the value of $a_0$ from the Lemma~\ref{lemm:Wboundlemma} (value of $\tau_0$), and value of $k$ from Lemma~\ref{lemm:Dkbound} (lower bound for $m$). From the Lemma~\ref{lemma:Yboundlemma} and Lemma~\ref{lemm:Wboundlemma}, we see that the condition required in Lemma~\ref{lemma:boundlemma} is satisfied to provide a lower bound. 

\end{proof}

\section{Proof of Theorem~\ref{theorem:linearlearningrate}}\label{sec:linaerlearningrate}

In this section, we aim to show that the size of the $k$th iteration is $L(1 + \psi) \tau_k$ for some positive constant $\psi \leq 0.712$. The covering time property guarantees that in $(1 + \psi) L \tau_k$ steps, each pair of state-joint actions are performed at least $(1 + \psi) \tau_k$ times. The sequence of times in this case is $\tau_{k+1} = \tau_k + (1 + \psi) L\tau_k$. As in the last section, we will first bound $Y^j_{t;\tau_k}$ and then bound $W^j_{t;\tau_k}$. As in Appendix~\ref{sec:convergence}, we will use $Q$ to denote the low-$Q$ values across this section as well. The proof of this theorem follows the Theorem~5 in Dar and Mansour~\cite{Dar2003learning}. While the work of Dar and Mansour was restricted to single-agent MDPs, our result extends the analysis of Dar and Mansour to the general-sum stochastic game setting. 

\begin{lemm}\label{lemma:Ytlinearbound}
Consider the low-$Q$ update given in Eq.~\ref{eq:lowlevelq}, with a linear learning rate. Assume that for $t \geq  \tau_k$ we have that $Y^j_{t;\tau_k} (s, \boldsymbol{a}) \leq D^j_k$. Then for any $t \geq \tau_k + (1 + \psi) L \tau_k = \tau_{k+1}$ we have that $Y^j_{t;\tau_k} (s, \boldsymbol{a}) \leq (\gamma + \frac{2}{2 + \psi}\beta )D^j_k$.
\end{lemm}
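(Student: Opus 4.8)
The plan is to follow the proof of Lemma~\ref{lemma:Yboundlemma} verbatim up to the point where the polynomial learning rate forced the $(1-1/\tau_k^\omega)^{\tau_k^\omega}\to 1/e$ estimate, and there replace that step by the \emph{exact} telescoping identity that a linear learning rate makes available. First I would shift the deterministic sequence by its limit: writing $Y^j_{t;\tau_k}(s,\boldsymbol{a}) = \gamma D^j_k + \rho^j_t(s,\boldsymbol{a})$ and substituting into the defining recursion $Y^j_{t+1;\tau_k} = (1-\alpha_t)Y^j_{t;\tau_k} + \alpha_t\gamma D^j_k$ yields the homogeneous recursion $\rho^j_{t+1} = (1-\alpha_t)\rho^j_t$, with initial value $\rho^j_{\tau_k} = Y^j_{\tau_k;\tau_k}-\gamma D^j_k = (1-\gamma)D^j_k = 2\beta D^j_k$, using the definition $Y^j_{\tau_k;\tau_k}=D^j_k$ and $\beta=(1-\gamma)/2$. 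Since every factor $1-\alpha_t$ lies in $(0,1)$, $\rho^j_t$ is non-increasing in $t$ (the analogue of Lemma~\ref{lemma:Yknature}), so it suffices to establish the bound at $t=\tau_{k+1}$, and it then holds for all $t\ge\tau_{k+1}$.

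Next I would evaluate the product. Because the learning rate acts only on the steps at which $(s,\boldsymbol{a})$ is actually performed, and on the $m$-th such visit $\alpha_t = 1/m$, we obtain, with $n_0 := \#(s,\boldsymbol{a},\tau_k)$ and $n_t := \#(s,\boldsymbol{a},t)$,
\[
\rho^j_t(s,\boldsymbol{a}) \;=\; 2\beta D^j_k\prod_{m=n_0+1}^{n_t}\Bigl(1-\tfrac{1}{m}\Bigr) \;=\; 2\beta D^j_k\cdot\frac{n_0}{n_t}.
\]
Now I invoke the covering-time hypothesis as stated just before the lemma: over the interval $[\tau_k,\tau_{k+1}]$, which has length $\tau_{k+1}-\tau_k=(1+\psi)L\tau_k$, each state-joint-action pair is performed at least $(1+\psi)\tau_k$ times, so $n_{\tau_{k+1}}\ge n_0+(1+\psi)\tau_k$. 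Combining this with the trivial bound $n_0\le\tau_k$ (at most one visit per step) gives $n_{\tau_{k+1}}\ge n_0+(1+\psi)n_0=(2+\psi)n_0$, hence
\[
\rho^j_{\tau_{k+1}}(s,\boldsymbol{a}) \;\le\; 2\beta D^j_k\cdot\frac{n_0}{(2+\psi)n_0} \;=\; \frac{2}{2+\psi}\beta D^j_k,
\]
and therefore $Y^j_{t;\tau_k}(s,\boldsymbol{a}) = \gamma D^j_k+\rho^j_t(s,\boldsymbol{a}) \le \bigl(\gamma+\tfrac{2}{2+\psi}\beta\bigr)D^j_k$ for all $t\ge\tau_{k+1}$, as claimed.

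The main obstacle, exactly as in the polynomial case, is the covering-time bookkeeping: one must keep the cumulative count $n_0$ accumulated up to $\tau_k$ (which may lie anywhere in $[\tau_k/L,\tau_k]$) distinct from the incremental count over $[\tau_k,\tau_{k+1}]$, and the estimate closes precisely because the incremental count $(1+\psi)\tau_k$ dominates $(1+\psi)n_0$. A secondary point is to confirm that $\rho^j_t$ is genuinely deterministic and monotone, so that verifying the bound at $t=\tau_{k+1}$ is enough. The precise numerical restriction $\psi\le 0.712$ does not enter this particular estimate (any $\psi>0$ suffices here); it is used only later, when this bound on $Y^j_{t;\tau_k}$ is combined with the martingale bound on $W^j_{t;\tau_k}$ — which must be shown to be at most $\tfrac{\psi}{2+\psi}\beta D^j_k$ so that $Y^j_{t;\tau_k}+W^j_{t;\tau_k}\le(\gamma+\beta)D^j_k = D^j_{k+1}$ — and when the resulting time bound is optimised in Theorem~\ref{theorem:linearlearningrate}.
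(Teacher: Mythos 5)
Your proof is correct and reaches the stated bound, but it departs from the paper's argument at the decisive step. The paper bounds each factor of $\prod_{l}(1-\alpha_{l+\tau_k})$ from above by $1-\tfrac{1}{(1+\psi)\tau_k}$ and then invokes $(1-1/x)^x\le 1/e$, landing on $\rho^j_{\tau_{k+1}}\le \tfrac{2}{e}\beta D^j_k$ and only then converting $1/e$ into $\tfrac{1}{2+\psi}$ --- which is exactly where the numerical restriction $\psi\le 0.712\approx e-2$ is consumed. You instead exploit the fact that a linear learning rate makes the product telescope exactly, $\prod_{m=n_0+1}^{n_t}\bigl(1-\tfrac{1}{m}\bigr)=\tfrac{n_0}{n_t}$, and close the estimate via $n_{\tau_{k+1}}\ge n_0+(1+\psi)\tau_k\ge(2+\psi)n_0$, using $n_0=\#(s,\boldsymbol{a},0,\tau_k)\le\tau_k$. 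This is cleaner and strictly tighter: it sidesteps the paper's per-factor bound (whose exponent bookkeeping somewhat conflates elapsed time steps with visit counts) and, as you observe, renders $\psi\le 0.712$ unnecessary for this lemma --- though note that in the paper's own proof that constraint is used precisely here, in the final inequality $\tfrac{2}{e}\beta D^j_k\le\tfrac{2}{2+\psi}\beta D^j_k$, and not only downstream. Both routes yield the same conclusion $Y^j_{t;\tau_k}(s,\boldsymbol{a})\le\bigl(\gamma+\tfrac{2}{2+\psi}\beta\bigr)D^j_k$ for all $t\ge\tau_{k+1}$, with your monotonicity remark justifying the reduction to $t=\tau_{k+1}$ exactly as the paper's Lemma~\ref{lemma:Yknature} does in the polynomial case.
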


\begin{proof}

For each state-joint action pair, we are assured that $n(s, \boldsymbol{a}, \tau_k, \tau_{k+1}) \geq (1 + \psi) \tau_k$, since in an interval of $(1 + \psi)L\tau_k$ steps, each state-joint action pair is visited at least $(1 + \psi) \tau_k$ times by the definition of covering time.

Let $Y^j_{\tau_k, \tau_k} (s, \boldsymbol{a}) = \gamma D^j_k + \rho^j_{\tau_k}$, where $\rho^j_{\tau_k} = (1 - \gamma)D^j_k$. We now have, 

\begin{equation}
    \begin{array}{l}
         \rho^j_t = (1 - \gamma) \Pi_{l=1}^{t-\tau_k} (1 - \alpha_{l+\tau_k})
         \\ \\
         = 2 \beta D^j_k \Pi_{l=1}^{t - \tau_k} (1 - \alpha_{l+\tau_k}) 
         \\ \\ 
         = 2 \beta D^j_k \Pi_{l=1}^{t - \tau_k} (1 - \frac{1}{|n(s,\boldsymbol{a}, \tau_{k+1}, l)|}), 
    \end{array}
\end{equation}

\noindent where the last identity follows from the fact that $\alpha_t = \frac{1}{n(s,\boldsymbol{a}, 0, t)}$. Since the $\tau_k$'s are monotonically decreasing, using $t = \tau_k + (1+\psi)L\tau_k$, we get (using $\psi<0.712$), 

\begin{equation}
    \begin{array}{l}
         \rho_t \leq 2 D^j_k \beta (1 - \frac{1}{(1 + \psi)\tau_k})^{t - \tau_k}
         \\ \\
          \leq 
         2 D^j_k \beta (1 - \frac{1}{(1 + \psi)\tau_k})^{1 + \psi L \tau_k}
         \\ \\ 
         \leq 
         2 D^j_k \beta (1 - \frac{1}{(1 + \psi)\tau_k})^{1 + \psi \tau_k}
         \\ \\
         \leq 
         \frac{2 D^j_k \beta}{e} 
         \\ \\ 
         \leq  \frac{2 D^j_k \beta}{2 + \psi}
    \end{array}
\end{equation}

Hence, $Y^j_{t;\tau_k}(s,\boldsymbol{a}) \leq (\gamma + \frac{2}{2 + \psi} \beta) D^j_k$. 

\end{proof}

The following lemma enables the use of Azuma's inequality. 

\begin{lemm}\label{lemma:wmartingalelinear}
For any $t \geq \tau_k$ and $1 \leq l \leq t$ we have that $W^{l,j}_{t;\tau_k}(s, \boldsymbol{a})$ is a martingale sequence, which satisfies, 

\begin{equation}
    |W^{l,j}_{t;\tau_k}(s, \boldsymbol{a}) -  W_{t;\tau_k}^{l-1,j}(s, \boldsymbol{a})| \leq \frac{Q^j_{\max}}{n(s,\boldsymbol{a}, 0, t)}
\end{equation}
\end{lemm}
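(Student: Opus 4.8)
The plan is to prove Lemma~\ref{lemma:wmartingalelinear} by following the same two-step template used for its polynomial-rate counterpart, Lemma~\ref{lemma:martingale}: first show that $W^{l,j}_{t;\tau_k}(s,\boldsymbol a)$, viewed as a process in the index $l$ with $t$, $\tau_k$, $s$, $\boldsymbol a$ fixed, is a martingale with respect to the filtration $(F_{\tau_k+l-1})_l$, and then bound the size of its increments. Recall $W^{l,j}_{t;\tau_k}(s,\boldsymbol a)=\sum_{i=1}^{l}\tilde w^{j,t}_{i+\tau_k}(s,\boldsymbol a)$ with $\tilde w^{j,t}_{i+\tau_k}(s,\boldsymbol a)=\eta^{k,t,j}_{i+\tau_k}(s,\boldsymbol a)\,w^j_{i+\tau_k}(s,\boldsymbol a)$ and $\eta^{k,t,j}_{i+\tau_k}(s,\boldsymbol a)=\alpha_{i+\tau_k}(s,\boldsymbol a)\prod_{m=\tau_k+i+1}^{t}(1-\alpha_m(s,\boldsymbol a))$, where now $\alpha$ is the linear rate $\alpha_m(s,\boldsymbol a)=1/n(s,\boldsymbol a,0,m)$, with the usual asynchronous convention $\alpha_m(s,\boldsymbol a)=0$ on steps at which $(s,\boldsymbol a)$ is not performed (so $\eta^{k,t,j}_{m}(s,\boldsymbol a)=0$ and such steps contribute nothing). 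The only genuine difference from Lemma~\ref{lemma:martingale} is that the linear rate makes the coefficient product telescope exactly, producing the clean bound $Q^j_{\max}/n(s,\boldsymbol a,0,t)$ instead of $(L/\tau_k)^\omega Q^j_{\max}$.

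For the martingale claim, the increment is $W^{l,j}_{t;\tau_k}(s,\boldsymbol a)-W^{l-1,j}_{t;\tau_k}(s,\boldsymbol a)=\tilde w^{j,t}_{l+\tau_k}(s,\boldsymbol a)$. The coefficient $\eta^{k,t,j}_{l+\tau_k}(s,\boldsymbol a)$ is a deterministic function of the visit counts, hence measurable with respect to the conditioning $\sigma$-field, while $w^j_{l+\tau_k}(s,\boldsymbol a)$, by its construction in the equations preceding the lemma, has zero conditional mean given the past. Therefore $\E[W^{l,j}_{t;\tau_k}(s,\boldsymbol a)-W^{l-1,j}_{t;\tau_k}(s,\boldsymbol a)\mid F_{\tau_k+l-1}]=\eta^{k,t,j}_{l+\tau_k}(s,\boldsymbol a)\,\E[w^j_{l+\tau_k}(s,\boldsymbol a)\mid F_{\tau_k+l-1}]=0$, so $W^{l,j}_{t;\tau_k}(s,\boldsymbol a)$ is a martingale in $l$; this part is essentially verbatim from Lemma~\ref{lemma:martingale}.

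For the increment bound I would compute $\eta^{k,t,j}_{l+\tau_k}(s,\boldsymbol a)$ explicitly. Restricting to the steps at which $(s,\boldsymbol a)$ is actually performed and writing $p$ for the running visit count (so $\alpha=1/p$ on the $p$-th such visit), if step $l+\tau_k$ is the $p_l$-th visit to $(s,\boldsymbol a)$ and $N:=n(s,\boldsymbol a,0,t)$ is the total number of visits by time $t$, the telescoping product gives $\eta^{k,t,j}_{l+\tau_k}(s,\boldsymbol a)=\frac{1}{p_l}\prod_{q=p_l+1}^{N}\bigl(1-\tfrac1q\bigr)=\frac{1}{p_l}\cdot\frac{p_l}{N}=\frac{1}{N}$ (and it vanishes on non-visit steps). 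Since $w^j_{l+\tau_k}(s,\boldsymbol a)$ is bounded by $Q^j_{\max}$ in absolute value for every history and state–joint-action pair, $|W^{l,j}_{t;\tau_k}(s,\boldsymbol a)-W^{l-1,j}_{t;\tau_k}(s,\boldsymbol a)|=|\tilde w^{j,t}_{l+\tau_k}(s,\boldsymbol a)|\le\eta^{k,t,j}_{l+\tau_k}(s,\boldsymbol a)\,Q^j_{\max}\le Q^j_{\max}/n(s,\boldsymbol a,0,t)$, which is the claim; note this holds for every $t\ge\tau_k$, not only on the interval $[\tau_{k+1},\tau_{k+2}]$, which is why the statement is unrestricted.

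I expect the main obstacle to be purely bookkeeping rather than conceptual: being careful about the indexing convention (time steps versus number of visits to $(s,\boldsymbol a)$, and the off-by-one in the product $\prod_{m=\tau_k+i+1}^{t}$), about the zero-learning-rate convention on non-visit steps so that only visits contribute, and about the measurability justification for pulling $\eta^{k,t,j}_{l+\tau_k}(s,\boldsymbol a)$ out of the conditional expectation. None of these require ideas beyond Lemma~\ref{lemma:martingale} and the corresponding argument of Dar and Mansour~\cite{Dar2003learning}, adapted to the general-sum stochastic game setting.
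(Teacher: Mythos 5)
Your proposal is correct and follows essentially the same route as the paper: the martingale step is identical, and the increment bound comes from controlling the coefficient $\eta^{k,t,j}_{l+\tau_k}(s,\boldsymbol{a})$. The one place you go beyond the paper is worth keeping: the paper simply writes $\eta^{k,t,j}_{\tau_k+l}(s,\boldsymbol{a})\leq\alpha_{l+\tau_k}=1/n(s,\boldsymbol{a},0,t)$, which as literally stated conflates $1/n(s,\boldsymbol{a},0,l+\tau_k)$ with $1/n(s,\boldsymbol{a},0,t)$, whereas your explicit telescoping $\frac{1}{p_l}\prod_{q=p_l+1}^{N}\bigl(1-\tfrac{1}{q}\bigr)=\frac{1}{N}$ is the computation actually needed to land the denominator $n(s,\boldsymbol{a},0,t)$.
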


\begin{proof}

$W^{l,j}_{t;\tau_k}(s, \boldsymbol{a})$ is a martingale difference sequence since, 

\begin{equation}
    \begin{array}{l}
         \E[W^{l,j}_{t;\tau_k}(s, \boldsymbol{a}) - W^{l-1,j}_{t;\tau_k}(s, \boldsymbol{a}) | F_{\tau_k + l - 1}] 
         \\ \\
         = 
         \E[\eta^{k,t,j}_{\tau_k + l}(s, \boldsymbol{a})w^j_{\tau_k + l}(s, \boldsymbol{a}) | F_{\tau_k + l - 1}] 
        \\ \\ 
        = 
        \eta^{k,t,j}_{\tau_k + l}(s, \boldsymbol{a}) \E[w^j_{\tau_k + l}(s, \boldsymbol{a}) | F_{\tau_k + l - 1}] = 0

    \end{array}
\end{equation}

For linear learning rate we have that $\eta^{k,t,j}_{\tau_k + l}(s, \boldsymbol{a}) 
\leq \alpha_{l+\tau_k} = 1/n(s,\boldsymbol{a}, 0, t)$, thus 

\begin{equation}
    \begin{array}{l}
         |W^{l,j}_{t;\tau_k}(s, \boldsymbol{a}) - W^{l-1,j}_{t;\tau_k}(s, \boldsymbol{a}) | 
         \\ \\ 
         = \eta^{k,t,j}_{\tau_k + l}(s, \boldsymbol{a}) |w^j_{\tau_k + l}(s, \boldsymbol{a}) | \\ \\ \leq \frac{Q^j_{\max}}{n(s,\boldsymbol{a}, 0, t)}. 
    \end{array}
\end{equation}

\end{proof}

The following lemma provides a bound for the stochastic term $W^j_{t;\tau_k}$.

\begin{lemm}\label{lemma:Wtlinearbound}
Consider the low-$Q$ update given in Eq.~\ref{eq:lowlevelq}, with a linear learning rate. With probability at least $1 - \frac{\delta}{m}$ we have that for every state-joint action pair $|W^j_{t;\tau_k}(s, \boldsymbol{a})| \leq \frac{\psi}{2 + \psi} \beta D^j_k$, for any $t > \tau_{k+1}$ and any positive constant $\psi \leq 0.712$, i.e., 

\begin{equation}
    \begin{array}{l}
         Pr\Big[ \forall t \in [\tau_{k+1}, \tau_{k+2}]: W^j_{t;\tau_k} (s, \boldsymbol{a}) \leq \frac{\psi}{2 + \psi} \beta D^j_k \Big] \geq 1 - \frac{\delta}{m}
    \end{array}
\end{equation}

\noindent given that $\tau_k \geq \Theta(\frac{Q^{2,j}_{\max} \ln (Q^j_{\max} |S| \Pi_i |A|_i m)/(\psi \delta \beta D_k)}{\psi^2 \beta^2 D^2_k} )$.

\end{lemm}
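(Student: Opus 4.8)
The plan is to follow the template of Lemma~\ref{lemma:probabilitybound} (its polynomial-rate counterpart), taking advantage of the cleaner telescoping that the linear rate $\alpha_n = 1/n$ affords. First I would identify $W^j_{t;\tau_k}(s,\boldsymbol{a})$ with the terminal term $W^{l,j}_{t;\tau_k}(s,\boldsymbol{a})$ of the martingale introduced just before Lemma~\ref{lemma:wmartingalelinear}, where $l = n(s,\boldsymbol{a},\tau_k,t)$ is the number of visits to $(s,\boldsymbol{a})$ in $[\tau_k,t]$. Lemma~\ref{lemma:wmartingalelinear} already supplies the martingale-difference property together with the increment bound $c_i = Q^j_{\max}/n(s,\boldsymbol{a},0,t)$, which — and this is the feature peculiar to the linear rate — is \emph{uniform in} $i$. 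Consequently the quadratic-variation sum collapses: $\sum_{i=1}^{l} c_i^2 = l\,(Q^j_{\max}/n(s,\boldsymbol{a},0,t))^2 \le Q^{2,j}_{\max}/n(s,\boldsymbol{a},0,t)$, using $l \le n(s,\boldsymbol{a},0,t)$.

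Next I would lower-bound $n(s,\boldsymbol{a},0,t)$ on the window of interest. For $t \ge \tau_{k+1} = \tau_k + (1+\psi)L\tau_k$, the covering-time property guarantees at least $t/L \ge (1+\psi)\tau_k \ge \tau_k$ visits to every pair, hence $\sum_{i} c_i^2 \le Q^{2,j}_{\max}/\tau_k$. Azuma's inequality (applicable by Lemma~\ref{lemma:wmartingalelinear}) then gives, for a fixed pair $(s,\boldsymbol{a})$ and a fixed $t$ in the window, $Pr[\,|W^j_{t;\tau_k}(s,\boldsymbol{a})| \ge \tilde\epsilon\,] \le 2\exp\!\big(-\tilde\epsilon^2 \tau_k / (2 Q^{2,j}_{\max})\big)$; denote this bound by $\tilde\delta_k$. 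I would then take a union bound over the $|S|\Pi_i|A|_i$ state-joint-action pairs and over the $\tau_{k+2}-\tau_{k+1}$ times $t \in [\tau_{k+1},\tau_{k+2}]$, and choose $\tilde\delta_k = \delta/\big(m(\tau_{k+2}-\tau_{k+1})|S|\Pi_i|A|_i\big)$, so that the iteration-$k$ failure probability is at most $\delta/m$, matching the claim.

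Inverting the Azuma bound gives $\tau_k = \Theta\big(Q^{2,j}_{\max}\ln(1/\tilde\delta_k)/\tilde\epsilon^2\big)$. Since $\tau_{k+2}-\tau_{k+1} = \Theta(L^2\tau_k)$ enters only inside the logarithm and $\ln\tau_k$ is lower order relative to the rest, $\ln(1/\tilde\delta_k) = \Theta\big(\ln(Q^j_{\max}|S|\Pi_i|A|_i m/(\delta\tilde\epsilon))\big)$, so no polynomial factor of $L$ survives in front — consistent with the $L$-dependence of Theorem~\ref{theorem:linearlearningrate} being produced later, when the recurrence $\tau_{k+1} = (1+(1+\psi)L)\tau_k$ is unrolled over the $m$ iterations. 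Finally I would substitute the target precision $\tilde\epsilon = \frac{\psi}{2+\psi}\beta D^j_k$; this value is dictated by Lemma~\ref{lemma:boundlemma}, because adding it to the $Y$-bound $(\gamma + \frac{2}{2+\psi}\beta)D^j_k$ of Lemma~\ref{lemma:Ytlinearbound} yields exactly $(\gamma+\beta)D^j_k = (1-\beta)D^j_k = D^j_{k+1}$. Plugging $\tilde\epsilon = \Theta(\psi\beta D_k)$ into the inverted bound gives the advertised $\tau_k \ge \Theta\big(Q^{2,j}_{\max}\ln(Q^j_{\max}|S|\Pi_i|A|_i m/(\psi\delta\beta D_k))/(\psi^2\beta^2 D_k^2)\big)$.

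The step I expect to require the most care is the bookkeeping that keeps $L$ and the window length strictly inside the logarithm: one must verify, via the explicit form of $\eta^{k,t,j}_{i+\tau_k}$ under the linear rate and the telescoping product $\prod_l (1-1/n_l)$, that the uniform increment bound of Lemma~\ref{lemma:wmartingalelinear} genuinely holds with the \emph{global} count $n(s,\boldsymbol{a},0,t)$ in the denominator (not a per-step count), since that is precisely what prevents an $L$ (or $\psi^{-1}$) prefactor from leaking into $\tau_k$. I would also carry through the constraint $\psi \le 0.712$ inherited from Lemma~\ref{lemma:Ytlinearbound}, which is what makes the identity $\frac{2}{2+\psi}+\frac{\psi}{2+\psi}=1$ usable in matching the $Y$- and $W$-bounds against $D^j_{k+1}$.
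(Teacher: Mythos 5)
Your proposal is correct in its essentials and matches the paper's skeleton — Azuma's inequality applied to the martingale of Lemma~\ref{lemma:wmartingalelinear}, a union bound over state--joint-action pairs and times, and the choice $\tilde\epsilon = \frac{\psi}{2+\psi}\beta D^j_k$ so that the $W$-bound and the $Y$-bound of Lemma~\ref{lemma:Ytlinearbound} sum to $(1-\beta)D^j_k = D^j_{k+1}$ as required by Lemma~\ref{lemma:boundlemma}. You also correctly isolate the feature that makes the linear case clean: the telescoping $\eta^{k,t,j}_{i+\tau_k} = \alpha_{i+\tau_k}\prod_{l>i+\tau_k}(1-\alpha_l) \le 1/n(s,\boldsymbol{a},0,t)$, uniform in $i$, so the quadratic variation collapses to $Q^{2,j}_{\max}/n(s,\boldsymbol{a},0,t)$.

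The one place you genuinely diverge is the union bound over time. You import the polynomial-case device of Lemma~\ref{lemma:probabilitybound}: bound each $t$ in the window $[\tau_{k+1},\tau_{k+2}]$ uniformly by $2\exp(-c\tilde\epsilon^2\tau_k/Q^{2,j}_{\max})$ and multiply by the window length $\tau_{k+2}-\tau_{k+1}=\Theta(L^2\tau_k)$. The paper instead sums the per-$t$ tail bound over \emph{all} $t\ge\tau_{k+1}$, using an indicator $\zeta_t$ for update times so that $n(s,\boldsymbol{a},0,t)$ increments along the sum; the tail probabilities then form a geometric series whose value is $\Theta\big(Q^{2,j}_{\max}e^{-c'\tau_k\tilde\epsilon^2/Q^{2,j}_{\max}}/\tilde\epsilon^2\big)$, which is why the paper's logarithm contains $Q^j_{\max}/\tilde\epsilon$ and no $L$ or $\tau_k$. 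Your route leaves $\ln(L^2\tau_k)$ inside the logarithm, and the resulting implicit inequality in $\tau_k$ must be resolved self-consistently; this works, but your claim that the extra terms are ``lower order relative to the rest'' is not literally true — $\ln L$ need not be dominated by $\ln(Q^j_{\max}|S|\Pi_i|A_i|m/(\delta\tilde\epsilon))$ — so your version of the lemma is weaker by an additive $\Theta\big(Q^{2,j}_{\max}\ln L/(\psi^2\beta^2 D_k^2)\big)$ inside $\tau_k$. This is harmless downstream, since Theorem~\ref{theorem:linearlearningrate} is already dominated by the factor $(L+\psi L+1)^{\frac{1}{\beta}\ln(Q^j_{\max}/\epsilon)}$, but if you want to reproduce the stated constant exactly you should adopt the paper's geometric-series summation rather than the window-length union bound.
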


\begin{proof}

By Lemma~\ref{lemma:wmartingalelinear} we can apply Azuma's inequality on the term, $W_{t;\tau_k}^{t-\tau_k +1,j}$ (note that $W_{t;\tau_k}^{t-\tau_k +1,j} = W_{t;\tau_k}$), and with the expression $c_i = \Theta \Big(\frac{Q^j_{\max}}{n(s,a,0,t)} \Big)$ for any $t \geq \tau_{k+1}$. Therefore, we derive that,

\begin{equation}
    \begin{array}{l}
         Pr[ |W^j_{t;\tau_k} \geq \Tilde{\epsilon} |] \leq 2e^{\frac{-2e\Tilde{\epsilon}^2}{\sum_{i=\tau_k}^t c^2_i}}
         \leq 2e^{-c\frac{\Tilde{\epsilon}^2 n(s, \boldsymbol{a}, \tau_k, t)}{Q^{2,j}_{\max}}}
    \end{array}
\end{equation}

\noindent for some positive constant c.

Let us define,

\begin{equation}
    \begin{array}{l}
         \zeta_t(s, \boldsymbol{a}) 
         = { 1, \textrm{ if } \alpha_t(s, \boldsymbol{a}) \neq 0}
         \\ 
         \zeta_t(s, \boldsymbol{a}) 
         = { 0, \textrm{ otherwise }}.
    \end{array}
\end{equation}

Using the union bound and the property that, in an interval of length $(1 + \psi) L\tau_k$, each state-joint action pair is visited at least $(1 + \psi) \tau_k$ times, we get 

\begin{equation}
\begin{array}{l}
     Pr\Big[\forall t \in [\tau_{k+1}, \tau_{k+2}]: |W^j_{t;\tau_k}(s, \boldsymbol{a}) | \geq \Tilde{\epsilon} \Big]
     \\ \\ 
     \leq Pr[ \forall t \geq ((1 + \psi) L + 1) \tau_k: |W^j_{t;\tau_k}(s, \boldsymbol{a})| \geq \Tilde{\epsilon}]
     \\ \\
     \leq 
     \sum^\infty_{t = ((1 + \psi)L + 1)\tau_k} Pr\Big[   |W^j_{t;\tau_k}(s, \boldsymbol{a}) | \geq \Tilde{\epsilon} \Big]
     \\ \\
     \leq 
     \sum^\infty_{t = ((1 + \psi)L + 1)\tau_k} \zeta_t(s, \boldsymbol{a})  2e^{-c\frac{\Tilde{\epsilon}^2 n(s, \boldsymbol{a}, 0, t)}{Q^{2,j}_{\max}}}
     \\ \\ 
      \leq 
     2e^{-c\frac{\Tilde{\epsilon}^2 ((1 + \psi)\tau_k) }{Q^{2,j}_{\max}}} \sum_{t=0}^\infty e^{-\frac{t\Tilde{\epsilon}^2}{ Q^{2,j}_{\max}}}
     \\ \\ 
     = \frac{2e^{-c\frac{\Tilde{\epsilon}^2 ((1 + \psi)\tau_k) }{Q^{2,j}_{\max}}} }{1 - e^{\frac{-\Tilde{\epsilon}^2}{Q^{2,j}_{\max}}}}
     \\ \\ 
     = \Theta(\frac{Q^{2,j}_{\max}e^\frac{-c'\tau_k\Tilde{\epsilon}^2}{Q^{2,j}_{\max}} }{\Tilde{\epsilon}^2})
     
\end{array}
\end{equation}

\noindent for some positive constant $c'$. Setting $\frac{\delta}{m |S|\Pi_i |A|_i} = \Theta \Big( \frac{e^-{\frac{c' \tau_k \Tilde{\epsilon}^2}{Q^{2,j}_{\max}}} Q^{2,j}_{\max}}{\Tilde{\epsilon}^2} \Big)$, which holds for $\tau_k = \Theta(\frac{Q^{2,j}_{\max} \ln(Q^j_{\max} |S| \Pi_i |A|_im /(\delta \Tilde{\epsilon}) )}{\Tilde{\epsilon}^2})$, and the expression, $\Tilde{\epsilon} = \frac{\psi}{2 + \psi} \beta D^j_k$ assures us that for every $t \geq \tau_{k+1}$ (and as a result for any $t \in [\tau_{k+1}, \tau_{k+2}]$), with probability at least $1 - \frac{\delta}{m}$ the statement holds at every state-joint action pair.

\end{proof}

We have bounded for each iteration the time needed to achieve the desired probability of $1 - \frac{\delta}{m}$. The following lemma provides a bound for the error in all the iterations. 

\begin{lemm}\label{lemma:everyiterationbound}
Consider the low-$Q$ update given in Eq.~\ref{eq:lowlevelq}, with a linear learning rate. With probability $1 - \delta$, for every iteration $k \in [1,m]$, time $t \in [\tau_{k+1}, \tau_{k+2}]$, and any positive constant $\psi \leq 0.712$, we have $|W^j_{t;\tau_k} | \leq \frac{\psi \beta D^j_k}{ 2 + \psi}$, i.e., 

\begin{equation}
\begin{array}{l}
        Pr\Big[\forall k \in [1,m], \forall t \in [\tau_{k+1}, \tau_{k+2}]:   |W^j_{t;\tau_k}| \leq \frac{\psi \beta D^j_k}{2 + \psi} \Big] 
    \geq 1 - \delta 
\end{array}
\end{equation}
given that $\tau_0 = \Theta \Big( \frac{Q^{2,j}_{\max} \ln(Q^j_{\max} |S| |A| m /(\delta \beta \epsilon \psi) )}{\psi^2 \beta^2 \epsilon^2} \Big)$

\end{lemm}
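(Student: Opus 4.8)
The plan is to follow the proof of Lemma~\ref{lemm:Wboundlemma} from the polynomial-rate case essentially verbatim, now invoking the linear-rate per-iteration estimate of Lemma~\ref{lemma:Wtlinearbound} in place of Lemma~\ref{lemma:probabilitybound}, and using the linear schedule $\tau_{k+1}=\tau_k+(1+\psi)L\tau_k=(1+(1+\psi)L)\tau_k$. Concretely, I would (i) invoke Lemma~\ref{lemma:Wtlinearbound} at each iteration $k\in[1,m]$, (ii) verify that the single choice of $\tau_0$ in the statement makes the hypothesis of Lemma~\ref{lemma:Wtlinearbound} hold simultaneously at every $k\le m$, and (iii) close with a union bound over the $m$ iterations.

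For (i): fixing $k$, Lemma~\ref{lemma:Wtlinearbound} guarantees that, provided $\tau_k\ge\Theta\!\big(\frac{Q^{2,j}_{\max}\ln(Q^j_{\max}|S|\Pi_i|A|_i m/(\psi\delta\beta D_k))}{\psi^2\beta^2 D_k^2}\big)$, one has $\Pr[\exists\, t\in[\tau_{k+1},\tau_{k+2}]:\ |W^j_{t;\tau_k}(s,\boldsymbol{a})|\ge\frac{\psi}{2+\psi}\beta D^j_k]\le\delta/m$ at every state-joint action pair. For (ii) I would use two facts. First, the linear schedule makes $\tau_k$ nondecreasing, so $\tau_k\ge\tau_0$ for all $k$. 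Second, taking $m$ minimal with $D_m\le\epsilon$ (Lemma~\ref{lemm:Dkbound}) and using $D_k=(1-\beta)D_{k-1}$ with $D_{m-1}>\epsilon$, every $k\le m$ satisfies $D_k\ge D_m>(1-\beta)\epsilon=\Omega(\epsilon)$. Hence the per-iteration threshold above is bounded, uniformly over $k\le m$, by $\Theta\!\big(\frac{Q^{2,j}_{\max}\ln(Q^j_{\max}|S|\Pi_i|A|_i m/(\delta\beta\epsilon\psi))}{\psi^2\beta^2\epsilon^2}\big)$ (the constants $(1-\beta)^{-2}$ and $\ln\frac{1}{1-\beta}$ are absorbed into $\Theta$). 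Choosing $\tau_0$ equal to this quantity therefore forces $\tau_k\ge\tau_0\ge$ the threshold required at iteration $k$, for all $k\le m$, so the hypothesis of Lemma~\ref{lemma:Wtlinearbound} is met at every iteration.

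For (iii), combine the $m$ per-iteration failure events by the union bound,
\[
\Pr\!\Big[\exists k\le m,\ \exists t\in[\tau_{k+1},\tau_{k+2}]:\ |W^j_{t;\tau_k}|\ge\tfrac{\psi\beta D^j_k}{2+\psi}\Big]\ \le\ \sum_{k=1}^{m}\frac{\delta}{m}\ =\ \delta,
\]
and pass to the complement to obtain the stated $1-\delta$ guarantee. The only step that is more than mechanical is (ii): one must check that a single $\tau_0\propto\ln(\cdots/\epsilon)/\epsilon^2$ dominates all the per-iteration thresholds $\propto\ln(\cdots/D_k)/D_k^2$ at once. This works precisely because the geometric decay of $D_k$ is truncated at scale $\epsilon$ by the stopping index $m$, so the most demanding threshold is the one with $D_k\asymp\epsilon$, and because $\tau_k\ge\tau_0$ then suffices for all iterations. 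Everything else --- the martingale structure, the Azuma concentration, the zero-mean and boundedness of the increments --- is already packaged inside Lemma~\ref{lemma:Wtlinearbound}.
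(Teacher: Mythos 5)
Your proposal is correct and follows essentially the same route as the paper: invoke the per-iteration bound of Lemma~\ref{lemma:Wtlinearbound} for each $k\in[1,m]$ and close with a union bound over the $m$ iterations. Your step (ii) --- checking that the single $\tau_0\propto\ln(\cdots/\epsilon)/\epsilon^2$ dominates every per-iteration threshold because $\tau_k$ is nondecreasing and $D_k\geq D_m>(1-\beta)\epsilon$ for all $k\leq m$ --- is a detail the paper's proof leaves implicit, and it is a worthwhile addition rather than a deviation.
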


\begin{proof}

From Lemma~\ref{lemma:Wtlinearbound}, we know that 
\begin{equation}
    \begin{array}{l}
         Pr \Big[ \forall t \in [\tau_{k+1}, \tau_{k+2}]: |W^j_{t;\tau_k}| \geq \frac{\psi \beta D^j_k}{ 2 + \psi} \Big] \leq \frac{\delta}{m}.
    \end{array}
\end{equation}

Using the union bound, we have that, 

\begin{equation}
\begin{array}{l}
     Pr \Big[ \forall k \leq m, \forall t \in [\tau_{k+1}, \tau_{k+2}]: |W^j_{t;\tau_k}| \geq \frac{\psi \beta D^j_k}{ 2 + \psi} \Big]
     \\ \\ 
     
     \leq \sum_{k=1}^m Pr\Big[ \forall t \in [\tau_{k+1}, \tau_{k+2}] |W^j_{t; \tau_k}| \geq \frac{ \psi \beta D^j_k }{2 + \psi} \Big]
     \\ \\ 
     \leq \delta.
\end{array}
\end{equation}
\end{proof}

\begin{theorem2}
Let us specify that with probability at least $1 - \delta$, we have for an agent $j$, $||Q^j_T - Q^j_* ||_\infty \leq \epsilon$. The bound on the rate of convergence of low-$Q$, $Q^j_T$, with a linear learning rate is given by
\begin{equation*}
    T = \Omega \Big( (L + \psi L + 1)^{\frac{1}{\beta} \ln \frac{Q^j_{\max}}{\epsilon}} \frac{Q^{2,j}_{\max}\ln(\frac{|S|\Pi_i |A|_i Q^j_{\max}}{\delta \beta \epsilon \psi})}{\beta^2 \epsilon^2 \psi^2}   \Big),
\end{equation*}
\noindent where $\psi$ is a small arbitrary positive constant satisfying $\psi \leq 0.712$

\end{theorem2}

\begin{proof}

The Theorem~\ref{theorem:linearlearningrate} follows from Lemmas~\ref{lemma:everyiterationbound}, \ref{lemma:Ytlinearbound}, \ref{lemm:Dkbound}, and the fact that $a_{k+1} = a_k + (1 + \psi)La_k  = a_0 ((1 + \psi)L+1)^k$. 

Specifically, substitute the value of $k$ from Lemma~\ref{lemm:Dkbound} (lower bound for $m$), value of $a_0$ from Lemma~\ref{lemma:everyiterationbound} (value of $\tau_0$), and see that Lemma~\ref{lemma:Ytlinearbound} and Lemma~\ref{lemma:everyiterationbound} satisfy the condition for the lower bound in Lemma~\ref{lemma:boundlemma}.

\end{proof}

\section{Frequency Of Listening To Advisors}\label{sec:frequency}

This section plots the frequency of listening to each advisor in some of our experiments. We would like to show that the MA-TLQL listens more to the good advisor and avoids the bad advisor more than other related baselines. For these experiments, we consider the TLQL \citep{li2019two} and ADMIRAL-DM \citep{wang2017improving}
algorithms for comparison. Since the CHAT implementation uses the same method to choose advisors as ADMIRAL-DM (weighted random policy approach), we will omit CHAT for these results (performance is similar to ADMIRAL-DM). DQfD uses pretraining and does not choose advisors in an online fashion; hence we omit DQfD for these experiments as well.

\begin{figure}
	\subfloat[Good Advisor (Advisor~1)]{{\includegraphics[width=0.45\textwidth]{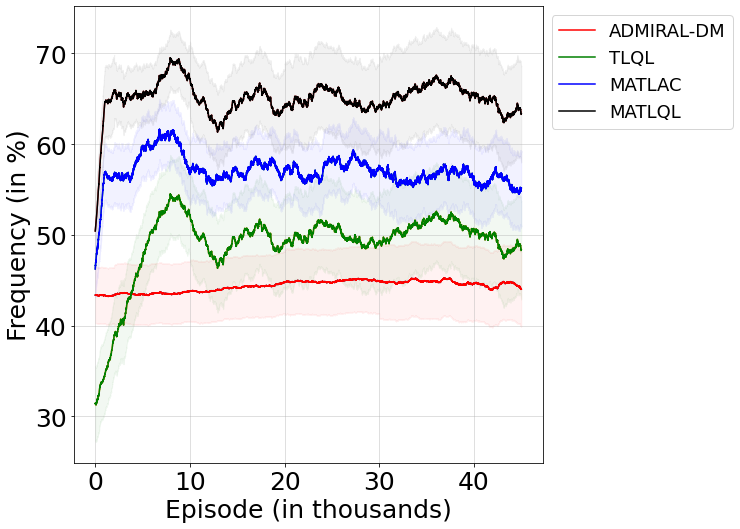}}}
	\\
		\subfloat[Bad Advisor (Advisor~4)]{{\includegraphics[width=0.45\textwidth]{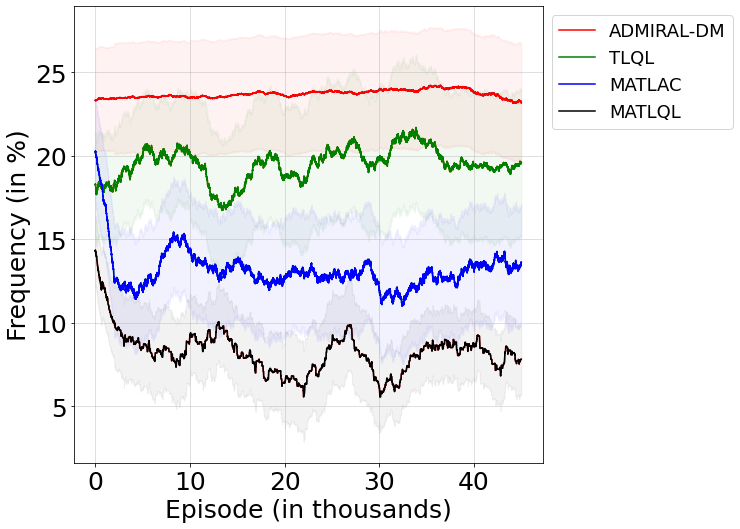}}}

  \caption{Frequency of listening to advisors in the two-agent Pommerman experiment with four sufficient advisors of different quality (Experiment~1)}%
	\label{fig:onevsonefrequency}
\end{figure}

We consider our first experiment in Section~\ref{sec:experiments} (Experiment~1), where we had a set of four sufficient advisors of different quality. The first advisor (Advisor~1) had a better quality than the others, and the agents must listen more to this advisor. On the other hand, Advisor~4 only suggested random actions and the agents are expected to avoid listening to this advisor. In Figure~\ref{fig:onevsonefrequency}(a), we plot a curve that corresponds to the percentage of time steps an algorithm listened to Advisor~1 out of all the time steps the algorithm had an oppourtunity to listen to one of the available advisors. From the plots, we see that MA-TLQL listens more (compared to the other baselines) to this advisor (Advisor~1) from the beginning until the end of training. Since the MA-TLQL uses an ensemble technique to choose an advisor, this gives it a distinct advantage in the early stages of training. Further, since MA-TLQL performs an explicit evaluation of the advisors independent of the RL policy, it manages to listen more to the correct advisor as compared to other baselines. MA-TLAC also listens more to the good advisor as compared to the other baselines (TLQL, ADMIRAL-DM). Since TLQL couples the advisor evaluation with the RL policy, it listens a lot less to the good advisor as compared to MA-TLQL. Also, TLQL considers the RL policy as part of the high-level table, which makes it less reliant on advisors. This could be a problem when good advisors are available. In Figure~\ref{fig:onevsonefrequency}(b), we plot the percentage of each algorithm listening to the bad advisor (Advisor~4). We see that MA-TLQL has the least dependence on this advisor as compared to all other algorithms. This reinforces our observation that MA-TLQL is most likely to choose to listen to the correct advisors in this multi-agent setting.

\begin{figure}
	\subfloat[Good Advisor (Advisor~1)]{{\includegraphics[width=0.45\textwidth]{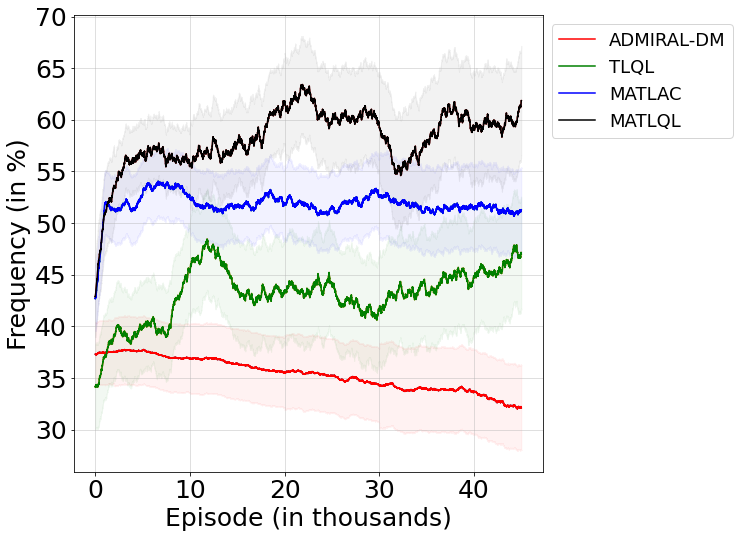}}}
	\quad \quad
		\subfloat[Bad Advisor (Advisor~4) ]{{\includegraphics[width=0.45\textwidth]{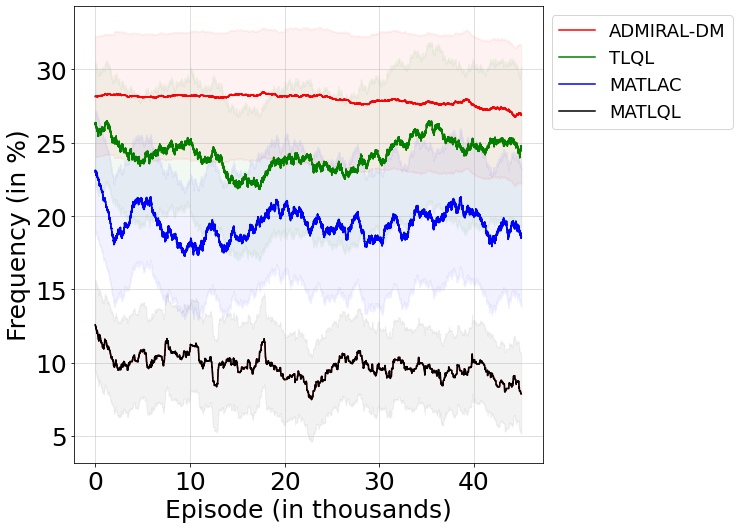}}}

  \caption{Frequency of listening to advisors in the two-agent Pommerman experiment with four insufficient advisors of different quality (Experiment~3)}%
	\label{fig:onevsoneinsufficientfrequency}
\end{figure}

We plot the percentage of listening to Advisor~1 and Advisor~4 in Experiment~3 from Section~\ref{sec:experiments} that had an insufficient set of four advisors of decreasing quality. The results in Figure~\ref{fig:onevsoneinsufficientfrequency}(a) and (b) show that MA-TLQL listens more to the good advisor and less to the bad advisor, same as our observations for Experiment~1.

\begin{figure}
	\subfloat[Good Advisor (Advisor~1)]{{\includegraphics[width=0.45\textwidth]{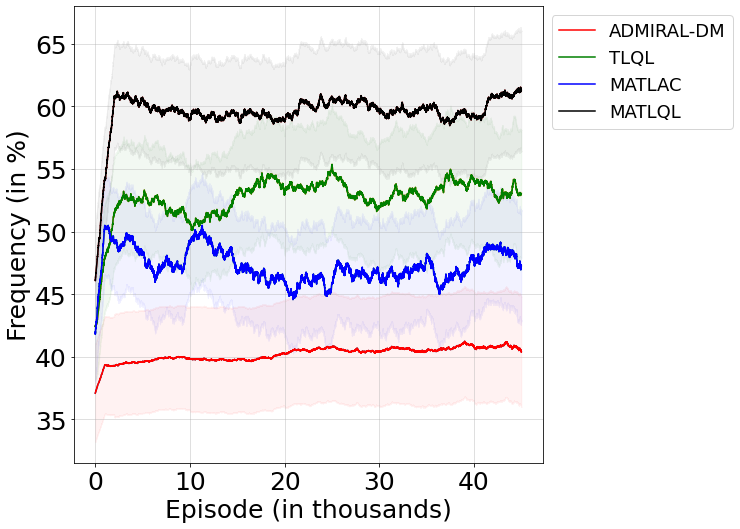}}}
	\quad \quad
		\subfloat[Bad Advisor (Advisor~4)]{{\includegraphics[width=0.45\textwidth]{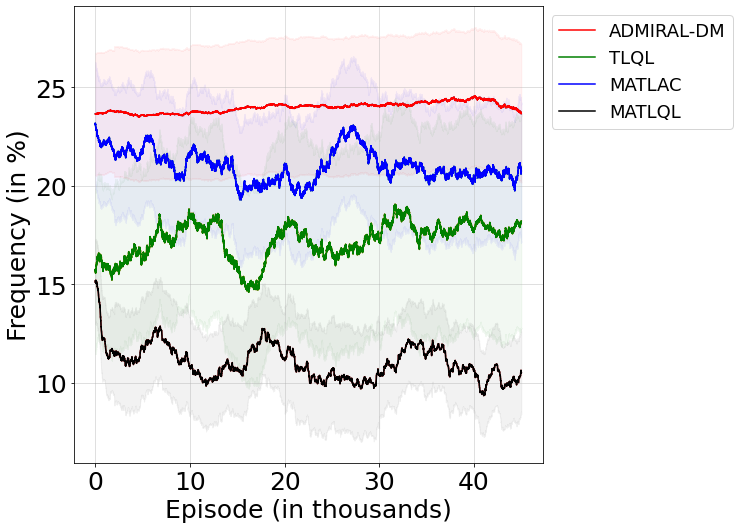}}}

  \caption{Frequency of listening to advisors in the team Pommerman experiment with four sufficient advisors of different quality (Experiment~5)}%
	\label{fig:teamcompsufficientfrequency}
\end{figure}

Similarly, we plot the percentages of listening to the good and bad advisor in the Pommerman team environment (mixed setting) used in Experiment~5 in Figure~\ref{fig:teamcompsufficientfrequency}(a) and (b). Here we plot the results for one of the two pommerman agents playing in the same team (the other agent's results are similar). Once again, we note that MA-TLQL listens more to the correct advisor than the other algorithms and better avoids the bad advisors compared to the other algorithms.

\section{Nature Of Algorithms Considered}\label{sec:natureofalgorithms}

\begin{table}
\centering
\renewcommand{\arraystretch}{2}
\begin{tabular}{||p{0.20 \linewidth}  | p {0.20\linewidth}  | p {0.22\linewidth}  | p {0.24
\linewidth} ||} 
 \hline\hline
 Algorithm & Nature of Algorithm & Number of Advisors & Type of Demonstrations  \\ [0.5ex] 
 \hline\hline
 DQN \cite{mnih2015human}  & Independent  & Does not learn from advising & Not applicable  \\ 
 \hline
 DQfD \cite{hester2018deep}  & Independent  & One & Offline  \\ 
 \hline
  CHAT \cite{wang2017improving} & Independent  & One & Online  \\ 
   \hline
  ADMIRAL-DM \cite{Subramanian2022multiagent} & Multi-agent  & One & Online \\ 
 \hline
  TLQL \cite{li2019two} & Independent  & More than one & Online \\ 
 \hline
MA-TLQL (ours)  & Multi-agent  & More than one & Online \\ [1ex] 
 \hline
 MA-TLAC (ours)  & Multi-agent  & More than one & Online \\ [1ex] 
 \hline
 \hline
\end{tabular}
\caption{Description of all algorithms considered in this paper}
\label{table:baseline}
\end{table}

In Table~\ref{table:baseline}, we tabulate all the algorithms considered in this paper. The differences between the algorithms stem from the nature of the algorithm (independent or multi-agent), ability to naturally support learning from conflict demonstrations (i.e., more than one advisor), and type of demonstrations that they naturally support (offline vs. online). Offline demonstrations are demonstrations that are collected (in a memory buffer) from an advisor before the ``training phase'' where the algorithm is trained using interactions with the environment. These demonstrations are typically used to train the algorithm in a ``pre-training'' phase before regular training, as done in several prior works \cite{kim2013learning, hester2018deep, gao2018reinforcement}. Alternatively, online demonstrations are obtained in real-time during the training phase (and not pre-collected). Here, an agent can actively obtain action recommendations directly from an advisor for the current game context.

In general, in all the experiments considered in this paper, we found that algorithms that consider actions of other agents to provide best-responses (non-independent) performed better than independent algorithms which consider all other agents to be part of the environment. One reason for this behaviour could be the fact that independent algorithms break the Markovian assumptions in reinforcement learning methods \cite{tan1993multi}. Additionally, we found that algorithms that learn from online advising perform better as compared to algorithms that learn from offline advising. If algorithms learn from online advising, then these algorithms can exploit the knowledge of advisors in response to dynamically changing other agent(s), in real-time. Since other agent(s)/opponent(s) are not typically available before training in multi-agent environments, offline demonstrations are not very successful in multi-agent training, as opposed to single-agent training where they were quite successful \cite{hester2018deep}. In this paper, we considered environments where the advising come from multiple sources of independent knowledge. In many states, the different advisors provide conflicting recommendations. Hence, algorithms that can effectively resolve conflicting information from the different advisors were successful as compared to other algorithms that are not capable of naturally supporting learning under multiple conflicting advisors. As seen from Table~\ref{table:baseline}, the only two algorithms that have all the three desirable properties (i.e, support multi-agent update, learn from conflicting advisors, support offline demonstrations) are MA-TLQL and MA-TLAC which gave the best performance in most of the seven experiments considered in Section~\ref{sec:experiments}. 

In our paper, we do not consider HMAT \cite{Kim2020Learning} and LeCTR \cite{omidshafiei2019learning} as baselines, though these can also be classified as action advising methods. We do not consider these algorithms as appropriate benchmarks due to two important reasons. First, they are both restricted to two-agent cooperative settings while we are interested in more general domains including those with more than two agents and both competitive and mixed-motive environments. Second, we are interested in independent learning from a set of external advisors, while both HMAT and LeCTR focus on peer-to-peer learning (partly due to their focus on teaching teams, as opposed to our work which is on more general multi-agent learning problems).

\section{MA-TLQL With Learning Advisors}\label{sec:learningadvisors}

\begin{figure}[t]
	\subfloat[Pursuit Environment ]{{\includegraphics[width=0.44\textwidth]{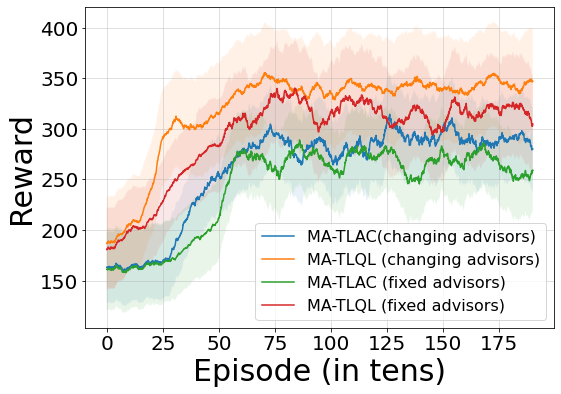}}} 
	\\
			\subfloat[Predator-Prey Environment]{{\includegraphics[width=0.44\textwidth]{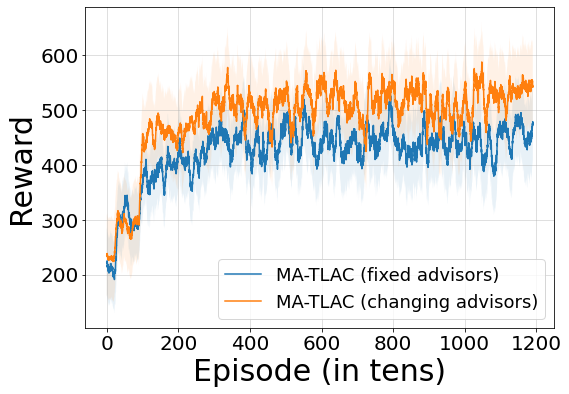}}}

  \caption{Performances of MA-TLQL and MA-TLAC under changing (learning) advisors and fixed advisors. Here (a) corresponds to the setting in Experiment 6 and (b) corresponds to the setting in Experiment 7, described in Section~\ref{sec:experiments}.}%
	\label{fig:learningadvisor}
\end{figure}

In all the experiments in this paper we considered advisors that are fixed and non-changing, which we specified in Section~\ref{sec:extending}. If the advisors are fixed, their $Q$-values can be determined using the high-$Q$ updates in MA-TLQL. The fixed nature of advisors allows us to provide theoretical guarantees of convergence of the high-$Q$ values (under the assumption of infinite updates in the limit) using similar arguments as in Theorem~\ref{theorem:lowqconvergence}. However, experimentally we can still consider other types of advisors which are actively learning during the training stage and hence are updating the policies actively (though using such advisors do not have any theoretical guarantees of convergence). In this section, we will revisit Experiment~6 and Experiment~7 from Section~\ref{sec:experiments} and study the performances of MA-TLQL and MA-TLAC under learning advisors. For statistical significance we use the unpaired 2-sided t-test and report $p$-values, as in the earlier experiments.

First we consider Experiment~6, where we used the Pursuit cooperative environment along with four pre-trained networks of DQN as the advisor. Now, we will consider the same set of four advisors and label them as ``fixed advisors''. Additionally, we will let the same four pre-trained networks of DQN continue training while it is being actively used for action advising. We label this set of four advisors as ``changing advisors''. In Figure~\ref{fig:learningadvisor}(a) we plot the performances (plot of episodic rewards) of MA-TLQL and MA-TLAC along with the fixed as well as the changing advisor set. While using the changing advisor set, we see that MA-TLQL and MA-TLAC outperform their counterparts using the fixed advisors ($p < 0.04$). As the advisors are actively learning and changing their strategies during the training stage, they are able to provide better action recommendations to MA-TLQL and MA-TLAC at the different parts of the environment. This is reflected in their superior performances. 

Similarly, we revisit Experiment~7 with the Predator-Prey environment and consider two different sets of advisors. In Experiment~7, we used four pre-trained DQN networks as advisors. We will reuse the same set of four advisors and label them as the ``fixed advisors''. A continuously training counterpart is labelled as the ``changing advisors''. We plot the performances of MA-TLAC in the Predator-Prey experiment in Figure~\ref{fig:learningadvisor}(b), and once again we notice that MA-TLAC using the changing advisor set outperforms MA-TLAC using the fixed advisor set ($p < 0.03$).  

Hence, experimentally we see that MA-TLQL and MA-TLAC can still be used with changing advisors which provide good empirical performances. However, the non-stationary nature of these advisors makes it impossible to provide theoretical guarantees of convergence. This is similar to using independent algorithms in multi-agent environments, which do not have any theoretical guarantees of convergence to either a local or global optimum, yet empirically, several prior works have noted that these algorithms perform well in various multi-agent environments \cite{hernandez2019survey, matignon2012independent}. 

\section{MA-TLQL With Opponent Modelling}\label{sec:opponentmodelling}

\begin{figure}[ht]
	\subfloat[One vs. One Pommerman: Insufficient different quality advisors]{{\includegraphics[width=0.44\textwidth]{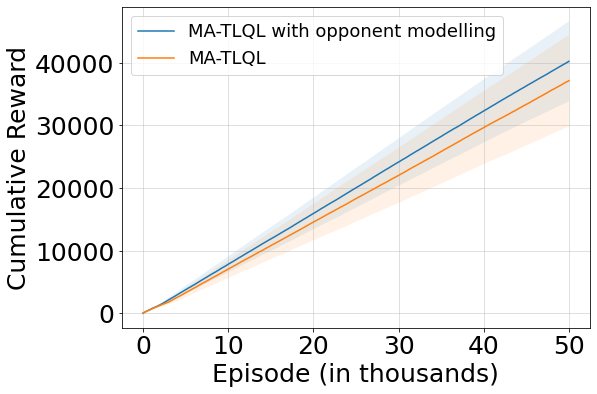}}} 
	\\  
			\subfloat[One vs. One Pommerman: Insufficient similar quality advisors]{{\includegraphics[width=0.44\textwidth]{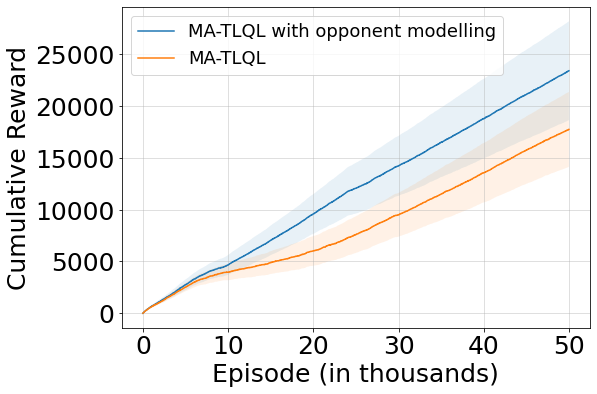}}}

  \caption{Performances of MA-TLQL with and without opponent modelling using four insufficient advisors. Here (a) corresponds to the setting in Experiment~3 and (b) corresponds to the setting in Experiment~4, described in Section~\ref{sec:experiments}.}%
	\label{fig:MA-TLQLopponentmodelling}
\end{figure}

In all the experiments in this paper, the MA-TLQL used a multi-agent update where it simply tracked opponent actions (by considering the previous action) and did not perform any active opponent modelling. The core focus of this paper is learning from advisors, and we chose not to perform any particular opponent modelling technique to keep the algorithm simple. However, in this section, we will implement MA-TLQL along with an opponent modelling technique that uses a separate neural network (2 Relu layers of 50 neurons and an output layer) to predict the action of the opponent. The network uses the state and previous action of the opponent as the input and predicts the action of the opponent. This predicted action of the opponent is used by the agent to calculate best responses. Finally, the actual observed action of the opponent is used to define a cross-entropy loss function that is used to train the network. We revisit Experiment~3 and Experiment~4 in Section~\ref{sec:experiments} where we used four insufficient advisors of different and similar quality. Now, we will use the same experimental procedure (with the same set of advisors) and consider the performances of MA-TLQL with and without active opponent modelling. For statistical significance we use the unpaired 2-sided t-test and report $p$-values.

We plot the performances in Figure~\ref{fig:MA-TLQLopponentmodelling}. From both Figure~\ref{fig:MA-TLQLopponentmodelling}(a) and (b) we see that MA-TLQL with opponent modelling performs better than the MA-TLQL algorithm that does not perform any active opponent modelling. In the Figure~\ref{fig:MA-TLQLopponentmodelling}(a) we see that opponent modelling only provides a small increase in performance ($p < 0.2$). One reason for this is the fact that MA-TLQL without opponent modelling is already showing a high performance in this experiment by learning efficiently from the advisors. Alternatively, Figure~\ref{fig:MA-TLQLopponentmodelling}(b) shows a considerable increase in performance while MA-TLQL is using opponent modelling ($p < 0.04$). Here, the performance of MA-TLQL without opponent modelling is not as good is the performance in Figure~\ref{fig:MA-TLQLopponentmodelling}(a) and opponent modelling shows a marked improvement in performance.

\section{MA-TLQL With Different Numbers Of Advisors}\label{sec:differentadvisors}

\begin{figure}[ht]
    \centering
    \includegraphics[width=0.44 \textwidth]{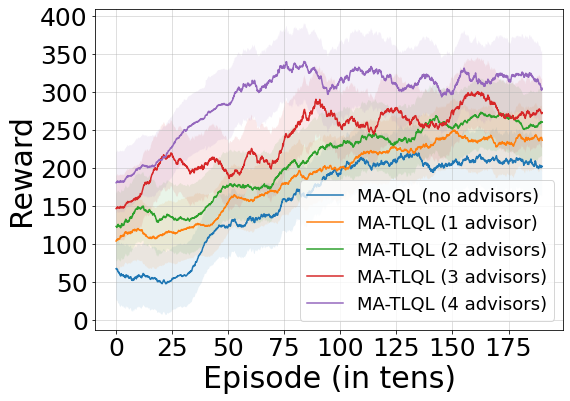}
    \caption{MA-TLQL with different numbers of advisors in the Pursuit environment (setting used in Experiment~6, described in Section~\ref{sec:experiments})}
    \label{fig:matlqldifferentnumbers}
\end{figure}

In this section we train MA-TLQL with different number of advisors in the Pursuit environment considered in Experiment~6 of Section~\ref{sec:experiments}. Recall that in Experiment~6 we considered four pre-trained networks of DQN as the advisor (pre-trained for 500, 1000, 1500, and 2000 episodes). Labelling these advisors, we will denote Advisor~1 as the advisor pre-trained for 500 episodes, Advisor~2 as the advisor pre-trained for 1000 episodes, Advisor~3 as the advisor pre-trained for 1500 episodes, and Advisor~4 as the advisor pre-trained for 2000 episodes. In this experiment, we initially train MA-TLQL with no advisor (denoted as MA-QL), and subsequently train MA-TLQL with the addition of one advisor from the set of advisors. MA-QL always chooses actions from the low-$Q$ (since there are no advisors, the high-$Q$ is not maintained). The objective is to study the performance of MA-TLQL under the presence of different numbers of advisors. For statistical significance we use the unpaired 2-sided t-test and report $p$-values.

The results are given in Figure~\ref{fig:matlqldifferentnumbers}. We see that MA-QL trained without any advisor gives the least performance. MA-TLQL trained using one advisor (Advisor~1) gives a better performance than MA-QL ($p < 0.06$). Next, we train MA-TLQL with two advisors (Advisor~1 and Advisor~2), which gives a better performance than MA-QL and MA-TLQL trained with one advisor ($p < 0.1$). Similarly, MA-TLQL with three advisors (Advisor~1, Advisor~2, and Advisor~3) gives a better performance than the case with two advisors ($p < 0.09$) and the best performance is given by MA-TLQL with all four advisors ($p < 0.04$). This result shows that MA-TLQL performance can keep improving with the addition of better advisors (than those available in the current set), which shows that MA-TLQL is capable of identifying the right advisor from the available set and exploiting the expertise of different available advisors in a multi-agent environment. From the $p$-values, we note that the observation of the best performance of MA-TLQL with all four advisors is statistically significant. While we observe constantly improving performances with each advisor, some of these comparisons are not statistically significant as provided by the $p$-values.

\section{Illustrative Example}\label{sec:illustrativeexample}

In this section we would like to show a toy example where the TLQL updates as provided by Li et al.~\cite{li2019two} takes a longer time to figure out the best advisor from a set of advisors, as compared to the MA-TLQL updates, we introduced in this paper. We will just use a single agent based grid-world environment as given in Figure~\ref{fig:counterexampleimages}, instead of multi-agent environments. The TLQL updates will use the Bellman update for the low-$Q$ updates and a subsequent synchronization step to update its high-$Q$ values as proposed in Li et al.~\cite{li2019two}. In MA-TLQL, the low-$Q$ values will use the control update as given in Eq.~\ref{eq:control} albeit in a single-agent fashion (joint actions need not be considered), since this environment only contains one agent. The high-$Q$ will use the evaluation update as given in Eq.~\ref{eq:evaluation} in a single-agent fashion. Also, for simplicity, we do not use the ensemble selection strategy in Eq.~\ref{eq:valueofvote} for the MA-TLQL action selection in this example. Rather we only use the high-$Q$ and low-$Q$ updates for advisor and action evaluation at the given state. Here we have a set of 6 states $\{S1,S2, S3, S4, S5, G\}$ with the agent starting at state $S1$ and trying to reach the goal state $G$. The states $S3, S4$ and $G$ are the terminal states where the agent receives a reward of -1 in states $S3$ and $S4$, and a reward of +1 in state $G$. The state $S5$ only exists for symmetry (cannot be reached in practice). The agent can take one of the two actions $\{R, D\}$ (to denote right and down respectively) at each state. In this environment, it can be seen that the agent needs to take action $R$ in states $S1$ and $S2$ to obtain the maximum rewards. The agent has access to two advisors $A1$ and $A2$, where $A1$ is an optimal advisor providing the correct action (right) at every state and $A2$ is a sub-optimal advisor which provides action $R$ with probability 0.5 and the action $D$ with probability 0.5. All transitions in this domain are deterministic. Also, we specify that the learning rate ($\alpha$) is 0.1 and the discount factor ($\gamma$) is 0.9.

\begin{figure}
    \centering
    \includegraphics[width=0.45 \textwidth]{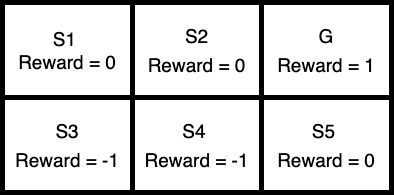}
    \caption{Toy environment to compare updates of TLQL and MA-TLQL}
    \label{fig:counterexampleimages}
\end{figure}

Now we consider the $Q$-updates (high-level and low-level) pertaining to both MA-TLQL and TLQL. At the beginning, we initialize all the $Q$-values to 0 arbitrarily.

At the initial time step ($t = 0$) let us assume that the agent starts at the initial state $S1$. Let both the advisors suggest action $R$. Then the agent takes this action and first updates its low-$Q$ using the equation, 

\begin{equation}
    \begin{array}{l}
lowQ_1(S1, R) 
\\ \quad =  lowQ_0(S1, R)   +  \alpha \Big(r + \gamma \max_a lowQ_0(S2, a) - lowQ_0(S1, R) \Big)
\\ \\ 
lowQ_1(S1, R) = 0   +  0.1 \Big(0 + 0.9 \times 0 - 0 \Big)
\\ \\ 
lowQ_1(S1, R) = 0.
    \end{array}
\end{equation}

\begin{table}
\subfloat[Low-$Q$ values at time $t=1$]{
\begin{tabular}{||c | c | c||} 
 \hline
 (ST./AC.) & Right & Down \\ [0.5ex] 
 \hline\hline
 S1 & 0 & 0 \\ 
 \hline
 S2 & 0 & 0 \\ [1ex] 
 \hline
\end{tabular}
} 
\subfloat[TLQL high-$Q$ at time $t = 1$]{
\begin{tabular}{||c | c | c||} 
 \hline
 (ST./AC.) & Adv. 1 & Adv. 2 \\ [0.5ex] 
 \hline\hline
  S1 & 0 & 0 \\ 
 \hline
 S2 & 0 & 0 \\ [1ex] 
 \hline
\end{tabular}
}\\
\subfloat[MA-TLQL high-$Q$ at time $t = 1$]{
\begin{tabular}{||c | c | c||} 
 \hline
 (ST./AC.) & Adv. 1 & Adv. 2 \\ [0.5ex] 
 \hline\hline
 S1 & 0 & 0 \\ 
 \hline
 S2 & 0 & 0 \\ [1ex] 
 \hline
\end{tabular}
}
\caption{TLQL and MA-TLQL updates at time $t = 1$. The columns refer to the actions and the rows refer to the states.}
\label{tab:qatt0}
\end{table}

Now, TLQL will set the value of the high-$Q$ of both advisors to be 0 (synchronization). The MA-TLQL updates will also yield a value of 0 for both the advisors. In Table~\ref{tab:qatt0} we tabulate the $Q$-values for the non-terminal states ($S1$ and $S2$). 

At the next time step $t=2$, the agent is at state $S2$. Again let both advisors suggest the right action. Now the low-$Q$ is updated as, 

\begin{equation}
    \begin{array}{l}
lowQ_2(S2, R) \\ \\ =  lowQ_1(S2, R)   +  \alpha \Big(r + \gamma \max_a lowQ_1(G, a) - lowQ_1(S2, R) \Big)
\\ \\ 
lowQ_2(S2, R) = 0   +  0.1 \Big(1 + 0.9 \times 0 - 0 \Big)
\\ \\ 
lowQ_2(S2, R) = 0.1
    \end{array}
\end{equation}

We specify that when the next state is terminal, the temporal difference (T.D.) target is the reward itself. Both the algorithms, TLQL and MA-TLQL, will have the same high-$Q$ values in this case as well. The $Q$-values for time $t=2$ are tabulated in Table~\ref{tab:qatt1}.

\begin{table}
\subfloat[Low-$Q$ values at time $t=2$]{
\begin{tabular}{||c | c | c||} 
 \hline
 (ST./AC.) & Right & Down \\ [0.5ex] 
 \hline\hline
  S1 & 0 & 0 \\ 
 \hline
 S2 & 0.1 & 0 \\ [1ex] 
 \hline
\end{tabular}
}
\subfloat[TLQL high-$Q$ at time $t = 2$]{
\begin{tabular}{||c | c | c||} 
 \hline
 (ST./AC.) & Adv. 1 & Adv. 2 \\ [0.5ex] 
 \hline\hline
 S1 & 0 & 0 \\ 
 \hline
 S2 & 0.1 & 0.1 \\ [1ex] 
 \hline
\end{tabular}
}\\
\subfloat[MA-TLQL high-$Q$ at time $t = 2$]{
\begin{tabular}{||c | c | c||} 
 \hline
 (ST./AC.) & Adv. 1 & Adv. 2 \\ [0.5ex] 
 \hline\hline
  S1 & 0 & 0 \\ 
 \hline
 S2 & 0.1 & 0.1 \\ [1ex] 
 \hline
\end{tabular}
}
\caption{TLQL and MA-TLQL updates at time $t = 2$}
\label{tab:qatt1}
\end{table}

Now, we move to time $t=3$. Since the goal has been reached at the previous time step, the agent resets back to the initial state $S1$. Now, let us assume that both advisors specify the right action again at state $S1$. The low-$Q$ values are updated as,

\begin{equation}
    \begin{array}{l}
lowQ_3(S1, R) \\ \\  =  lowQ_2(S1, R)   +  \alpha \Big(r + \gamma \max_a lowQ_2(S2, a) - lowQ_2(S2, R) \Big)
\\ \\ 
lowQ_3(S1, R) = 0   +  0.1 \Big(0 + 0.9 \times 0.1 - 0 \Big)
\\ \\ 
lowQ_3(S1, R) = 0.009
    \end{array}
\end{equation}

Again, high-$Q$ values for MA-TLQL and TLQL will be the same as the low-$Q$ values and are tabulated in Table~\ref{tab:qatt2}.

\begin{table}
\subfloat[Low-$Q$ values at time $t=3$]{
\begin{tabular}{||c | c | c||} 
 \hline
 (ST./AC.) & Right & Down \\ [0.5ex] 
 \hline\hline
  S1 & 0.009 & 0 \\ 
 \hline
 S2 & 0.1 & 0 \\ [1ex] 
 \hline
\end{tabular}
}
\subfloat[TLQL high-$Q$ at time $t = 3$]{
\begin{tabular}{||c | c | c||} 
 \hline
 (ST./AC.) & Adv. 1 & Adv. 2 \\ [0.5ex] 
 \hline\hline
  S1 & 0.009 & 0.009 \\ 
 \hline
 S2 & 0.1 & 0.1 \\ [1ex] 
 \hline
\end{tabular}
} \\
\subfloat[MA-TLQL high-$Q$ at time $t = 3$]{
\begin{tabular}{||c | c | c||} 
 \hline
 (ST./AC.) & Adv. 1 & Adv. 2 \\ [0.5ex] 
 \hline\hline
S1 & 0.009 & 0.009 \\ 
 \hline
 S2 & 0.1 & 0.1 \\ [1ex] 
 \hline
\end{tabular}
}
\caption{TLQL and MA-TLQL updates at time $t = 3$}
\label{tab:qatt2}
\end{table}

Next, the agent moves to state $S2$. We are at time $t=4$. Let the advisor $A2$ specify action $D$ at this state (Advisor $A1$ always specifies $R$). Also let us assume that the agent chooses to listen to $A2$ at this state ($Q$-values of both advisors are the same, so the agent is indifferent between the two advisors) and hence it performs action $D$ (down) from $A2$. Now the Q-value for the low-$Q$ can be updated as,

\begin{equation}
    \begin{array}{l}
lowQ_4(S2, D) =  lowQ_4(S2, D)   \\ \quad \quad \quad \quad \quad \quad \quad  +  \alpha \Big(r + \gamma \max_a lowQ_4(S4, a) - lowQ_4(S2, D) \Big)
\\ \\ 
lowQ_4(S2, D) = -0.1
    \end{array}
\end{equation}

The high-$Q$ values for the MA-TLQL update is given by, 

\begin{equation}
    \begin{array}{l}
highQ_4(S2, A2) =  highQ_3(S2, A2)  \\ \quad \quad \quad \quad \quad +  \alpha \Big(r + \gamma highQ_3(S4, A2) - highQ_3(S2, A2) \Big)
\\ \\ 
highQ_4(S2, A2) = 0.1 + 0.1 (-1  - 0.1) 
\\ \\ 
highQ_4(S2, A2) = -0.01
    \end{array}
\end{equation}

All $Q$-values for time $t=4$ are tabulated in Table~\ref{tab:qatt3}.

\begin{table}
\subfloat[Low-$Q$ values at time $t=4$]{
\begin{tabular}{||c | c | c||} 
 \hline
 (ST./AC.) & Right & Down \\ [0.5ex] 
 \hline\hline
  S1 & 0.009 & 0 \\ 
 \hline
 S2 & 0.1 & -0.1 \\ [1ex] 
 \hline
\end{tabular}
}
\subfloat[TLQL high-$Q$ at time $t = 4$]{
\begin{tabular}{||c | c | c||} 
 \hline
 (ST./AC.) & Adv. 1 & Adv. 2 \\ [0.5ex] 
 \hline\hline
  S1 & 0.009 & 0.009\\ 
 \hline
 S2 & 0.1 & -0.1 \\ [1ex] 
 \hline
\end{tabular}
}\\
\subfloat[MA-TLQL high-$Q$ at time $t = 4$]{
\begin{tabular}{||c | c | c||} 
 \hline
 (ST./AC.) & Adv. 1 & Adv. 2 \\ [0.5ex] 
 \hline\hline
S1 & 0.009 & 0.009 \\ 
 \hline
 S2 & 0.1 & -0.01 \\ [1ex] 
 \hline
\end{tabular}
}
\caption{TLQL and MA-TLQL updates at time $t = 4$}
\label{tab:qatt3}
\end{table}

Since the state $S4$ was a terminal state, the agent is back to state $S1$. We are at time $t= 5$. At this state, let us assume that both the advisors specify action $R$. Now the agent chooses to perform this action and updates its low-$Q$ using, 

\begin{equation}
    \begin{array}{l}
lowQ_5(S1, R) =  lowQ_4(S1, R)   \\ \quad \quad \quad \quad +  \alpha \Big(r + \gamma \max_a lowQ_4(S2, a) - lowQ_4(S2, R) \Big)
\\ \\ 
lowQ_5(S1, R) = 0.009   +  0.1 \Big(0 + 0.9 \times 0.1 - 0.009 \Big)
\\ \\ 
lowQ_5(S1, R) = 0.0171.
    \end{array}
\end{equation}

Since both the advisors specified action $R$, both advisors are assigned the same $Q$-values in the high-$Q$ in the TLQL update. Now, the high-$Q$ estimate of advisor $A1$ will be the same as the low-$Q$ value in the MA-TLQL update as well. However, the high-$Q$ estimate of the advisor $A2$ of the MA-TLQL update will be as follows,

\begin{equation}
    \begin{array}{l}
highQ_5(S1, A2) =  highQ_4(S1, A2)  \\ \quad \quad \quad \quad \quad \quad +  \alpha \Big(r + \gamma highQ_4(S2, A2) - highQ_4(S1, A2) \Big)
\\ \\ 
highQ_5(S1, A2) = 0.009   +  0.1 \Big(0 + 0.9 \times -0.01 - 0.009 \Big)
\\ \\ 
highQ_5(S1, A2) = 0.009 - 0.0018 = 0.0072
    \end{array}
\end{equation}

\begin{table}
\subfloat[Low-$Q$ values at time $t=5$]{
\begin{tabular}{||c | c | c||} 
 \hline
 (ST./AC.) & Right & Down \\ [0.5ex] 
 \hline\hline
  S1 & 0.0171 & 0 \\ 
 \hline
 S2 & 0.1 & -0.1 \\ [1ex] 
 \hline
\end{tabular}
}
\subfloat[TLQL high-$Q$ at time $t = 5$\label{tab:TLQLupdatelast}]{
\begin{tabular}{||c | c | c||} 
 \hline
 (ST./AC.) & Adv. 1 & Adv. 2 \\ [0.5ex] 
 \hline\hline
  S1 & 0.0171 & 0.0171\\ 
 \hline
 S2 & 0.1 & -0.1 \\ [1ex] 
 \hline
\end{tabular}
}\\
\subfloat[MA-TLQL high-$Q$ at time $t = 5$\label{tab:MATLQLupdatelast}]{
\begin{tabular}{||c | c | c||} 
 \hline
 (ST./AC.) & Adv. 1 & Adv. 2 \\ [0.5ex] 
 \hline\hline
S1 & 0.0171 & 0.0072 \\ 
 \hline
 S2 & 0.1 & -0.01 \\ [1ex] 
 \hline
\end{tabular}
}
\caption{TLQL and MA-TLQL updates at time $t = 5$}
\label{tab:qatt4}
\end{table}

At this stage (time $t=5$) all the $Q$-values are tabulated in Table~\ref{tab:qatt4}. Comparing the high-$Q$ estimate of TLQL and MA-TLQL, we see that in the TLQL updates (Table~\ref{tab:TLQLupdatelast}), the agent is indifferent between following advisor $A1$ or advisor $A2$ in state $S1$ (same $Q$-values), while it would decide to choose advisor $A1$ at the state $S2$. Even after 5 update steps TLQL has not been able to determine the right advisor (as advisor $A1$ is better than $A2$) for both states. In contrast, the MA-TLQL updates found in Table~\ref{tab:MATLQLupdatelast} clearly show a higher $Q$-value for the advisor $A1$ than the advisor $A2$ for both the states. This example presents a situation where the MA-TLQL distinguishes between a good and a bad advisor faster than the vanilla TLQL update as introduced by Li et al.~\cite{li2019two}.

\section{Experimental Details}\label{sec:experimentaldetails}

This section provides the complete details of all of our experimental domains, including details about the reward function and the advisors used. For the Pommerman and Pursuit environments, we assume that all the actions of other agents are either directly observable (fully observable), shared amongst agents, or provided by the game engine to perform centralized updates. For the MPE environment, the actions of other agents are observable only during training and not during execution (CTDE). 

Table~\ref{table:description} contains a summary of all of our experimental settings along with the associated configuration of advisors.

\begin{table}[ht]
\centering
\renewcommand{\arraystretch}{2}
\begin{tabular}{||p{0.05 \linewidth}  |p{0.15 \linewidth} | p {0.17\linewidth}  | p {0.25\linewidth}  | p {0.09
\linewidth} ||} 
 \hline\hline
 Exp. & Domain & Type & Advisors & \# of training agents \\ [0.5ex] 
 \hline\hline
 1  & Two-agent Pommerman & Competitive & 4 sufficient advisors with different quality & 2 \\ 
 \hline
 2  & Two-agent Pommerman & Competitive & 4 sufficient advisors with similar quality & 2 \\ 
 \hline
  3 & Two-agent Pommerman & Competitive & 4 insufficient advisors with different quality & 2 \\ 
   \hline
  4 & Two-agent Pommerman & Competitive & 4 insufficient advisors with similar quality & 2 \\ 
 \hline
  5 & Four-agent Pommerman & Mixed & 4 sufficient advisors with different quality & 4 \\ 
 \hline
6  & Pursuit SISL & Cooperative & 4 insufficient advisors with different quality & 8\\ [1ex] 
 \hline
 7  & Predator-Prey MPE & Mixed (CTDE) & 4 insufficient advisors with different quality & 8\\ [1ex] 
 \hline
 \hline
\end{tabular}
\caption{Description of experimental settings}
\label{table:description}
\end{table}

\subsection{Pommerman} 

In Pommerman, the complete set of skills needed to be learned in order to win games include 1) escaping from the enemy, 2) obtaining power ups (bombs/life), 3) killing the enemy, 4) blasting walls to open routes, and 5) coordinating with a teammate (in the team version) \citep{resnick2018pommerman}. 

In our experiments, we consider two Pommerman domains. The first four experiments use the two-agent version of Pommerman and the fifth experiment uses the four-agent team version of Pommerman. Each episode in our training and execution experiments corresponds to a full Pommerman game with a randomized board and a maximum of 800 steps before completion. The game ends either when all the steps are completed or when one of the two Pommerman agents dies (two-agent version). In the team version, the game ends either when all the steps are complete (800 steps maximum) or when one of the two teams completely dies.

The first experiment (Experiment~1) uses four sufficient advisors of varying quality. The first advisor (Advisor~1) can teach all the strategies (i.e., various Pommerman skills as mentioned in Section~\ref{sec:experiments}) needed to win the Pommerman game. Advisor~2 can teach moves associated with killing the opponent if the opponent is very close to the agent and defensive strategies that help avoid the enemy. The third advisor (Advisor~3) can only teach defensive strategies that help in escaping the enemy, and cannot teach aggressive strategies needed to kill the enemy. Finally, the last advisor (Advisor~4) only provides random actions. Hence, all the agents should follow the first advisor as far as possible, and the fourth advisor must be avoided entirely.

In the second experiment (Experiment~2), we use a new set of four advisors. Here, the first advisor teaches only defensive skills (escaping the enemy), the second advisor can only teach aggressive skills (killing the enemy), the third advisor can only teach strategies that enable obtaining the power ups, and the fourth advisor teaches ways to seek and blast open wooden walls which opens up various paths in the game. It can be seen here that no one advisor is can teach all the strategies needed to win in Pommerman. However, together, all four advisors can teach the requisite strategies, and they need to be leveraged appropriately.

In Experiment~3, we use a set of four advisors of decreasing quality as in the first experiment; however, none of the advisors are can teach strategies that seek enemies and kill them (insufficient set). Also, none of the advisors are capable of teaching the skills needed to seek wooden walls to blast open. Hence, this set of advisors is insufficient for winning the Pommerman game. The first advisor (Advisor~1) can teach strategies to escape from an enemy, kill the enemy if the enemy is right next to the agent, and obtaining the power-ups. The second advisor (Advisor~2) can only teach strategies that help in escaping the enemy or killing an enemy close to the agent. Advisor~3 can only teach strategies that pertain to escaping the enemy, and Advisor~4 only provides random suggestions. 

In Experiment~4, we have a set of advisors similar to the second experiment; however, the advisors are incapable of teaching sufficient skills needed to win in Pommerman. The first advisor (Advisor~1) teaches only defensive skills to escape an enemy. The second advisor (Advisor~2) helps in learning a strategy that can kill an enemy right next to the agent. The third advisor (Advisor~3) can teach strategies that lead to obtaining the power-ups, and the fourth advisor (Advisor~4) can teach skills needed to blast open wooden walls, if the agent is very close to the wall. 

The fifth experiment (Experiment~5) with the team domain uses the same set of advisors as Experiment~1. 

The Pommerman environment was released by Resnick et al.~\cite{resnick2018pommerman} under the Apache2 license. 

\subsection{Pursuit domain}

The pursuit domain was first introduced by Gupta et al.~\cite{gupta2017cooperative}, and we use the implementation provided by the Petting Zoo environment \citep{terry2020pettingzoo} (released under MIT license). The game has a set of 8 pursuer agents cooperating with each other to capture a set of 30 evaders in the environment. We use the same reward function and environmental parameters as in Terry et al.~\citep{terry2020pettingzoo} with some minor modifications. In our setting, the agents get a reward of +1 for hitting (tagging) the evaders and a reward of +30 for catching an evader. An evader is decided to be caught if it is surrounded by a group of at least two pursuer agents. The captured evaders are removed from the environment. All agents get an urgency penalty of -0.1 at each time step of the game. Each episode has a maximum of 500 steps. The episode terminates either when all the evaders are captured or when all the steps are completed. All the pursuers receive the same reward at all time steps (global reward structure), where the rewards are distributed amongst all agents. Each pursuer observes a $7 \times 7$ grid around itself (the entire grid is $16 \times 16$), which means that the pursuer can get full information about other pursuers and evaders within this observable grid and no information outside this grid.

\subsection{Multi Particle Environment (Predator-Prey)}

The Multi Particle Environment (MPE) was originally released by Lowe et al.~\cite{lowe2017multi} as a set of testbeds for the purpose of testing algorithms that pertain to cooperative and mixed cooperative-competitive settings with characteristics of communication and obstacle interactions. From this suite of testbeds we use the Simple Tag environment that pertains to a Predator-Prey setting where a set of three predators try to capture a single prey. We use the environment defined as a part of the Petting Zoo library \cite{terry2020pettingzoo} (released under MIT license).  Here all agents have a discrete action space with a set of 5 actions (four cardinal directions and one action that signifies no movement). Both the predators and prey have a continuous observation space that corresponds to the velocity and position of all agents including the agent itself. The predators get a reward of +10 for hitting (colliding/tagging) prey and the prey get a punishment of -10 for being hit by any predator (the prey is not removed from the environment). The prey also receives a small additional penalty for exiting the field of play (see Terry et al.~\cite{terry2020pettingzoo} for more details). All the predators receive the same reward at all time steps (global reward structure), where the rewards are distributed amongst all predators. Our environment contains eight predators and eight prey, in addition to five obstacles that block the path of the predators and prey. Each game contains 500 steps of training or execution. We model this domain as a CTDE setting, where the actions taken, and the rewards obtained by all agents are available to each agent during training, but not available during execution.

\section{Hyperparameters And Implementation Details}\label{sec:hyperparameters}

All the hyperparameters in our implementation of baselines are either the same or closely match the values recommended by the respective papers that introduced these algorithms. Our algorithms also use similar hyperparameters as the baseline algorithms, with a few exceptions (for performance and computational efficiency reasons). 

The DQN \citep{mnih2015human}, CHAT \citep{wang2017improving}, TLQL \citep{li2019two}, ADMIRAL-DM \citep{Subramanian2022multiagent}, and MA-TLQL implementations use almost the same hyperparameters. These algorithms use a learning rate of 0.01, a discount factor of 0.9, a replay memory size of $2 \times 10^6$, and a fixed exploration rate of 0.9. The target network is replaced every 10 learning iterations using the hard replacement strategy. The evaluation and target networks use 3 fully connected layers (2 ReLU layers of 50 neurons and an output layer). We use a batch size of 32.

For the CHAT implementation, we use the neural network based confidence variant (NNHAT) from Wang and Taylor~\cite{wang2017improving}. We set a confidence threshold of 0.6 and use 3 fully connected layers (2 layers using ReLU as the activation function with 50 neurons and an additional output layer). The advisors are directly used in CHAT instead of preparing decision based rules from classifier models as done in \cite{wang2017improving} due to performance reasons and also because the advisors used in our experiments are either rule-based agents or pretrained networks and not human advisors as designed in CHAT. Since these advisors are considered to be extracted rules in CHAT (not actual demonstrators/advisors), we allow CHAT dependence on advisors in the execution phase as well. 

We use the PPR technique in TLQL for our experiments, though this is not used in Li et al.~\cite{li2019two}. This is done due to two reasons. First, unlike in single-agent settings, independent $Q$-learning methods do not have a policy improvement guarantee in multi-agent settings (as discussed in Section~\ref{sec:extending}) \cite{tan1993multi}, hence, the vanilla TLQL is not guaranteed to stop depending on advisors as motivated by Li et al.~\cite{li2019two}. Second, without PPR, the experimental TLQL training performance is very good (since it has unlimited dependence on advisors), while the execution performances are very poor (since advisors are not available during execution). The best execution performances for TLQL is obtained while using PPR. 

Regarding the implementation of the PPR technique, the TLQL, ADMIRAL-DM, MA-TLAC, and MA-TLQL implementations start with a value of $\epsilon' = 1$, which is linearly decayed to 0 at the end of training. There is no influence of advisors during execution, and hence, $\epsilon' = 0$ during execution.

To stay consistent with the description in Li et al.~\cite{li2019two}, the TLQL implementation uses a total of 3 networks (evaluation and target networks for low-$Q$ in addition to high-$Q$). The high-$Q$ and low-$Q$ networks use fully connected layers with the same architecture as described for the DQN. The MA-TLQL implementation uses four networks (two evaluation and target networks for the low-$Q$ and high-$Q$ respectively), also having the same configuration as described for the DQN. Further, the MA-TLQL uses a second replay buffer for the high-$Q$ in addition to the replay buffer of the low-$Q$. Both buffers use the same memory size of $2 \times 10^6$.

Regarding DQfD \citep{hester2018deep}, we set $1 \times 10^6$ as the demo buffer size and perform 50,000 mini-batch updates for pretraining. The replay buffer size is twice the size of the demo buffer. The N-step return weight is 1.0, the supervised loss weight is 1.0 and the L2 regularization weight is $10^{-5}$. The epsilon greedy exploration is 0.9. The discount factor is 0.99 and the learning rate is 0.002. The network architecture uses 3 fully connected layers (2 ReLU layers of 24 neurons and an output layer). The pretraining for DQfD comes from a data buffer related to a series of games where the advisors compete against each other.

The MA-TLAC uses two actor networks and two critic networks. The network architecture is the same as described for MA-TLQL. The actor networks use a learning rate of $10^{-6}$, and the critic networks use a learning rate of $10^{-3}$. 

For all training experiments, we use a set of 30 random seeds (1 -- 30). We use a new set of 30 random seeds (31 -- 60) for the execution experiments.

\section{Wall Clock Times}\label{sec:wallclocktimes}

All the training for the experiments were conducted on a virtual machine having 2 Nvidia A100 GPUs with a GPU memory of 40 GB. The CPUs use the AMD EPYC processors with a memory of 125 GB. The Pommerman experiments took an average of 12 hours wall clock time to complete, and the Pursuit experiments took an average of 15 hours wall clock time to complete.

\end{document}